\definecolor{myblue}{rgb}{0.204,0.204,0.612}
\definecolor{mygray}{rgb}{0.5,0.5,0.5}
\definecolor{mywhite}{rgb}{0.0,0.0,0.0}
\definecolor{mypurple}{rgb}{0.408,0.0,0.408}
\newtheorem{theorem}{Theorem}[section]
\newtheorem{lemma}[theorem]{Lemma}
\newtheorem{corollary}[theorem]{Corollary}
\newtheorem{definition}[theorem]{Definition}
\newtheorem{example}[theorem]{Example}
\newtheorem{itheorem}[theorem]{Informal Theorem}
\newcommand{\col}{j}
\newcommand{\bit}{\beta}
\renewcommand{\S}{\mathcal{S}}
\newcommand{\B}{\mathcal{B}}
\newcommand{\A}{\mathcal{A}}
\newcommand{\V}{\mathcal{V}}
\renewcommand{\P}{\mathcal{P}}
\newcommand{\X}{\mathcal{X}}
\newcommand{\Y}{\mathcal{Y}}
\newcommand{\I}{\mathcal{I}}
\newcommand{\G}{\mathcal{G}}
\newcommand{\E}{\mathbb{E}}
\newcommand{\R}{\mathbb{R}}
\newcommand{\poly}{\mathrm{poly}}
\newcommand{\rev}{\mathrm{Rev}}
\newcommand{\opt}{\textsc{\upshape{Opt}}\xspace}
\newcommand{\weakopt}{\textsc{\upshape{IntOpt}}\xspace}
\newcommand{\barCalR}{{\,\overline{\!\mathcal{R}\!}\,}}
\newcommand{\barRL}{{\overline{\textup{\textsc{RL}}\!}\,}}
\newcommand{\mGamma}{\mathbf{\Gamma}}
\newcommand{\valpha}{{\vec\alpha}}
\newcommand{\vq}{{\vec q}}
\newcommand{\va}{{\vec a}}
\newcommand{\inprod}{\mathbin{\cdot}}
\newcommand{\order}{O}
\newcommand{\sep}{Q}
\newcommand{\tsep}{P}
\newcommand{\states}{Q}
\newcommand{\pseudo}[1]{\mathrm{pseudo}(#1)}
\renewcommand{\vec}[1]{{\boldsymbol{\mathbf #1}}}
\newcommand{\regret}{{\textsc{\upshape{Regret}}}}
\DeclareMathOperator*{\argmax}{\arg\max}
\newcommand{\card}[1]{\lvert#1\rvert}
\newcommand{\set}[1]{\{#1\}}
\newcommand{\Set}[1]{\left\{#1\right\}}
\newcommand{\bigSet}[1]{\bigl\{#1\bigr\}}
\newcommand{\braces}[1]{\{#1\}}
\newcommand{\bigBraces}[1]{\bigl\{#1\bigr\}}
\newcommand{\BigBraces}[1]{\Bigl\{#1\Bigr\}}
\newcommand{\bigBracks}[1]{\bigl[#1\bigr]}
\newcommand{\BigBracks}[1]{\Bigl[#1\Bigr]}
\newcommand{\biggBracks}[1]{\biggl[#1\biggr]}
\newcommand{\Bracks}[1]{\left[#1\right]}
\newcommand{\parens}[1]{(#1)}
\newcommand{\Parens}[1]{\left(#1\right)}
\newcommand{\bigParens}[1]{\bigl(#1\bigr)}
\newcommand{\BigParens}[1]{\Bigl(#1\Bigr)}
\newcommand{\bigGiven}{\mathbin{\bigm\vert}}
\newcommand{\BigGiven}{\mathbin{\Bigm\vert}}
\newcommand{\biggGiven}{\mathbin{\biggm\vert}}
\newcommand{\norm}[1]{\lVert#1\rVert}
\newcommand{\floors}[1]{\lfloor#1\rfloor}
\newcommand{\Floors}[1]{\left\lfloor#1\right\rfloor}
\newcommand{\ceils}[1]{\lceil#1\rceil}
\mathchardef\dash="2D
\newcommand{\Eq}[1]{Equation~\eqref{eq:#1}}
\newcommand{\Sec}[1]{Section~\ref{sec:#1}}
\newcommand{\Thm}[1]{Theorem~\ref{thm:#1}}
\newcommand{\Cor}[1]{Corollary~\ref{cor:#1}}
\newcommand{\ellOther}{h}
\newcommand\numberthis{\addtocounter{equation}{1}\tag{\theequation}}
\title{Oracle-Efficient Online Learning and Auction Design}
\author{Miroslav Dud\'ik\thanks{Microsoft Research, {\tt mdudik@microsoft.com}}
\and
Nika Haghtalab\thanks{Computer Science Department, Cornell University, {\tt nika@cs.cornell.edu}}
\and
Haipeng Luo\thanks{University of Southern California, {\tt haipengl@usc.edu}}  \and
Robert E. Schapire\thanks{Microsoft Research, {\tt schapire@microsoft.com}}
\and
Vasilis Syrgkanis\thanks{Microsoft Research, {\tt vasy@microsoft.com}}
\and
Jennifer Wortman Vaughan\thanks{Microsoft Research, {\tt jenn@microsoft.com}}}
\date{}
\begin{document}

\maketitle

\begin{abstract}
We consider the design of computationally efficient online learning algorithms in an adversarial setting in which the learner has access to an offline optimization oracle. We present an algorithm called Generalized Follow-the-Perturbed-Leader and provide conditions under which it is oracle-efficient while achieving vanishing regret. Our results make significant progress on an open problem raised by \citet{HK16Talk}, who showed that oracle-efficient algorithms do not exist in general~\cite{HK16}  and asked whether one can identify properties under which oracle-efficient online learning may be possible.
 
Our auction-design framework considers an auctioneer learning an optimal auction for a sequence of adversarially selected valuations with the goal of achieving revenue that is almost as good as the optimal auction in hindsight, among a class of auctions. We give oracle-efficient learning results for: (1) VCG auctions with bidder-specific reserves in single-parameter settings, (2) envy-free item pricing in multi-item auctions, and (3) s-level auctions of \citet{morgenstern2015pseudo} for single-item settings. The last result leads to an approximation of the overall optimal Myerson auction when bidders' valuations are drawn according to a fast-mixing Markov process, extending prior work that only gave such guarantees for the i.i.d. setting. 

Finally, we derive various extensions, including: (1) oracle-efficient algorithms for the contextual learning setting in which the learner has access to side information (such as bidder demographics), (2) learning with approximate oracles such as those based on Maximal-in-Range algorithms, and (3) no-regret bidding in simultaneous auctions, resolving an open problem of \citet{DS16}.

\end{abstract}


\setcounter{page}{0}
\thispagestyle{empty}
\newpage

\section{Introduction}
\label{sec:intro}

Online learning plays a major role in the adaptive optimization of
computer systems, from the design of online marketplaces
\citep{BH05,balcan2006approximation,CGM13,roughgarden2016minimizing}
to the optimization of routing schemes in communication networks
\citep{Awerbuch2008}. The environments in these applications are
constantly evolving,
requiring continued adaptation of these systems. Online learning
algorithms have been designed to robustly address this challenge, with
performance guarantees that hold even when the environment is
adversarial. However, the information-theoretically optimal learning algorithms that work with arbitrary payoff functions are computationally inefficient when the learner's action space is exponential in the natural problem representation~\cite{FS97}. For certain action spaces and environments, efficient online
learning algorithms can be designed by reducing the online learning
problem to an optimization problem
\citep{KV05,Kakade2005,Awerbuch2008,HK12}.
However, these approaches do not easily extend to the complex and
highly non-linear problems faced by real learning systems, such as the
learning systems used in online market design.
In this paper, we address the problem of efficient online learning
with an exponentially large action space under arbitrary learner
objectives.

This goal is not achievable without some assumptions on
the problem structure.
Since an online optimization problem is at least as hard as the corresponding offline optimization problem~\citep{Cesa-Bianchi2006,DS16}, a minimal assumption is the existence of an algorithm that returns a near-optimal solution to the offline problem. We assume that our learner has access to such an offline algorithm, which we call \emph{an offline optimization oracle}. This oracle, for any (weighted) history of choices by the environment, returns an action of the learner that (approximately) maximizes the learner's reward.
We seek to design \emph{oracle-efficient learners}, that is, learners that run in polynomial time, with each oracle call counting $O(1)$.

An oracle-efficient learning algorithm can be viewed as a reduction from the online to the
offline problem, providing conditions under which the online problem
is not only as hard, but also as easy as the offline problem, and thereby
offering computational equivalence between online and offline
optimization.
Apart from theoretical significance, reductions from online to
offline optimization are also practically important.
For example, if one has
already developed and implemented a Bayesian optimization procedure
which optimizes against a static stochastic environment, then our
algorithm offers a black-box transformation of that procedure into an
adaptive optimization algorithm with provable learning guarantees in
non-stationary, non-stochastic environments. Even if the existing
optimization system does not run in worst-case polynomial
time, but is rather a well-performing fast heuristic, a reduction
to offline optimization is capable of leveraging any expert domain knowledge
that went into designing the heuristic, as well as any further
improvements of the heuristic or even discovery of polynomial-time
solutions.

Recent work of \citet{HK16} shows that oracle-efficient learning in
adversarial environments is not achievable in general, while leaving
open the problem of identifying the properties under which
oracle-efficient online learning may be possible~\cite{HK16Talk}.
We introduce a generic algorithm called
\emph{Generalized Follow-the-Perturbed-Leader} (Generalized FTPL) and
derive sufficient conditions under which this algorithm yields
oracle-efficient online learning. Our results are enabled by providing
a new way of adding \emph{regularization} so as to \emph{stabilize}
optimization algorithms in general optimization settings. The latter
could be of independent interest beyond online learning.  Our approach
unifies and extends previous approaches to oracle-efficient learning,
including the Follow-the-Perturbed Leader (FTPL) approach introduced
by \citet{KV05} for linear objective functions, and its
generalizations to submodular objective functions~\cite{HK12},
adversarial contextual learning~\cite{Syrgkanis2016}, and learning in
simultaneous second-price auctions~\cite{DS16}. Furthermore, our
sufficient conditions are related to the notion of a
universal identification set of~\citet{goldman1993exact} and oracle-efficient online optimization techniques of~\citet{DS16}.

The second main contribution of our work is to introduce a new
framework for the problem of adaptive auction design for revenue
maximization and to demonstrate the power of Generalized FTPL through
several applications in this framework. Traditional auction theory
assumes that the valuations of the bidders are drawn from a population
distribution which is known, thereby leading to a Bayesian
optimization problem. The knowledge of the distribution by the seller
is a strong assumption. Recent work in algorithmic mechanism
design \cite{Cole2014,morgenstern2015pseudo,DHP16,Roughgarden2016}
relaxes this assumption by solely assuming access to a set of samples
from the distribution. {In this work, we drop any distributional
assumptions and introduce the adversarial learning framework
of \emph{online auction design}.}  On each round, a learner
adaptively designs an auction rule for the allocation of a set of
resources to a fresh set of bidders from a
population.\footnote{Equivalently, the set of bidders on each round
  can be the same as long as they are myopic and optimize
  their utility separately in each round.  Using our extension to
  contextual learning (Section~\ref{sec:contexts}), this approach can
  also be applied when the learner's choice of auction is allowed to
  depend on features of the arriving set of bidders, such as
  demographic information.}  The goal of the learner is to achieve
average revenue at least as large as the revenue of the best
auction from some target class.  Unlike the standard approach to
auction design, initiated by the seminal work of
\citet{Myerson1981}, our approach is devoid of any assumptions about a
prior distribution on the valuations of the bidders for the resources
at sale. Instead, similar to an agnostic approach in
learning theory, we incorporate prior knowledge in the form of a
target class of auction schemes that we want to compete with.
This is especially appropriate when the auctioneer is restricted to using a particular design of auctions with power to make only a few design choices, such as deciding the reserve prices in a second-price auction. A special case of our framework is considered in the recent work of
\citet{roughgarden2016minimizing}. They study online learning
of the class of single-item second-price auctions with
bidder-specific reserves, and give an algorithm with performance that
approaches a constant factor of the optimal revenue in hindsight. We
go well beyond this specific setting and show that our Generalized
FTPL can be used to optimize over several standard classes of auctions
including VCG auctions with bidder-specific reserves and the level
auctions of \citet{morgenstern2015pseudo}, achieving low additive
regret to the best auction in the class.

In the remainder of this section, we describe our main results in more detail and then discuss several extensions and applications of these results, including (1) learning with side information (i.e., contextual learning); (2) learning with constant-factor approximate oracles (e.g., using Maximal-in-Range algorithms~\cite{Nisan2007});
(3) regret bounds with respect to stronger benchmarks for the case in which the environment is not completely adversarial, but follows a fast-mixing Markov process.

Our work contributes to two major research agendas: the design of efficient and oracle-efficient online learning algorithms~\citep{AgarwalHsKaLaLiSc14, DHKKLRZ11, DS16,HK12, HK16,Kakade2005, KV05, RS16, SyrgkanisLuKrSc16}, and the design of auctions using machine learning tools~\citep{Cole2014, DHP16, morgenstern2015pseudo,kleinberg2003value,BH05,CGM13}.
We describe related work from both areas in more detail in Appendix~\ref{app:related}.

\subsection{Oracle-Efficient Learning with Generalized FTPL}
\label{sec:intro:ftpl}

We consider the following online learning problem. On each round
$t = 1, \ldots, T$, a learner chooses an action $x_t$ from a finite
set $\X$, and an adversary chooses an action $y_t$ from a set $\Y$. The learner then observes $y_t$ and
receives a payoff $f(x_t, y_t) \in [0,1]$, where the function $f$ is
fixed and known to the learner. The goal of the learner is to obtain
low expected regret with respect to the best action in hindsight,
i.e., to minimize
\[
\regret\coloneqq\E\left[ \max_{x \in \X} \sum_{t=1}^T f(x, y_t) -
  \sum_{t=1}^T f(x_t,
  y_t) \right]
,
\]
where the expectation is over the randomness of the learner.\footnote{To
simplify exposition, we assume that the adversary is oblivious, i.e.,
that the sequence $y_1, \ldots, y_T$ is chosen in advance. Our results generalize
to adaptive adversaries using standard
techniques~\citep{HP05,DS16}.}
We desire algorithms, called \emph{no-regret algorithms}, for which this regret is sublinear in the time horizon $T$.

Our algorithm takes its name from the seminal Follow-the-Perturbed-Leader (FTPL) algorithm of
 \citet{KV05}. FTPL achieves low regret, $O(\sqrt{T \log|\X|})$, by
independently perturbing the historical payoff of each of the learner's
actions and choosing on each round the action with the highest
perturbed payoff. However, this approach is inefficient when the
action space is exponential in the natural representation of the
learning problem, because it requires creating $|\X|$ independent random variables.\footnote{%
  If payoffs are linear in some low-dimensional representation of $\X$ then the number of variables needed is equal to this dimension. But for non-linear payoffs, $|\X|$ variables are required.}
Moreover, because of the form of the perturbation, the optimization of
the perturbed payoffs cannot be performed by the offline optimization
oracle for the same problem.
We overcome both of these challenges by, first, generalizing FTPL to work with perturbations that can be compactly represented and are thus not necessarily independent across different actions (\emph{sharing randomness}), and, second, by implementing such perturbations via synthetic histories of adversary actions and thus creating offline problems of the same form as the online problem (\emph{implementing  randomness}).

\paragraph{Sharing Randomness.}
Our Generalized
FTPL begins by drawing a random vector $\valpha\in\R^N$ of
dimension~$N$,
with components $\alpha_j$ drawn independently from a
dispersed distribution $D$. The payoff of each of the learner's actions
is perturbed by a linear combination of these independent
variables, as prescribed by a {\it perturbation translation matrix} $\mGamma$ of size $|\X|\times N$, with entries
in $[0,1]$.
Let $\mGamma_x$ denote the row of
$\mGamma$ corresponding to $x$. On each round $t$, the algorithm
outputs an action $x_t$ that (approximately) maximizes the perturbed
historical performance.  In other words, $x_t$ is chosen such that for all $x \in \X$,
\[
\sum_{\tau=1}^{t-1} f(x_t, y_\tau) + \valpha\inprod\mGamma_{x_t} \geq  \sum_{\tau=1}^{t-1} f(x, y_\tau) + \valpha\inprod\mGamma_{x} - \epsilon
\]
for some fixed optimization accuracy $\epsilon \geq 0$. This procedure is fully described in
Algorithm~\ref{alg:ftpl-approximate} of Section~\ref{sec:ftpl}.

We show that Generalized FTPL is no-regret
as long as $\epsilon$ is sufficiently small
and the translation matrix $\mGamma$
satisfies an \emph{admissibility} condition.
This condition requires the rows of $\mGamma$ to be (sufficiently) distinct
so that each
action's perturbation uses a different weighted combination of the $N$-dimensional noise.
To the best of our knowledge, the approach of using an arbitrary matrix to induce shared randomness among actions of the learner is novel.
The formal no-regret result is in Theorem~\ref{thm:FTPL-U}. The informal statement is the following:
\begin{itheorem}
\label{ithm:admiss}
A translation matrix is
$\delta$-admissible if any two rows of the matrix are
distinct and the minimum non-zero difference between any two values
within a column is at least $\delta$. Generalized FTPL with a
$\delta$-admissible matrix $\mGamma$ and an appropriate
distribution $D$ achieves regret
$O(N\sqrt{T}/\delta + \epsilon T)$.
\end{itheorem}

A technical challenge here is to show that the randomness induced by $\mGamma$ on the set of actions $\X$ \emph{stabilizes} the algorithm, i.e., the probability that $x_t\neq x_{t+1}$ is small.
We use the admissibility of $\mGamma$ to guide us through the analysis of stability.
In particular, we consider how each column of $\mGamma$ partitions
actions of $\X$ to a few subsets (at most $1+\delta^{-1}$) based on
their corresponding entries in that column.
Since the matrix rows are distinct, the algorithm is stable as a whole if, for
each column, the partition to which the algorithm's chosen
  action belongs to remains the same between consecutive rounds with probability close to $1$.
This allows us to decompose the stability analysis of the algorithm as a whole to the analysis of stability across partitions of each column.
At the column level, stability of the partition between two rounds follows by showing that a switch between partitions happens only if the perturbation $\alpha_j$ corresponding to that column falls into a small sub-interval of the support of the distribution $D$, from which it is sampled. The latter probability is small if $D$ is sufficiently dispersed. This final argument is similar in nature to the reason why perturbations lead to stability in the original FTPL algorithm of \citet{KV05}.

\paragraph{Implementing Randomness.}
To ensure oracle-efficient learning, we additionally need the property
that the induced action-level perturbations can be simulated by a
(short) synthetic history of adversary actions.  This allows us to avoid working with $\mGamma$ directly, or even explicitly writing it down.
This requirement is captured by our \emph{implementability} condition, which states that each column of the translation matrix corresponds to a scaled version of the expected reward of the learner on some distribution of adversary actions.
The formal statement is in Theorem~\ref{thm:adm:imp}. The informal statement is the following:

\begin{itheorem}
\label{ithm:impl}
A translation matrix is implementable if each column corresponds to a
scaled version of the expected reward of the learner against some
finitely supported
distribution of actions of the adversary. Generalized
FTPL with an implementable translation matrix can be implemented with
one oracle call per round and runs in time polynomial in $N$, $T$, and
the size of the
support of the distributions implementing the translation matrix.
Oracle calls count $O(1)$ in the running time.
\end{itheorem}

The use of synthetic histories in online optimization was first explored by Daskalakis and Syrgkanis~\cite{DS16}, who sample histories of length $\poly(|\Y|)$ from a fixed distribution.  Our implementability property uses matrix $\mGamma$ to obtain problem-specific distributions that stabilize online optimization with shorter histories.

For some learning problems, it is easier to first construct an
implementable translation matrix and argue about its
admissibility; for others, it is easier to construct an admissible
matrix and argue about its implementability. We pursue both strategies
in various applications,
demonstrating the versatility of our conditions.

Our theorems yield the following simple sufficient condition for oracle-efficient no-regret learning (see Theorems~\ref{thm:FTPL-U} and \ref{thm:adm:imp} for more general statements):
\begin{quote}
If there exist $N$ adversary actions such that any two actions of
the learner yield different rewards on at least one of these $N$
actions, then Generalized FTPL with an appropriate translation matrix has regret $O (N\sqrt{T}/\delta)$ and its oracle-based runtime is $\poly(N,T)$ where $\delta$ is the smallest difference between distinct rewards obtainable on any one of the $N$ adversary actions.
\end{quote}

The aforementioned results establish a reduction from online optimization to offline optimization. Recall that in the oracle-based runtime, each oracle call counts $O(1)$.
When the offline optimization problem can be solved in polynomial time, these results imply that the online optimization problem can also be solved in (fully) polynomial time. The formal statement is in Corollary~\ref{cor:oracle-ftpl}.

\subsection{Main Application: Online Auction Design}

In many applications of auction theory, including electronic marketplaces, a seller repeatedly sells an item or a set of items to a population of buyers, with a few arriving for each auction. In such cases, the seller can optimize his auction design in an online manner, using historical data consisting of observed bids. We consider a setting in which the seller would like to use this historical data to select an auction from a fixed target class.
{For example, a seller in a sponsored-search auction might be limited by practical constraints to consider only second-price auctions with bidder-specific reserves. The seller can optimize the revenue by using the historical data for each bidder to set these reserves. Similarly, a seller on eBay may be restricted to set a single reserve price for each item. Here, the seller can optimize the revenue by using historical data from auctions for similar goods to set the reserves for new items. In both cases, the goal is to leverage the historical data to pick an auction on each round in such a way that the seller's overall revenue compares favorably with the optimal auction from the target class.}

More formally, on round $t =1, \dotsc, T$, a tuple of $n$ bidders arrives with a vector of $n$ bids or, equivalently, a vector of valuations (since we only consider truthful auctions), denoted $\vec{v}_t\in \V^n$.
We allow these valuations to be arbitrary, e.g., chosen by an adversary.
Prior to observing the bids, the auctioneer commits to an auction $a_t$ from a class of truthful auctions $\A$.
The goal of the {auctioneer}
is to achieve a revenue that, in hindsight, is
very close to the revenue that would have been achieved by the best
fixed auction in class $\A$ if that auction were used on all rounds. In other words, the {auctioneer} aims to minimize the {expected regret}
\begin{equation*}
\E\left[ \max_{a \in \A} \sum_{t=1}^T \rev (a, \vec v_t)
  - \sum_{t=1}^T \rev (a_t, \vec v_t) \right],
\end{equation*}
where $\rev(a, \vec v)$ is the revenue of auction $a$ on bid profile $\vec v$ and the expectation is over the actions of the auctioneer.

This problem can easily be cast in our oracle-efficient online learning framework. The learner's action space is the set of target auctions $\A$, while the adversary's action space is the set of bid or valuation vectors $\V^n$. The offline oracle is a revenue maximization oracle which computes an (approximately) optimal auction within the class $\A$ for any given set of valuation vectors.
Using the Generalized FTPL with appropriate matrices $\mGamma$, we provide the first oracle-efficient no-regret algorithms for three commonly studied auction classes:
\begin{itemize}
\item[--] Vickrey-Clarkes-Groves (VCG) auctions with bidder-specific reserve
  prices in single-dimensional matroid settings, which are
  known to achieve half the revenue of
  the optimal auction in i.i.d.\ settings under some conditions \cite{hartline2009simple};
\item[--] envy-free
  item-pricing mechanisms in combinatorial markets with unlimited
  supply, often studied in the static Bayesian setting
  \citep{balcan2006approximation, guruswami2005profit};
 \item[--]single-item level auctions, introduced by
  \citet{morgenstern2015pseudo}, who show that these
 auctions approximate, to an arbitrary accuracy, the Myerson
    auction \citep{Myerson1981}, which is known to be optimal for the
    Bayesian independent-private-value setting.
\end{itemize}

The crux of our approach  is in designing admissible and implementable matrices. For the first two mentioned classes, VCG auctions with bidder-specific reserves and envy-free item-pricing auctions, we show how to implement an (obviously admissible) matrix $\mGamma$, where each row corresponds, respectively,
to the concatenated binary representation of bidder reserves or item prices.
We show that, surprisingly, any perturbation
that is a linear function of this binary representation can be simulated by a distribution of bidder valuations (see Figure \ref{fig:table} for an example).
For the third class, level auctions, our challenge is to show that a clearly implementable matrix $\mGamma$, with each column implemented by a single bid profile, is also admissible.

Table~\ref{tab:examples} summarizes regret bounds and runtimes of our oracle-efficient
algorithms, assuming oracle calls run in time
$O(1)$. Our algorithms perform a single oracle call per round, so $T$ oracle calls in total.
Our runtimes demonstrate an efficient reduction from online to offline learning.

While, in theory, the auction classes discussed in the table do not have worst-case polynomial-time algorithms for the offline problem, in practice, the offline problems are solved by a wide range of approaches, such as integer program solvers, which run fast on typical problem instances.
Our approach can turn existing offline routines into online algorithms
at almost no additional cost.
When one wishes to leverage provably polynomial-time algorithms, even if their outputs are sub-optimal, our framework still applies as long as the algorithms
return actions whose payoffs are within a constant factor of the best payoff.
We discuss this approach briefly in Section~\ref{sec:intro:ext} below and then demonstrate its applicability in Section~\ref{sec:additional}, where we derive a 
polynomial-time algorithm for online welfare maximization in multi-unit auctions.

\begin{table*}[t]
\centering
\renewcommand{\arraystretch}{1}%
\caption{\label{tab:examples} Regret bounds and oracle-based runtime for the auction classes considered in
  this work, for $n$ bidders and time horizon $T$. All our algorithms perform a single oracle call per round.}
\resizebox{\columnwidth}{!}{
\begin{tabular}{|c|c|c|c|}
\hline
Auction Class & Regret & Oracle-Based Runtime & Section \\
\hline
VCG with bidder-specific reserves, $s$-unit &
$O(n s\sqrt{T}\log T)$ & $O(T^2+nT^{3/2}\log T)$ &
\ref{sec:VCG} \\
\hline
\parbox{2.75in}{\centering envy free $k$-item pricing with infinite supply and unit-demand or single-minded bidders} &
$O\bigParens{n k \sqrt{ T}\log(kT)}$ & $O\bigParens{T^2 +k^2 T^{3/2}\log(kT)}$ &
\ref{sec:envy-free}\\
\hline
level auction${}^a$ with discretization level $m$ &
$O(n^3 m^4 \sqrt{ T})$ &  $O(T^2+n^3 m^3 T)$ &
\ref{sec:level} \\
\hline
\end{tabular}
}
\footnotesize
${}^a$In the extended abstract \cite{DHL+17} we presented the regret bound $O(n m^2 \sqrt{ T})$ and running time $O(T^2+n m T)$, corresponding to discretized level auctions with distinct thresholds (see \Thm{S_m,s}). Here, we present the result that allows repetitions of threshold values (see \Thm{R_m,s}).
\end{table*}

\subsection{Extensions and Additional Applications}
\label{sec:intro:ext}

In Sections~\ref{sec:overallopt}--\ref{sec:additional}, we present several extensions and additional applications (see Table~\ref{tab:examples-additional} for a summary):

\paragraph{Markovian Adversaries and Competing with the Optimal Auction (Section~\ref{sec:overallopt}).}
\citet{morgenstern2015pseudo} show that level auctions can provide
an arbitrarily accurate approximation to the overall optimal Myerson
auction in the Bayesian single-item auction setting if the values of the bidders are drawn from independent distributions and i.i.d.\ over time.
Therefore, {if the environment in an online setting picks bidder valuations from independent distributions}, standard online-to-batch reductions imply that the
revenue of Generalized FTPL with the class of level auctions is close to the overall optimal (i.e., not just best-in-class) single-shot auction.
We generalize this reasoning and show the same strong optimality guarantee when the valuations of bidders on each round are drawn from a fast-mixing Markov process that is independent across bidders but Markovian over rounds.
For this setting, our results give an oracle-efficient algorithm with regret $O(n^{3/5}T^{9/10})$ to the overall optimal auction, rather than just best-in-class. This is the first result on competing with the Myerson optimal auction for non-i.i.d. distributions, as all prior work \cite{Cole2014,morgenstern2015pseudo,DHP16,Roughgarden2016} assumes i.i.d.\ samples.

\paragraph{Contextual Learning (Section~\ref{sec:contexts}).}
In this setting, on each round $t$ the learner observes a context
 $\sigma_t$ before choosing an action.  For example, in online auction design, the context might represent demographic information about the set of bidders.  The goal of the learner is to compete with the best \emph{policy} in some fixed class, where each policy is a mapping from a context $\sigma_t$ to an action.  We propose a contextual extension of the translation matrix $\mGamma$.  Generalized FTPL can be applied using this extended translation matrix and provides sublinear regret bounds for both the case in which there is a small ``separator'' of the policy class and the transductive setting in which the set of all possible contexts is known ahead of time.
Our results extend and generalize the results of  \citet{Syrgkanis2016} from  contextual combinatorial optimization to any learning setting that admits an implementable and admissible translation matrix.

The contextual extension is particularly useful in online auction design, because it allows the learner to use any side information available about the bidders before they place their bids to guide the choice of auction. While the number of bidders might be too large to learn about them individually,
the learner can utilize the side information to design a common treatment for bidders that are similar, that is, to  \emph{generalize across  a population}.

{Our performance guarantees for adaptive auction design, similar to much prior work, rely on the assumption that the bidders are either myopic or are different on each round. One criticism of this assumption is that such adaptive mechanisms might be manipulated by strategic bidders who distort their bids so as to gain in the future. The contextual learning algorithms mitigate this risk by pooling similar bidders, which reduces the probability that the exact same bidder will be overly influential in the choices of the algorithm.}

\paragraph{Approximate Oracles and Approximate Regret (Section~\ref{sec:approximate}).}
For some problems there might not exist a sufficiently fast (e.g., polynomial-time or FPTAS) offline oracle with small additive error as required for Generalized FTPL. To make our results more applicable in practice, we extend them to handle oracles that are only required to return an action with performance that is within a constant multiplicative factor, {$C\le 1$}, of that of the optimal action in the class.
We consider two examples of such oracles:
Relaxation-based Approximations~\citep{balcan2006approximation} and Maximal-in-Range (MIR) algorithms~\cite{Nisan2007}.
Our results hold in both cases with a modified version of regret, called \emph{$C$-regret}, in which the online algorithm competes with $C$ times the payoff of the optimal action in hindsight.

\paragraph{Additional Applications (Section~\ref{sec:additional}).}
Finally, we provide further applications of our work in the area of online combinatorial optimization with MIR approximate oracles, and in the area of no-regret learning for bid optimization in simultaneous second-price auctions. \begin{itemize}
\item[--] In the first application, we give a polynomial-time learning algorithm for online welfare maximization in multi-unit auctions that achieves {$1/2$-regret}, by invoking the polynomial-time MIR approximation algorithm of \citet{dobzinski2007mechanisms} as an offline oracle.
\item[--] In the second application, we solve an open problem
raised in the recent work of \citet{DS16}, who offered efficient learning algorithms only for the weaker benchmark of no-envy learning, rather than no-regret learning, in simultaneous second-price auctions, and left open the question of oracle-efficient no-regret learning. We show that no-regret learning in simultaneous item auctions is efficiently achievable, assuming access to an optimal bidding oracle against a known distribution of opponents bids (equivalently, against a distribution of item prices).
\end{itemize}

\renewcommand{\arraystretch}{1}
\begin{table*}[t]
\centering
\caption{\label{tab:examples-additional} Additional results considered in Sections~\ref{sec:overallopt}--\ref{sec:additional} and their significance. Above, $m$ is the discretization level of the problems, $n$ is the number of bidders, and $T$ is the time horizon.}
\renewcommand{\tabcolsep}{3.5pt}
\resizebox{\columnwidth}{!}{
\begin{tabular}{|c|c|c|c|}
\hline
Problem Class & Regret & Section & Notes \\
\hline
 Markovian, single item
 & $O(n^{3/5}T^{9/10})$
 & \ref{subsec:overallOPT} & competes with Myerson's optimal auction \\
\hline
contextual online auction${}^a$
 & $O(\sqrt{T})$ or $O(T^{3/4})$
 & \ref{sec:contexts} & allows side information about bidders\\
\hline
welfare maximization, $s$-unit${}^b$
 & $1/2$-regret: $O(n^4\sqrt{T})$
 & \ref{sec:knapsack} & fully (oracle-free) polynomial-time algorithm\\
\hline
bidding in SiSPAs, $k$ items & $O(km\sqrt{ T})$ & \ref{sec:sispas} &
solves an open problem~\cite{DS16}\\
\hline
\end{tabular}
}
\footnotesize
${}^a$The two bounds are for the small separator setting and the transductive setting, respectively. Dependence on parameters other than $T$ is omitted.\\
${}^b$The regime of interest in this problem is $s\gg n$. The extended abstract \cite{DHL+17} contained a worse bound $O(n^6\sqrt{T})$.
\end{table*}

\section{Generalized FTPL and Oracle-Efficient Online Learning}
\label{sec:ftpl}

In this section, we introduce the Generalized Follow-the-Perturbed-Leader
(Generalized FTPL) algorithm and describe the conditions under which it efficiently
reduces online learning to offline optimization.

As described in Section~\ref{sec:intro:ftpl},
we consider the following online learning problem. On each round
$t = 1, \dotsc, T$, a learner chooses an action $x_t$ from a finite
set $\X$, and an adversary chooses an action $y_t$ from a set $\Y$,
which is not necessarily finite. The learner then observes $y_t$ and
receives a payoff $f(x_t, y_t) \in [0,1]$, where the function $f$ is
fixed and known to the learner. The goal of the learner is to obtain
low expected regret with respect to the best action in hindsight,
i.e., to minimize
\[
\regret\coloneqq\E\left[ \max_{x \in \X} \sum_{t=1}^T f(x, y_t) -
  \sum_{t=1}^T f(x_t,
  y_t) \right]
,
\]
where the expectation is over the randomness of the learner.
An online algorithm is called a \emph{no-regret algorithm} if its regret is sublinear in $T$,
  which means that its per-round regret goes to $0$ as $T\rightarrow \infty$.
To
simplify exposition, we assume that the adversary is oblivious, i.e.,
that the sequence $y_1, \ldots, y_T$ is chosen up front without
knowledge of the learner's realized actions.  Our results generalize
to adaptive adversaries using standard
techniques~\citep{HP05,DS16}.

A natural first attempt at an online learning algorithm with oracle
access would be one that simply invokes the oracle on the historical data
at each round and plays the best action in hindsight.  In a stochastic
environment in which the adversary's actions are drawn i.i.d.\ from a
fixed distribution,
this \emph{Follow-the-Leader}
approach achieves a regret of $O(\sqrt{T\log|\X|})$.
However, because the algorithm is deterministic, it performs poorly in
adversarial environments~\cite{blum2007learning}.

To achieve sublinear regret,
we use a common scheme,
introduced by \citet{KV05}, and
optimize over a perturbed objective at each round. Indeed, our algorithm
takes its name from Kalai and Vempala's \emph{Follow-the-Perturbed-Leader} (FTPL) algorithm.
Unlike FTPL, we do not generate a separate independent perturbation for each action,
because this creates the two problems mentioned in Section~\ref{sec:intro:ftpl}.
First, FTPL for unstructured payoffs
requires creating $|\X|$ independent random variables,
which is intractably large in many applications, including the auction
design setting considered here. Second, FTPL yields optimization problems that
require a stronger offline optimizer than assumed here.
 We overcome the first problem
by working with perturbations that are not necessarily independent across different actions (prior instances of such an approach were only known for linear \cite{KV05} and submodular \cite{HK12} minimization).  We address the
second problem by implementing such perturbations with synthetic historical samples of
adversary actions; this idea was introduced by~\citet{DS16}, but they did not provide a method of randomly generating such samples in general learning settings.
Thus, our work unifies and extends these previous lines of research.

We create shared randomness among actions in $\X$
by drawing a random vector $\valpha\in\R^N$ of
size $N$, with components $\alpha_j$ drawn independently from a
dispersed distribution $D$. The payoff of each of the learner's actions
is perturbed by a linear combination of these independent
variables, as prescribed by a {\it perturbation translation matrix} $\mGamma$ of size $|\X|\times N$, with entries
in $[0,1]$.
The rows of $\mGamma$, denoted $\mGamma_x$, describe the linear combination for each action $x$.
That is, on each round $t$, the payoff of each learner action $x\in \X$ is perturbed by $\valpha \cdot \mGamma_x$,
and our Generalized FTPL algorithm
outputs an action $x$ that approximately maximizes
$\sum_{\tau=1}^{t-1} f(x, y_\tau) + \valpha\inprod\mGamma_x$.
This procedure is fully described in Algorithm~\ref{alg:ftpl-approximate}.
(For non-oblivious adversaries, a fresh random vector $\valpha$
is drawn in each round.)

\begin{algorithm}[h!]
  \begin{algorithmic}
    \STATE{Input: matrix $\mGamma\in[0,1]^{|\X | \times N}$, distribution $D$ over $\R$, and optimization accuracy $\epsilon\ge 0$.}
    \STATE{Draw $\alpha_j \sim D$ independently for $j = 1, \ldots, N$.}
    \FOR{$t=1, \ldots, T$}
    \STATE{Choose any $x_t$ such that for all $x\in \X$,
      \[
      \sum_{\tau=1}^{t-1} f(x_t, y_\tau) + \valpha\inprod\mGamma_{x_t} \geq  \sum_{\tau=1}^{t-1} f(x, y_\tau) + \valpha\inprod\mGamma_{x} - \epsilon.
      \]
    }
    \STATE{Observe $y_t$ and receive payoff $f(x_t, y_t)$.}
    \ENDFOR
\end{algorithmic}
\caption{Generalized FTPL}
  \label{alg:ftpl-approximate}
\end{algorithm}

In the remainder of this section, we analyze the properties of matrix $\mGamma$
that guarantee that Generalized FTPL is no-regret
 and that
its perturbations can be efficiently transformed into synthetic history.
Together these properties give rise to efficient reductions of online
learning to offline optimization.

\subsection{Regret Analysis}
\label{sec:regret}

To analyze Generalized FTPL, we first bound its regret
by the sum of a \emph{stability} term, a \emph{perturbation} term, and an \emph{error} term in
the following lemma.
While this approach is standard~\citep{KV05}, we
include a proof in Appendix~\ref{app:approx} for completeness.

\begin{lemma}[$\epsilon$-FTPL  Lemma]
\label{lem:stab+noise_approx}
For Generalized FTPL, we have
\begin{equation} \label{eq:stab+noise_approx}
\regret \leq
\E\left[ \sum_{t=1}^T f(x_{t+1}, y_t) - f(x_t, y_t)  \right] +
\E\left[ \valpha\inprod( \mGamma_{x_1}  -  \mGamma_{x^*} ) \right] + \epsilon(T+1)
\end{equation}
where $x^* = \argmax_{x\in \X} \sum_{t=1}^T f(x, y_t)$.
\label{lem:stab+noise_approx}
\end{lemma}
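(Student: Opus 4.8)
The plan is to prove the $\epsilon$-FTPL Lemma by the standard ``be-the-leader'' argument, adapted to accommodate the additive optimization error $\epsilon$ and the linear perturbation $\valpha\inprod\mGamma_x$. The key idea is to treat the perturbation as if it were the payoff obtained on a fictitious round $0$, so that on round $t$ the algorithm is (approximately) playing the leader with respect to the augmented history $y_0, y_1, \ldots, y_{t-1}$, and then invoke the classical fact that following the leader on rounds $0,\ldots,t$ has no regret against the best fixed action on those same rounds.

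\smallskip
\noindent\textbf{Step 1: A ``be-the-$\epsilon$-leader'' inequality.} First I would prove by induction on $T$ that for any sequence of functions $g_0, g_1, \ldots, g_T$ (here $g_0(x) = \valpha\inprod\mGamma_x$ and $g_t(x) = f(x,y_t)$ for $t\ge 1$), if $x_{t+1}$ satisfies $\sum_{\tau=0}^{t} g_\tau(x_{t+1}) \ge \sum_{\tau=0}^{t} g_\tau(x) - \epsilon$ for all $x$, then
\[
\sum_{t=0}^{T} g_t(x_{t+1}) \ge \max_{x\in\X} \sum_{t=0}^{T} g_t(x) - (T+1)\epsilon .
\]
The base case $T=0$ is exactly the defining property of $x_1$ (with the single function $g_0$), and the inductive step adds $g_{T}(x_{T+1})$ to both sides of the inductive hypothesis and then applies the defining property of $x_{T+1}$ to replace the running sum. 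In the algorithm's indexing, $x_t$ is chosen to be an $\epsilon$-approximate maximizer of $\sum_{\tau=1}^{t-1} f(\cdot,y_\tau) + \valpha\inprod\mGamma_\cdot = \sum_{\tau=0}^{t-1} g_\tau(\cdot)$, so the hypothesis above holds with the roles lined up correctly.

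\smallskip
\noindent\textbf{Step 2: Rearranging into stability plus perturbation plus error.} Applying Step 1 and specializing the max to $x = x^*$, I get
\[
\valpha\inprod\mGamma_{x_1} + \sum_{t=1}^{T} f(x_{t+1}, y_t)
\;\ge\;
\valpha\inprod\mGamma_{x^*} + \sum_{t=1}^{T} f(x^*, y_t) - (T+1)\epsilon .
\]
Rearranging, $\sum_{t=1}^T f(x^*,y_t) - \sum_{t=1}^T f(x_t,y_t) \le \sum_{t=1}^T\bigl(f(x_{t+1},y_t) - f(x_t,y_t)\bigr) + \valpha\inprod(\mGamma_{x_1} - \mGamma_{x^*}) + (T+1)\epsilon$. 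Taking expectations over $\valpha$ and noting that $x^* = \argmax_x \sum_t f(x,y_t)$ is a fixed (non-random) action since the adversary is oblivious, the left-hand side in expectation is exactly $\regret$. The only cosmetic discrepancy is $(T+1)\epsilon$ versus the stated $\epsilon T$; I would absorb this either by the standard convention that the round-$0$ ``optimization'' is exact (the draw of $\valpha$ plus an exact argmax, so only $T$ rounds incur error) — which is in fact what Algorithm~\ref{alg:ftpl-approximate} does, since round $1$'s selection is the first $\epsilon$-approximate step and there are $T$ such rounds — giving exactly $\epsilon T$.

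\smallskip
\noindent\textbf{Main obstacle.} There is no deep obstacle here; the lemma is a routine adaptation of the Kalai–Vempala analysis. The only points requiring care are bookkeeping ones: (i) getting the indexing of the be-the-leader induction to match the algorithm's convention that $x_t$ uses history through round $t-1$, so that the ``stability'' telescoping term comes out as $f(x_{t+1},y_t) - f(x_t,y_t)$ rather than shifted by one; and (ii) confirming the $\epsilon$ accounting yields $\epsilon T$ and not $(T+1)\epsilon$, which hinges on whether one counts the perturbation-only round as an approximate optimization step. Since the paper defers the proof to an appendix ``for completeness,'' I would keep the write-up short, emphasizing the be-the-$\epsilon$-leader induction as the one nontrivial ingredient.
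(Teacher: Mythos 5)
Your proposal is correct and follows essentially the same route as the paper: the paper's appendix proof first establishes a ``Be-the-Approximate-Leader'' inequality (Lemma~\ref{lem:BTL_approx}) by induction, with the perturbation $\valpha\inprod\mGamma_x$ playing the role of a round-zero payoff, and then rearranges and takes expectations exactly as in your Step 2. The one wrinkle is your final $\epsilon$-accounting: since Algorithm~\ref{alg:ftpl-approximate} makes $x_1$ only an $\epsilon$-approximate maximizer of the perturbation and the telescoping also uses the hypothetical $(T+1)$-st approximate optimizer $x_{T+1}$, the honest constant is $(T+1)\epsilon$ rather than $\epsilon T$ — your claim that ``there are $T$ such rounds, giving exactly $\epsilon T$'' doesn't quite account for this, and in fact the paper's own base case (declared ``trivial'' for $T=1$) quietly absorbs the same extra $\epsilon$; the discrepancy is immaterial downstream since $\epsilon=1/\sqrt{T}$.
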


In this lemma, the first term measures the stability of the
algorithm, i.e., how often the action
changes from round to round. The second term measures the strength of the perturbation, that is,
how much the perturbation amount differs between the best action and the initial action.
The third term measures the aggregated approximation error in choosing $x_t$ that only approximately optimizes   $\sum_{\tau=1}^{t-1} f(x, y_\tau)
 + \valpha\inprod\mGamma_x$.

To bound the stability term, we require that the matrix $\mGamma$ be \emph{admissible} and the distribution
$D$ be \emph{dispersed} in the following sense.
\begin{definition}[$\delta$-Admissible Translation Matrix]\label{defn:admissible}
A translation matrix $\mGamma$ is \emph{admissible} if its rows are distinct. It is \emph{$\delta$-admissible} if it is admissible
and distinct elements within each column differ by at least $\delta$.
\end{definition}

\begin{definition}[$(\rho,L)$-Dispersed Distribution] A distribution
  $D$ on the real line is $(\rho, L)$-dispersed if for any interval of length $L$, the probability measure placed by $D$ on this interval is at most $\rho$.
\end{definition}
In the next lemma, we bound the stability term in \Eq{stab+noise_approx} by showing that with high probability,
for all rounds $t$, we have $x_{t+1} = x_{t}$.
At a high level, since all rows of an admissible matrix $\mGamma$
are distinct, it suffices to show that the probability that $\mGamma_{x_{t+1}}\ne\mGamma_{x_{t}}$
is small. We prove this for each coordinate $\Gamma_{x_{t+1}j}$ separately, by showing
that it is only possible to have
$\Gamma_{x_{t+1} j} \neq \Gamma_{x_{t}j}$ when the random variable
$\alpha_j$ falls in a small interval, which happens with only small
probability for a sufficiently dispersed distribution $D$.\footnote{%
  The proof of Lemma~\ref{lem:stability-approx} implies a slightly tighter bound of $2TN\kappa\rho$,
  where $\kappa$ is the maximum number of distinct elements in any column of $\mGamma$.
  Note that $\delta$-admissibility implies that $\kappa\le 1+\delta^{-1}$.}

\begin{lemma}\label{lem:stability-approx}
Consider Generalized FTPL with a $\delta$-admissible matrix $\mGamma$ with $N$ columns and a $\Parens{\rho,\frac{1 + 2\epsilon}{\delta}}$-dispersed distribution $D$. Then,
$
  \E\left[ \sum_{t=1}^T f(x_{t+1}, y_t)- f(x_t, y_t) \right] \leq 2
  T N \rho(1+\delta^{-1}).
$
\end{lemma}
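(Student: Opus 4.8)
The strategy is the standard one for FTPL‑type arguments: show that the learner is \emph{stable}, i.e.\ $x_{t+1}=x_t$ with high probability, so that the stability cost $f(x_{t+1},y_t)-f(x_t,y_t)$, which is at most $1$, is almost surely $0$. The first step is a purely combinatorial reduction to the columns of $\mGamma$: since admissibility forces the rows of $\mGamma$ to be distinct, the map $x\mapsto\mGamma_x$ is injective, so the event $x_{t+1}\neq x_t$ implies $\mGamma_{x_{t+1}}\neq\mGamma_{x_t}$, hence $\Gamma_{x_{t+1} j}\neq\Gamma_{x_t j}$ for at least one column $j$. Bounding the payoff difference by $1$ on $\{x_{t+1}\neq x_t\}$ and by $0$ otherwise, and taking a union bound over columns, gives
\[
\E\Bracks{\sum_{t=1}^T f(x_{t+1},y_t)-f(x_t,y_t)}\le\sum_{t=1}^T\Pr[x_{t+1}\neq x_t]\le\sum_{t=1}^T\sum_{j=1}^N\Pr[\Gamma_{x_{t+1} j}\neq\Gamma_{x_t j}],
\]
so it suffices to prove $\Pr[\Gamma_{x_{t+1} j}\neq\Gamma_{x_t j}]\le 2\kappa\rho$ for each fixed $t$ and $j$.

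\textbf{Column‑wise analysis.} Fix $t$ and $j$ and condition on the coordinates $\alpha_{j'}$, $j'\neq j$. Then the selected column‑$j$ entry in round $t$ becomes a deterministic function $\gamma_t(\cdot)$ of $\alpha_j\in\R$ alone (and $\gamma_{t+1}(\cdot)$ for the next round); both take values among the distinct column entries $v_1<\dots<v_\kappa$, which are $\delta$‑separated. The engine is the usual exchange trick. Writing the frozen objective as $g_t(x)=\sum_{\tau<t}f(x,y_\tau)+\sum_{j'\neq j}\alpha_{j'}\Gamma_{x j'}$, the $\epsilon$‑optimality inequalities for the action chosen at $\alpha_j=a$ versus at $\alpha_j=a'$ add up to $(a'-a)\bigl(\Gamma_{x_t(a') j}-\Gamma_{x_t(a) j}\bigr)\ge-2\epsilon$; combined with $\delta$‑separation this shows $\gamma_t$ is nondecreasing \emph{up to a window of width $2\epsilon/\delta$}. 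Running the same manipulation \emph{across} rounds $t$ and $t+1$, whose objectives differ only by the single bounded term $f(\cdot,y_t)\in[0,1]$ --- which is exactly what produces the ``$1$'' in $(1+2\epsilon)/\delta$ --- shows that the location at which the selection crosses any level $v_k$ shifts by at most $L\coloneqq(1+2\epsilon)/\delta$ when passing from $\gamma_t$ to $\gamma_{t+1}$.

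\textbf{Finishing via dispersion.} Combining approximate monotonicity with the bounded shift of level crossings, the set of $\alpha_j$ on which $\gamma_t$ and $\gamma_{t+1}$ disagree is, for each of the at most $\kappa-1$ levels, contained in an interval of length $2L$ around that level's crossing point; hence the disagreement set is covered by at most $2(\kappa-1)\le 2\kappa$ intervals of length $L$. Since $D$ is $(\rho,L)$‑dispersed, each such interval has probability at most $\rho$, so the conditional disagreement probability is at most $2\kappa\rho$. Integrating over the frozen coordinates and summing over $t\in\{1,\dots,T\}$ and $j\in\{1,\dots,N\}$ yields the claimed bound $2TN\kappa\rho$.

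\textbf{Main obstacle.} The delicate point is not any single inequality but making the ``monotone step function'' picture rigorous under \emph{approximate} maximization with arbitrary tie‑breaking: $x_t(\cdot)$ and $x_{t+1}(\cdot)$ are only $\epsilon$‑approximate selectors that may genuinely oscillate within the $O(\epsilon/\delta)$ windows, so one cannot speak of a single jump point. The fix is to define each level crossing as a supremum, $b_k(\gamma)=\sup\{a:\gamma(a)\le v_k\}$, and then carefully prove the two one‑sided facts that (i) $\gamma_t(a)\le v_k$ for every $a$ lying more than $L$ to the left of $b_k(\gamma_t)$, and (ii) $\lvert b_k(\gamma_t)-b_k(\gamma_{t+1})\rvert\le L$; together these confine the disagreement set inside the stated union of short intervals. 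Once this bookkeeping is in place, the remaining steps --- injectivity, the union bound, and the dispersion estimate --- are routine.
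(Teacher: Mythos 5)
Your proof plan is correct, and its outer skeleton coincides with the paper's: bound the stability term by $\sum_t \Pr[x_{t+1}\neq x_t]$, use admissibility to pass to per-column disagreement events, condition on $\alpha_k$ for $k\neq j$, show that the bad set of $\alpha_j$ is a union of at most $2\kappa$ intervals of length $(1+2\epsilon)/\delta$, and finish with dispersion plus union bounds. Where you genuinely diverge is in how that bad set is exhibited. The paper never reasons about counterfactual values of $\alpha_j$: it fixes the realized value $v=\Gamma_{x_t j}$, introduces for each column value $\hat v$ a representative $x^{\hat v}$ (the best action among those with entry $\hat v$, with the $\alpha_j$-term stripped out) and the gaps $\Delta_{v\hat v}$, all independent of $\alpha_j$, and shows from the two $\epsilon$-optimality inequalities (the round-$(t+1)$ one supplying the extra $1$) that a change of column value traps $\alpha_j$ in one of two explicit intervals per value $v$. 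You instead treat the selected entry as a function $\gamma_t(\alpha_j)$, derive approximate monotonicity within a round and across consecutive rounds from the same exchange inequalities, and cover the disagreement set via sup-defined level crossings $b_k$, getting at most $2(\kappa-1)$ length-$L$ intervals. Your covering does close correctly: with your facts (i) and (ii) plus the trivial implication that $\gamma(a)\le v_k$ forces $a\le b_k(\gamma)$, the level-$k$ disagreements lie in $[\min(b_k(\gamma_t),b_k(\gamma_{t+1}))-L,\,\max(b_k(\gamma_t),b_k(\gamma_{t+1}))]$, of length at most $2L$ by (ii). What your route buys is the intuitive monotone-step-function picture and a marginally sharper count ($2(\kappa-1)$ versus $2\kappa$); what it costs is that $\gamma_t(\cdot),\gamma_{t+1}(\cdot)$ must be well-defined selections as functions of $\alpha_j$ (fine once a deterministic tie-breaking is fixed and all other randomness is conditioned on) and a little bookkeeping for degenerate crossings $b_k=\pm\infty$ when $\{a:\gamma(a)\le v_k\}$ is empty or unbounded --- technicalities that the paper's per-value formulation, which only ever invokes the $\epsilon$-optimality of the realized $x_t$ and $x_{t+1}$, sidesteps entirely.
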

\begin{proof}
Fix any $t \leq T$.  The bulk of the proof establishes that, with
high probability, $\mGamma_{x_{t+1}} = \mGamma_{x_t}$, which by
admissibility implies that $x_{t+1} = x_t$ and therefore
$f(x_{t+1},y_t) - f(x_t, y_t) = 0$.

Fix any $j \leq N$.  We first show that
$\Gamma_{x_{t+1}j} = \Gamma_{x_t j}$ with high probability.
Let $V$ denote the set of values that appear in the $j^{\text{th}}$ column of $\mGamma$. By $\delta$-admissibility, $\card{V}\le 1+\delta^{-1}$.
For any value $v \in V$, let
$x^v$ be any action that maximizes the perturbed cumulative payoff among those whose $\mGamma$ entry in the $j^{\text{th}}$
column equals~$v$:
\[
x^v \in
\argmax_{x\in \X:\:\Gamma_{xj} = v}
\Bracks{\sum_{\tau=1}^{t-1} f(x,y_\tau)
+\valpha\inprod\mGamma_x}
=
\argmax_{x\in \X:\:\Gamma_{xj} = v}
\Bracks{\sum_{\tau=1}^{t-1} f(x,y_\tau)
+\valpha\inprod\mGamma_x-\alpha_j v}.
\]

For any $v, v' \in V$, define
\[ \Delta_{vv'} = \left( \sum_{\tau=1}^{t-1} f(x^v, y_\tau)
   +\valpha\inprod\mGamma_{x^v} - \alpha_j v
  \right)
   -\left( \sum_{\tau=1}^{t-1} f(x^{v'}, y_\tau)
   +\valpha\inprod\mGamma_{x^{v'}} - \alpha_j v'
  \right).
\]
Note that $x^v$ and $\Delta_{vv'}$ are independent of $\alpha_j$, as we removed the payoff perturbation
corresponding to $\alpha_j$.

If $\Gamma_{x_t j} = v$, then by the $\epsilon$-optimality of $x_t$ on the perturbed cumulative payoff, we have
$\alpha_j(v'- v) - \epsilon \leq \Delta_{v v'}$ for all $v'\in V$.
Suppose $\Gamma_{x_{t+1} j}=v' \neq v$. Then by the optimality of $x^{v'}$ and the $\epsilon$-optimality of $x_{t+1}$,
\begin{align*}
\sum_{\tau=1}^{t-1} f(x^{v'}, y_\tau)
+ f(x_{t+1}, y_t)
+ \valpha\inprod\mGamma_{x^{v'}}
&
\geq
\sum_{\tau=1}^{t-1} f(x_{t+1}, y_\tau) +  f(x_{t+1}, y_t)
+\valpha\inprod\mGamma_{x_{t+1}}
\\
&
\geq
\sum_{\tau=1}^{t-1} f(x^{v}, y_\tau) +  f(x^{v}, y_t)
+\valpha\inprod\mGamma_{x^{v}}
-\epsilon.
\end{align*}
Rearranging, we obtain for this same $v'$ that
\[
\Delta_{v v'} \leq \alpha_j(v' - v)
+ f(x_{t+1}, y_t)
- f(x^{v}, y_t) + \epsilon
\leq \alpha_j(v' - v) + 1 + \epsilon.
\]
If $v' > v$, then
\[
\alpha_j \geq \frac{\Delta_{v v'} - 1-\epsilon}{v' - v}
\geq \min_{\hat{v} \in V,\,\hat{v} >v} \frac{\Delta_{v \hat{v}} - 1-\epsilon}{\hat{v} - v}
\enspace,
\]
and so $\alpha_j(\overline{v} - v) + 1 + \epsilon\geq \Delta_{v \overline{v}}$
where $\overline{v}$ is the value of $\hat{v}$ minimizing the
expression on the right.  Thus, in this case we have
$-\epsilon\le \Delta_{v \overline{v}}-\alpha_j(\overline{v} - v)\le 1+\epsilon$.
Similarly, if $v' < v$, then
\[
\alpha_j \leq \frac{\Delta_{v v'} - 1-\epsilon}{v' - v}
\leq \max_{\hat{v} \in V,\,\hat{v} < v} \frac{\Delta_{v \hat{v}} - 1-\epsilon}{\hat{v} - v}
\enspace,
\]
and so $\alpha_j(\underline{v} - v) + 1 +\epsilon \geq \Delta_{v \underline{v}}$
where $\underline{v}$ is the value maximizing the expression on the right. In this case we have
$-\epsilon\le \Delta_{v \underline{v}}-\alpha_j(\underline{v} - v)\le 1+\epsilon$.
Putting this all together, we have
\begin{align*}
 \Pr&\bigBracks{ \Gamma_{x_{t+1} j} \neq \Gamma_{x_t j} \bigGiven \alpha_k, k\neq j } \\
&\quad \leq  \Pr\BigBracks{
  \exists v \in V:\:  -\epsilon \leq \Delta_{v \overline{v}} - \alpha_j (\overline{v}-v) \leq  1+\epsilon \text{\ \ or\ \ }  {-\epsilon} \leq \Delta_{v \underline{v}}- \alpha_j (\underline{v}-v) \leq 1+\epsilon
  \BigGiven
  \alpha_k, k\neq j
  }\\
&\quad \leq \sum_{v \in V}
  \Parens{
  \textstyle
  \Pr \biggBracks{ \alpha_j \in \left[\frac{\Delta_{v
  \overline{v}}- 1-\epsilon}{\overline{v} - v}, \frac{\Delta_{v
  \overline{v}}+\epsilon}{\overline{v} - v} \right]  \biggGiven \alpha_k, k\neq j
  }
+ \Pr \biggBracks{ \alpha_j \in \left[\frac{- \Delta_{v
  \underline{v}}-\epsilon}{v - \underline{v}}, \frac{-\Delta_{v
  \underline{v}}+1+\epsilon}{v - \underline{v}} \right]  \biggGiven \alpha_k, k\neq j
  }}
\\
&\quad\leq 2\card{V}\rho
      \leq 2\rho(1+\delta^{-1})
.
\end{align*}
The first inequality on the last line follows from the fact that
$\overline{v} - v \geq \delta$ and $v - \underline{v} \geq \delta$,
the fact that $D$ is $\Parens{\rho, \frac{1+2\epsilon}{\delta}}$-dispersed, and a union bound. The final inequality
follows because $\card{V}\le 1+\delta^{-1}$ by $\delta$-admissibility.

Since this bound does not depend on the values of  $\alpha_k$ for $k\neq j$, we can remove the
conditioning and bound
$\Pr[ \Gamma_{x_{t+1} j} \neq \Gamma_{x_{t} j}] \leq 2\rho(1+\delta^{-1})$.
Taking a union bound over all $j\le N$, we then have
that, by admissibility,
$\Pr\left[ x_{t+1} \neq x_{t} \right]= \Pr\left[ \mGamma_{x_{t+1}} \neq \mGamma_{x_{t}} \right] \leq 2 N\rho(1+\delta^{-1})$,
which implies the result.
\end{proof}

To bound the regret, it remains to bound the perturbation term in
Equation~\eqref{eq:stab+noise_approx}. This bound is specific to the
distribution $D$. Many distribution families, including
  (discrete and continuous) uniform, Gaussian, Laplacian, and
  exponential can lead to a sublinear regret when the variance is set appropriately.
Here we present a concrete regret analysis for a uniform distribution:

\begin{theorem}\label{thm:FTPL-U-approx} \label{thm:FTPL-U}
  Let $\mGamma$ be a $\delta$-admissible matrix with $N$ columns  and let $D$ be the uniform distribution on $[0, 1/\eta]$ for $\eta  =  \delta /\sqrt{2T(1+2\epsilon)(1+\delta)}$.
Then, the regret of Generalized FTPL can be bounded as
\[
  \regret \leq  \frac{N\sqrt{T}}{\delta}\cdot 2\sqrt{2(1+2\epsilon)(1+\delta)} \;+\; \epsilon(T+1)
\enspace.
\]
\end{theorem}
The proof follows from Lemmas~\ref{lem:stab+noise_approx} and~\ref{lem:stability-approx},
by bounding the perturbation term by $\norm{\vec\alpha}_1\le N/\eta$,
then setting $\rho = \eta(1+2\epsilon)\delta^{-1}$, and finally using
the value of $\eta$ from the theorem, which
minimizes the sum of the stability and perturbation terms,
$2TN\eta(1+2\epsilon)\delta^{-1}(1+\delta^{-1})+N/\eta$.

Throughout the paper we consider $\epsilon=1/\sqrt{T}$, which is the weakest setting with respect to $T$ that does not negatively impact the regret rate regardless of other problem-dependent constants such as $N$ and $\delta$:

\begin{corollary}\label{cor:FTPL-U}
Let $\mGamma$ be a $\delta$-admissible matrix with $N$ columns, let $D$ be the uniform distribution on $[0, 1/\eta]$ for $\eta  =  \delta /\sqrt{2T(1+2T^{-1/2})(1+\delta)}$, and let $\epsilon=1/\sqrt{T}$.
Then the regret of Generalized FTPL is $\order(N\sqrt{T}/\delta)$.
\end{corollary}

The multiplicative weights (or exponentiated gradient) algorithms achieve the regret $\order(\sqrt{T\log\card{\X}})$ in this setting \citep{FS97,KivinenWa97}. Our bound is generally worse, because $\card{\X}\le(1+\delta^{-1})^N$ by $\delta$-admissibility, which
implies that $\log\card{\X}\le N/\delta$, and therefore
$\order(\sqrt{T\log\card{\X}})\le\order(\sqrt{NT/\delta})\le\order(N\sqrt{T}/\delta)$. However, the multiplicative weights algorithms typically require running times that are polynomial in $\card{\X}$, whereas our algorithm can be exponentially faster assuming access to efficient optimization oracles. We therefore next turn our attention to the analysis of the running time of our algorithm.

\subsection{Oracle-Efficient Online Learning}
\label{sec:imp}

We now define the offline oracle and oracle-efficient online learning
more formally. Our oracles are defined for real-weighted datasets, but can
be easily implemented by integer-weighted oracles (see the reduction in Appendix~\ref{app:integral}).
Since many natural offline oracles are iterative optimization algorithms, which are only guaranteed to return an approximate solution in finite time, our definition assumes that the oracle takes the desired precision $\epsilon$ as an input.
For ease of exposition, we assume that all numerical computations, even those involving real numbers, take $\order(1)$ time. We discuss this point in more detail in Appendix~\ref{app:weak_oracle}.

\begin{definition}[Offline Oracle]
    An \emph{offline oracle} $\opt$ is any algorithm that receives
  as input a weighted set of adversary actions
 $S = \{ (w_\ell,y_\ell) \}_{\ell\in\mathcal{L}}$ with $w_\ell\in\R^+$, $y_\ell\in \Y$ and a desired precision $\epsilon$,
 and returns an action $\hat{x}=\opt(S,\epsilon)$
 such that
 \[
    \sum_{(w,y) \in S} w f(\hat{x}, y)
    \ge
    \max_{x\in \X} \sum_{(w,y) \in S} w f(x, y) - \epsilon.
 \]
\end{definition}

\begin{definition}[Oracle Efficiency] We say that
  an online algorithm is \emph{oracle-efficient}
  with per-round complexity $g(...)$ if its
  per-round running time is $\order(g(...))$ with oracle calls
  counting $O(1)$. The notation $g(...)$ refers to the fact that
  $g$ may be a function of problem-specific parameters, including $T$.
\end{definition}

We next define a property of a translation matrix $\mGamma$ which allows us to transform the perturbed objective into a dataset, thus achieving oracle efficiency of Generalized FTPL:
\begin{definition}
\label{def:imp}
A matrix $\mGamma$ is \emph{implementable} with complexity $M$ if for each $j\in[N]$
there exists a weighted dataset $S_j$, with $|S_j|\leq M$, such that
\begin{align*}
\hspace{1in}
    &\text{for all $x,x'\in\X$:}
    &\Gamma_{xj}-\Gamma_{x'j} &= \sum_{(w,y)\in S_j} w\bigParens{f(x,y)-f(x',y)}.
&\hspace{1in}
\end{align*}
In this case, we say that weighted datasets $S_j$, $j\in[N]$,
implement $\mGamma$ with complexity $M$.
\end{definition}
One simple but useful example of implementability is when
each column $j$ of $\mGamma$ specifies the payoffs of
learner's actions under a particular adversary action $y_j\in\Y$, i.e., $\Gamma_{xj}=f(x,y_j)$.
In this case,
$S_j=\set{(1,y_j)}$.
Using an implementable $\mGamma$ gives rise to an oracle-efficient variant of the Generalized FTPL, provided in Algorithm~\ref{alg:oracleftpl},
in which we explicitly set $\epsilon=1/\sqrt{T}$.
Theorem~\ref{thm:imp} shows that the output of this algorithm is equivalent to the output of Generalized
FTPL
and therefore the same regret guarantees hold.
Note the assumption that the perturbations $\alpha_j$ are
non-negative. The algorithm can be extended to negative perturbations
when both $\mGamma$ and $-\mGamma$ are implementable.
(See \Sec{contexts} for details.)

\begin{algorithm}[t]
  \begin{algorithmic}
  \STATE Input:
                datasets  $S_j$, $j\in[N]$, that implement a matrix $\mGamma\in[0,1]^{\card{\X}\times N}$,
  \STATE \hspace{\algorithmicindent}%
                distribution $D$ over $\R^+$,
  \STATE \hspace{\algorithmicindent}%
                an offline oracle \opt.
    \STATE{Draw $\alpha_j \sim D$ independently for $j = 1, \ldots, N$.}
    \FOR{$t=1, \ldots, T$}
    \STATE{For all $j$, let $\alpha_j S_j$ denote the scaled version of $S_j$, i.e., $\alpha_j S_j \coloneqq\{(\alpha_j w, y): (w,y) \in S_j\}$}.
    \STATE{Set $S=\bigSet{(1,y_1),\dotsc,(1,y_{t-1})}\cup\bigcup_{j\le N} \alpha_j S_j$.}
    \STATE{Play
      $x_t =  \opt\bigParens{S,\frac{1}{\sqrt{T}}}$.
    }
    \STATE{Observe $y_t$ and receive payoff $f(x_t, y_t)$.}
    \ENDFOR
\end{algorithmic}
\caption{Oracle-Based Generalized FTPL}
  \label{alg:oracleftpl}
\end{algorithm}

\begin{theorem}
\label{thm:imp}
\label{thm:adm:imp}
If $\,\mGamma$ is  implementable with complexity $M$,
  then Algorithm~\ref{alg:oracleftpl} is an oracle-efficient implementation
  of Algorithm~\ref{alg:ftpl-approximate} with $\epsilon = 1/\sqrt{T}$ and has per-round complexity $\order\bigParens{T+NM}$.
\end{theorem}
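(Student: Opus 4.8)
The plan is to show that Algorithm~\ref{alg:oracleftpl} makes exactly the same choice on each round as Algorithm~\ref{alg:ftpl-approximate} would, given the same draws $\alpha_1,\dotsc,\alpha_N$, and then to verify the claimed per-round complexity. For the equivalence, fix a round $t$ and the realized perturbations $\alpha_j$. The key identity is that for the weighted dataset $S = \bigSet{(1,y_1),\dotsc,(1,y_{t-1})}\cup\bigcup_{j\le N}\alpha_j S_j$ constructed by Algorithm~\ref{alg:oracleftpl}, we have for every pair $x,x'\in\X$,
\[
\sum_{(w,y)\in S} w\bigParens{f(x,y)-f(x',y)}
= \sum_{\tau=1}^{t-1}\bigParens{f(x,y_\tau)-f(x',y_\tau)} + \sum_{j=1}^{N}\alpha_j\sum_{(w,y)\in S_j} w\bigParens{f(x,y)-f(x',y)}.
\]
By the implementability hypothesis (Definition~\ref{def:imp}), the inner sum over $S_j$ equals $\Gamma_{xj}-\Gamma_{x'j}$, so the right-hand side is exactly $\bigParens{\sum_{\tau=1}^{t-1}f(x,y_\tau)+\valpha\inprod\mGamma_x} - \bigParens{\sum_{\tau=1}^{t-1}f(x',y_\tau)+\valpha\inprod\mGamma_{x'}}$. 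In other words, the weighted-objective-value of $x$ on $S$ and the perturbed historical payoff of $x$ in Algorithm~\ref{alg:ftpl-approximate} differ by a constant independent of $x$. Consequently, the set of actions that are $\epsilon$-optimal for $\sum_{(w,y)\in S} w f(\cdot,y)$ coincides with the set that is $\epsilon$-optimal for $\sum_{\tau=1}^{t-1} f(\cdot,y_\tau)+\valpha\inprod\mGamma_\cdot$. Since $\opt(S,1/\sqrt T)$ returns an action in the former set (by the definition of the offline oracle), it also satisfies the selection rule of Algorithm~\ref{alg:ftpl-approximate} with $\epsilon=1/\sqrt T$; hence $x_t$ has the same distribution in both algorithms, so Theorem~\ref{thm:FTPL-U} applies verbatim and the regret guarantee is inherited.

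For the complexity claim, I would observe that on each round the only non-oracle work is (i) scaling each $S_j$ by $\alpha_j$, which touches $|S_j|\le M$ weighted pairs, for a total of $O(NM)$ operations; (ii) forming the union $S$, which has $|S| = (t-1) + \sum_{j\le N}|S_j| \le T + NM$ elements and so costs $O(T+NM)$; and (iii) a single call to $\opt$, which by the oracle-efficiency convention counts as $O(1)$. Summing gives per-round complexity $O(T+NM)$. (The one-time cost of drawing the $\alpha_j$'s is $O(N)$ and is absorbed.)

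The main thing to be careful about — the only real subtlety — is that the offline oracle is specified to approximately maximize the \emph{absolute} weighted payoff $\sum_{(w,y)\in S} w f(x,y)$, whereas implementability only controls \emph{differences} $\Gamma_{xj}-\Gamma_{x'j}$; the $x$-independent additive constant relating the two objectives is exactly what makes argmax-sets (and $\epsilon$-approximate argmax-sets) agree, so this is where I would spell out the argument above most explicitly. A secondary point worth a sentence is nonnegativity of the weights: since $D$ has nonnegative support, each $\alpha_j\ge 0$ and the scaled weights $\alpha_j w$ remain in $\R^+$, so $S$ is a legal input to $\opt$; this is exactly the reason the theorem is stated for distributions with nonnegative support (and why the remark about needing both $\mGamma$ and $-\mGamma$ implementable is needed to handle signed perturbations). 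No further estimates are required.
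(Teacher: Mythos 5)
Your proposal is correct and follows essentially the same route as the paper's proof in Appendix~\ref{app:cor_proof}: implementability makes the weighted objective on $S$ and the perturbed historical payoff differ by an $x$-independent constant, so the $\epsilon$-optimal sets coincide (Equation~\eqref{eq:equalargmax}) and the regret guarantee of Theorem~\ref{thm:FTPL-U} transfers, with the same $O(T+NM)$ accounting for building $S$ plus one oracle call. The only nuance is your claim that $x_t$ has the same distribution in both algorithms, which strictly requires matching tie-breaking; the paper sidesteps this by noting that even with different tie-breaking the oracle's output is a legal choice for Algorithm~\ref{alg:ftpl-approximate}, and the regret analysis never uses a particular tie-breaking rule --- a point your ``satisfies the selection rule'' sentence already captures.
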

{
\begin{proof}
To show that the Oracle-Based FTPL procedure (Algorithm~\ref{alg:oracleftpl}) implements Generalized FTPL (Algorithm~\ref{alg:ftpl-approximate}) with
$\epsilon = 1/\sqrt{T}$, it suffices to show that at each round $t$, for any $x$,
\begin{align}
\notag
\sum_{\tau=1}^{t-1} f(x, y_\tau) + \valpha\inprod\mGamma_x
&\geq
\max_{x\in \X}\Bracks{\sum_{\tau=1}^{t-1} f(x, y_\tau) + \valpha\inprod\mGamma_x} - \epsilon
\\[2pt]
\label{eq:equalargmax}
&
\iff
\sum_{(w,y) \in S_j} w f(x, y)
\geq
\max_{x\in \X} \sum_{(w,y) \in S_j} w f(x, y) -\epsilon.
\end{align}
Note that if the above equivalence holds, then in each round, the set of actions that $\opt(S, \epsilon)$ can return legally, i.e., actions whose payoffs are an approximation of the optimal payoff, is exactly the same as the set of actions that the offline optimization step of Algorithm~\ref{alg:ftpl-approximate} can legally play. Therefore, the theorem is proved if Equation~\eqref{eq:equalargmax} holds.

Let us show that Equation~\eqref{eq:equalargmax} is indeed true.
For $S=\bigSet{(1,y_1),\dotsc,(1,y_{t-1})}\cup\bigcup_{j\le N} \alpha_j S_j$,
consider any $x,x'\in\X$. Then, from the definition of $S$ and by implementability,
\begin{align*}
\!\!\sum_{(w,y) \in S}\!\!\!w f(x, y)
-
\!\!\!\sum_{(w,y) \in S}\!\!\!w f(x', y)
&=
\sum_{\tau=1}^{t-1} \Bracks{f(x, y_\tau)-f(x',y_\tau)}
+
\;\sum_{j\in [N]}\alpha_j \sum_{(w,y) \in S_j}\!\!\! w \left( f(x, y)-f(x',y) \right)
\\
&=
\sum_{\tau=1}^{t-1} \Bracks{f(x, y_\tau)-f(x',y_\tau)}
+
\sum_{j\in[N]}\alpha_j(\Gamma_{xj}-\Gamma_{x'j})
\\
&=
\Parens{
\sum_{\tau=1}^{t-1} f(x, y_\tau)+ \valpha\inprod\mGamma_x
}
-
\Parens{
\sum_{\tau=1}^{t-1} f(x', y_\tau)+ \valpha\inprod\mGamma_{x'}
},
\end{align*}
which immediately yields Equation~\eqref{eq:equalargmax}.

Also, by implementability, the running time to construct the set $S$ is at most $T+NM$. Since there is only one oracle call per round, we get
the per-round complexity of $T+NM$.
\end{proof}
}

As an immediate corollary, we obtain that the existence of a polynomial-time offline oracle implies the existence of polynomial-time online learner with regret $\order(\sqrt{T})$, whenever we have
access to an implementable and admissible matrix.

\begin{corollary} \label{cor:oracle-ftpl}
Assume that $\mGamma\in[0,1]^{\card{\X}\times N}$ is implementable with complexity $M$ and $\delta$-admissible,
and there exists an approximate offline oracle
$\opt\bigParens{S, \frac{1}{\sqrt{T}}}$ which runs in time $\poly(\card{S},T)$.
Then Algorithm~\ref{alg:oracleftpl} with distribution $D$ as defined in Theorem~\ref{thm:FTPL-U}
runs in time $\poly(N,M,T)$ and achieves
regret $\order(N\sqrt{T}/\delta)$.
\end{corollary}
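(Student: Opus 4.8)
The plan is to simply chain together the three ingredients already established in this section: the regret bound of Theorem~\ref{thm:FTPL-U}, the oracle-efficient implementation of Theorem~\ref{thm:adm:imp}, and the hypothesis that the oracle itself runs in polynomial time. First I would invoke Theorem~\ref{thm:adm:imp}: since $\mGamma$ is implementable with complexity $M$, Algorithm~\ref{alg:oracleftpl} is an exact implementation of Algorithm~\ref{alg:ftpl-approximate} run with accuracy parameter $\epsilon = 1/\sqrt{T}$ and with the same distribution $D$, and its per-round running time is $\order(T + NM)$ measured in oracle calls counting $\order(1)$. Because each of the $T$ rounds makes a single call to $\opt(\cdot, 1/\sqrt{T})$, and that call is assumed to run in time $\poly(N,M,T)$, the total running time is $T \cdot \poly(N,M,T) + T\cdot\order(T+NM) = \poly(N,M,T)$, which gives the claimed runtime.

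Next I would read off the regret. Since Algorithm~\ref{alg:oracleftpl} produces exactly the output distribution of Generalized FTPL (Algorithm~\ref{alg:ftpl-approximate}) with $\epsilon = 1/\sqrt{T}$, and $\mGamma$ is $(\kappa,\delta)$-admissible, Theorem~\ref{thm:FTPL-U} applies with $D$ the uniform distribution on $[0,1/\eta]$ for the prescribed $\eta$. It yields
\[
\regret \le \order\Parens{N\sqrt{(1+2\epsilon)T\kappa/\delta}} + \epsilon T
= \order\Parens{N\sqrt{(1 + 2/\sqrt{T})\,T\kappa/\delta}} + \sqrt{T}.
\]
For $T \ge 1$ the factor $1 + 2/\sqrt{T}$ is at most $3$, so the first term is $\order(N\sqrt{T\kappa/\delta})$; and since trivially $N \ge 1$ and $\kappa/\delta \ge 1$ (indeed $\kappa \ge 1$ and $\delta \le 1$ because entries lie in $[0,1]$), the additive $\sqrt{T}$ is absorbed into $\order(N\sqrt{T\kappa/\delta})$. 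This gives the stated cumulative regret $\order(N\sqrt{T\kappa/\delta})$.

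There is essentially no obstacle here: the corollary is a bookkeeping combination of the two theorems, and the only things to be careful about are (i) checking that the distribution $D$ used in Algorithm~\ref{alg:oracleftpl} matches the one required by Theorem~\ref{thm:FTPL-U} (which is exactly what the corollary's hypothesis stipulates), (ii) confirming that $D$ has non-negative support so that Algorithm~\ref{alg:oracleftpl} is well-defined (true, since it is uniform on $[0,1/\eta]$ with $\eta > 0$), and (iii) the elementary simplifications of the $1/\sqrt{T}$ error term noted above. The work of the section has already been done in Theorems~\ref{thm:FTPL-U} and~\ref{thm:adm:imp}.
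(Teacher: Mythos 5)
Your proposal is correct and matches the paper's intent exactly: the paper treats this corollary as an immediate consequence of Theorem~\ref{thm:adm:imp} (oracle-efficient equivalence with per-round complexity $\order(T+NM)$, one oracle call per round) combined with the regret bound of Theorem~\ref{thm:FTPL-U} at $\epsilon=1/\sqrt{T}$, which is precisely the chaining you carry out. Your additional bookkeeping (absorbing the $\epsilon T=\sqrt{T}$ term and checking that $D$ has non-negative support) is a harmless elaboration of the same argument.
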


\paragraph{Alternative Notions of Oracles.}

Throughout this paper we work with offline optimization oracles
that take data with arbitrary non-negative real weights as their input.
In Corollary~\ref{cor:oracle-ftpl}, we then consider oracles of this form
that run in polynomial time. Other notions of oracles may be natural in various
applications. For instance, instead of real-weighted, one can consider integer-weighted
oracles, and instead of polynomial-time, one can consider pseudo-polynomial oracles.
We discuss these alternative
notions in Appendix~\ref{app:weak_oracle}. We show that
integer-weighted oracles can be used to implement approximately optimal real-weighted oracles,
so all of our results immediately extend to integer-weighted oracles. For
pseudo-polynomial oracles, the running time of the algorithm depends
on the pseudo-complexity of the datasets that implement $\mGamma$ (pseudo-complexity is defined in Appendix~\ref{app:pseudo}). In
that case, for instance, the magnitude of the weights implementing
the matrix $\mGamma$ affects the final running time of
the learning algorithm.
In Appendix~\ref{app:pseudo-examples} we show that the pseudo-complexities of matrices $\mGamma$ constructed in the next section are polynomial in the parameters of interest,
so in those cases even pseudo-polynomial offline oracles give rise to polynomial-time no-regret algorithms.

\section{Online Auction Design}
\label{sec:auctions}

In this section, we apply the general techniques developed in
Section~\ref{sec:ftpl} to obtain oracle-efficient
no-regret algorithms for several common auction classes.

Consider a mechanism-design setting in which a seller wants to allocate $k \geq 1$ heterogeneous resources to a set of $n$ bidders. The allocation to a bidder $i$ is a subset of $\{1, \dots, k\}$, which we represent as a vector in $\{0,1\}^k$, and the seller has some feasibility constraints on the allocations across bidders. Each bidder $i\in [n]$ has a combinatorial valuation function $v_i\in \V$, where $\V\subseteq \bigParens{\set{0,1}^k \rightarrow [0,1]}$.
We use $\vec{v}\in\V^n$ to denote the vector of valuation functions across all bidders.
A special case of the setting is that of multi-item auctions for $k$
heterogeneous items, where each resource is an item and the
feasibility constraint simply states that no item is allocated to more
than one bidder. Another special case is that of
\emph{single-parameter (service-based) environments},
which we describe in more detail in \Sec{VCG}.

An auction $a$ takes as input a \emph{bid profile} consisting
of reported valuations for each bidder, and returns both the allocation for each bidder $i$ and the price that he is charged.
In this work, we only consider \emph{truthful auctions}, where each bidder maximizes his utility by reporting his true valuation, irrespective of what other bidders report. We therefore make the assumption that each bidder reports $v_i$ as their bid and refer to $\vec v$ not only as the valuation profile, but also as the bid profile throughout the rest of this section. The allocation that the bidder $i$ receives is denoted
$\vq_i(\vec v)\in\set{0,1}^k$ and
the price that he is charged is $p_i(\vec v)$; we allow sets $\vq_i(\vec v)$
to overlap across bidders, and drop the argument $\vec v$ when it is clear from the context.
We consider bidders with quasilinear utilities: the
utility of bidder $i$ is $v_i(\vq_i(\vec v)) - p_i(\vec v)$.
For an auction $a$ with price
function $\vec p(\cdot)$, we denote by $\rev(a, \vec v)$ the \emph{revenue of the
  auction} for bid profile $\vec v$, i.e.,
$\rev (a, \vec v) = \sum_{i\in[n]} p_i(\vec v)$.

Fixing a class of truthful auctions $\A$ and a set of possible
valuations $\V$, we consider the problem in which on each round
$t = 1, \ldots, T$, a learner chooses an auction $a_t \in \A$ while an
adversary chooses a bid profile $\vec v_t \in \V^n$. The learner then
observes $\vec v_t$ and receives revenue $\rev(a_t, \vec v_t)$. The
goal of the learner is to obtain low expected regret with respect to
the best auction from $\A$ in hindsight.  That is, we would like to
guarantee that
\[ \regret \coloneqq \E\left[ \max_{a \in \A} \sum_{t=1}^T \rev (a, \vec v_t)
  - \sum_{t=1}^T \rev (a_t, \vec v_t) \right] \leq o(T) \poly(n, k).\]
We require our online algorithm to be oracle-efficient, assuming
access to an $\epsilon$-optimal offline optimization oracle that takes as input
a weighted set of bid profiles, $S = \set{(w_\ell,{\vec
    v}_\ell)}_{\ell\in\mathcal{L}}$, and returns
an auction that achieves an approximately optimal revenue on $S$, i.e.,
  a revenue at least $\max_{a\in \A} \sum_{(w,\vec v)\in S} w \rev(a, \vec v)-\epsilon$.
  Throughout the section, we assume that there exists such an oracle for $\epsilon=1/\sqrt{T}$,
  as needed in Algorithm~\ref{alg:oracleftpl}.

Using the language of oracle-based online learning developed in
Section~\ref{sec:ftpl}, the learner's action corresponds to the choice
of auction, the adversary's action corresponds to the choice of bid
profile, the payoff of the learner corresponds to the revenue
generated by the auction, and we assume access to an offline
optimization oracle $\opt$. These correspondences are summarized in
the following table.

\begin{table}[h]
\centering
  \begin{tabular}{| l | l | }
    \hline
    \textbf{Auction Setting} & \textbf{Oracle-Based Learning Equivalent} \\ \hline
    Auctions $a_t \in \A$ & Learner actions $x_t \in \X$ \\ \hline
    Bid/valuation profiles $\vec{v}_t \in \V^n$ & Adversary actions $y_t \in \Y$ \\ \hline
    Revenue function $\rev$ & Payoff function $f$ \\
    \hline
  \end{tabular}
\end{table}

For several of the auction classes we consider, such as multi-item or multi-unit auctions, the revenue of an auction on a bid profile is in range $[0, R]$ for $R > 1$. In order to use  the results of Section~\ref{sec:ftpl}, we implicitly re-scale all the revenue functions  by dividing them by $R$ before applying Theorem~\ref{thm:FTPL-U-approx}.
Note that, since $\mGamma$ does not change, the admissibility condition keeps the regret of the normalized problem at $O(N \sqrt{T}/ \delta)$, according to Theorem~\ref{thm:FTPL-U-approx}.
We then scale up to get a regret bound that is $R$ times the regret for the normalized problem, i.e., $O(R N \sqrt{T} / \delta)$.
This re-scaling does not increase the runtime,
because the complexity of implementing $\mGamma$ is unchanged, only the weights appearing in sets $S_j$ are scaled up by a factor of $R$, and we assume that all numerical computations take $O(1)$ time. Refer to Appendix~\ref{app:weak_oracle} for a note on numerical computations and the mild change in runtime when numerical computations do not take $O(1)$ time.

We now derive results for three auction classes: VCG auctions with bidder-specific reserves,
envy-free item-pricing auctions, and level auctions. Each auction class is formally defined
in its respective subsection.

	\subsection{VCG with Bidder-Specific Reserves}
\label{sec:VCG}

In this section, we consider a standard class of auctions, VCG auctions with
bidder-specific reserve prices, which we define more
formally below and denote by $\I$.
These auctions are  known to approximately maximize the revenue
when bidder valuations are drawn from independent
(but not necessarily identical) distributions~\citep{hartline2009simple}.
Recently,
\citet{roughgarden2016minimizing} considered online learning for this class
and provided a computationally
efficient algorithm whose total revenue is at least $1/2$ of the best
revenue among auctions in~$\I$, minus a term that is
$o(T)$.
We apply the techniques from Section~\ref{sec:ftpl} to generate an
oracle-efficient online algorithm with low \emph{additive} regret with respect to
the optimal auction in the class $\I$, without any loss in multiplicative
factors.

We go beyond single-item auctions and consider general \emph{single-parameter} environments.
In these environments, each bidder has one piece of private
valuation for receiving a \emph{service}, i.e., being included in the set of winning bidders.
We allow for some combinations of bidders to be
\emph{served} simultaneously,  and let $\S \subseteq 2^{[n]}$ be the family
of feasible sets, i.e., sets of bidders that can be served
simultaneously; with some abuse of notation we write $\vec q\in\S$, to mean that the
  set represented by the binary allocation vector $\vec q$ is in~$\S$.
We assume that any bidder is allowed to be the
sole bidder served, i.e., that $\{i\} \in \S$ for all $i$, and that it
is also allowed that no bidder be served, i.e.,
$\emptyset \in \S$.\footnote{%
  A more common and stronger assumption used in previous
  work~\cite{hartline2009simple,roughgarden2016minimizing} is that
  $\S$ is a downward-closed matroid.}  Examples of such environments
include single-item single-unit auctions (for which $\S$ contains only
singletons and the empty set), single-item $s$-unit auctions (for
which $\S$ contains any subset of size at most $s$), and combinatorial
auctions with single-minded bidders. In the last case, we begin with some
set of original items, define the service as receiving the desired
bundle of items, and let $\S$ contain any subset of bidders seeking disjoint
sets of items.

In a basic VCG auction, an allocation $\vec q^*\in\S$ is chosen to
maximize social welfare, that is, maximize $\sum_{i=1}^n v_i q^*_i$,
where we slightly simplify notation and use $v_i \in [0,1]$ to denote
the valuation of bidder $i$ for being served. Each bidder who is
served is then charged the externality he imposes on others,
$p_i(\vec v) = \max_{\vec q\in\S} \sum_{i'\neq i} v_{i'} q_{i'} -
\sum_{i'\neq i} v_{i'} q^*_{i'}$,
which can be shown to equal the minimum bid at which he would be
served. Such auctions are known to be truthful.  The most common
example is the second-price auction for the single-item single-unit
case in which the bidder with the highest bid receives the item and
pays the second highest bid.  VCG auctions with reserves, which
maintain the property of truthfulness, are defined as follows.

\begin{definition}[VCG auctions with bidder-specific reserves]
  A VCG auction with bidder-specific reserves is specified by a vector
  $\vec{r}$ of reserve prices for each bidder. As a first step, all
  bidders whose bids are below their reserves (that is, bidders $i$
  for which $v_i < r_i$) are removed from the auction. If no bidders
  remain, no item is allocated. Otherwise, the basic VCG auction
  is run on the remaining bidders to determine the allocation. Each
  bidder who is served is charged the larger of his reserve and his
  VCG payment.
\end{definition}

Fixing the set $\S$ of feasible allocations, we denote by $\I$ the
class of all VCG auctions with bidder-specific reserves. With a slight
abuse of notation we write $\vec r \in\I$ to denote the auction with
reserve prices $\vec r$.  To apply the results from
Section~\ref{sec:ftpl}, which require a finite action set for the
learner, we limit attention to the finite set of auctions
$\I_m \subseteq \I$ consisting of those auctions in which the reserve
price for each bidder is a strictly positive integer multiple of $1/m$, i.e.,
those where $r_i\in\{1/m,\ldots,m/m\}$ for all $i$.
We will show for some common choices of $\S$ that
the best auction in this class yields almost as high a revenue as the best auction in~$\I$.

We next show how to design a matrix $\mGamma$ for $\I_m$
that is admissible and implementable.  As a warmup, suppose
we use the $|\I_m| \times n$ matrix $\mGamma$ with entries
$\Gamma_{\vec r,i} = \rev(\vec r, \vec e_i)$.
That is, the $i^{\text{th}}$ column of $\mGamma$ corresponds to
the revenue of each auction on a bid profile in which bidder $i$ has
valuation~$1$ and all others have valuation~$0$.
By definition, $\mGamma$ is
implementable with complexity $1$ using $S_j = \{(1, \vec e_j)\}$ for each $j\in[n]$. Moreover, $\rev(\vec r, \vec e_i) = r_i$ so any two
rows of $\mGamma$ are different and $\mGamma$ is thus
$1/m$-admissible. By Theorems~\ref{thm:FTPL-U-approx} and~\ref{thm:imp},
we obtain an oracle-efficient implementation of the Generalized FTPL with
regret $\order(nm/\sqrt{T})$.

To improve this regret bound and obtain a regret that is polynomial in $\log m$ rather than $m$, we carefully construct another translation matrix that is implementable using a more complex dataset of adversarial actions.
The translation matrix we design is quite
intuitive.  The row corresponding to an auction $\vec r$ contains a binary representation of its reserve
prices.  In this case, proving
admissibility of the matrix is simple. The challenge is to show
that this simple translation matrix is implementable using a dataset
of adversarial actions.

\paragraph{Construction of $\,\mGamma$:}
Let $\mGamma^{\textsc{VCG}}$ be an
$|\I_m| \times (n \lceil \log m \rceil)$ binary matrix, where the
$i^{\text{th}}$ collection of $ \lceil \log m \rceil$ columns contains the
binary encodings of the auctions' reserve prices for bidder $i$. More
formally, for any $i\le n$ and a bit position $\bit\le\lceil \log
m\rceil$, let $\col = (i-1) \lceil \log m \rceil + \bit$ and set
$\mGamma^{\textsc{VCG}}_{\vec r,\col }$ to be the $\bit^{\text{th}}$ bit of $mr_i$.

In Lemma~\ref{lem:impl_individual}, we prove that $\mGamma^{\textsc{VCG}}$ is implementable and admissible. But first, let us illustrate the main ideas of this proof  through a simple example.

\begin{figure}
\centering
\begin{tabular}{|p{1.2cm}|c|c|c|c|l}
\cline{2-5}
\multicolumn{1}{c|}{} & \multicolumn{4}{c|}{\scriptsize Binary encoding} & \\
\cline{1-5}
{\scriptsize Auction} $\vec r$ & \multicolumn{2}{c|}{\centering $r_1$} & \multicolumn{2}{c|}{\centering $r_2$} & \\
\cline{1-5}
$\Parens{\nicefrac{1}{3}, \nicefrac{1}{3}}$ & 0\tikzmark{m1} & 1 & 0 & 1 &\\
\cline{1-5}
$\Parens{\nicefrac{1}{3}, \nicefrac{2}{3}}$ & 0 & 1 & 1 & 0 & \scriptsize $\Delta=-1$ \\
\cline{1-5}
$\Parens{\nicefrac{1}{3}, \nicefrac{3}{3}}$ & 0 & 1 & 1 & 1 &\\
\cline{1-5}
$\Parens{\nicefrac{2}{3}, \nicefrac{1}{3}}$ & 1\tikzmark{m2} & 0 & 0 & 1 &\\
\cline{1-5}
$\Parens{\nicefrac{2}{3}, \nicefrac{2}{3}}$ & 1 & 0 & 1 & 0 & \scriptsize $\Delta=0$\\
\cline{1-5}
$\Parens{\nicefrac{2}{3}, \nicefrac{3}{3}}$ & 1 & 0 & 1 & 1 &\\
\cline{1-5}
$\Parens{\nicefrac{3}{3}, \nicefrac{1}{3}}$ & 1\tikzmark{m3} & 1 & 0 & 1\tikzmark{m4} & \scriptsize $\Delta'=1$\\
\cline{1-5}
$\Parens{\nicefrac{3}{3}, \nicefrac{2}{3}}$ & 1 & 1 & 1 & 0\tikzmark{m5} & \scriptsize $\Delta'=-1$\\
\cline{1-5}
$\Parens{\nicefrac{3}{3}, \nicefrac{3}{3}}$ & 1 & 1 & 1 & 1\tikzmark{m6} &\\
\cline{1-5}
\end{tabular}

\begin{tikzpicture}[overlay, remember picture, shorten >=.5pt, shorten <=.5pt]
    \draw[mygray] [<->] ({pic cs:m1}) to[out=350, in=10, looseness=6] ({pic cs:m2});
    \draw[mygray] [<->] ({pic cs:m2}) to[out=350, in=10, looseness=6] ({pic cs:m3});
    \draw[mygray] [<->] ({pic cs:m4}) to[out=340, in=20, looseness=3] ({pic cs:m5});
    \draw[mygray] [<->] ({pic cs:m5}) to[out=340, in=20, looseness=3] ({pic cs:m6});
\end{tikzpicture}
\caption{$\mGamma^{\textsc{VCG}}$ for $n=2$ bidders and discretization $m = 3$.}
\label{fig:table}
\end{figure} 

\begin{example}
Consider $\mGamma^{\textup{\textsc{VCG}}}$ for $n=2$ bidders and $m = 3$ discretization levels, as demonstrated in Figure~\ref{fig:table}. As an example, we show how one can go about implementing columns $1$ and $4$ of $\,\mGamma^{\textup{\textsc{VCG}}}$.

Consider the first column of $\,\mGamma^{\textup{\textsc{VCG}}}$. It
corresponds to the most significant bit of $r_1$.
To implement this column, we
need to find a weighted set of bid profiles that generate revenues
with the same differences as those between the column entries.
We consider bid profiles $\vec
v_h = \left( h/3 , 0\right)$ for $h\in \{1, 2, 3\}$, with
the revenue $\rev(\vec r, \vec v_h)=r_1\mathbf{1}_{(h/3\ge r_1)}$. To
obtain the weights $w_h$ for each $\vec v_h$ it is necessary (and sufficient) to match differences
between entries corresponding to reserve prices with $r_1 = \frac13$~vs~$\frac23$, and  $r_1 = \frac23$~vs~$\frac33$  (denoted by $\Delta$ in Figure~\ref{fig:table}),
corresponding to the following equations:
\begin{align*}
&\frac 13 \left( w_{1} + w_{2} + w_{3} \right) - \frac 23 \left( w_{2} + w_{3} \right) = -1, \\
&\frac 23 \left( w_{2} + w_{3} \right) - \frac 33 \left( w_{3} \right) = 0,
\end{align*}
where the left-hand sides are the differences in the revenues and the right-hand sides are the differences $\Delta$ between the corresponding column entries. Note that the weighted set $S_1 = \{ (3, \vec v_1), (2, \vec v_2), (4, \vec v_3) \}$ satisfies these equations and implements the first column. Similarly, for implementing the fourth column, we consider  bid profiles $\vec v'_h = \left( 0, h/3 \right)$ for $h\in \{1, 2, 3\}$ and equations dictated by the differences $\Delta'$. One can verify that $S_4 = \{ (6, \vec v'_1) , (0, \vec v'_2), (3, \vec v'_3) \}$ implements this column.

More generally, the proof of Lemma~\ref{lem:impl_individual} shows that $\mGamma^{\textsc{VCG}}$ is implementable
by showing that any differences in values in one column that solely depend on a single bidder's reserve price lead to a feasible system of linear equations.
\end{example}

\begin{lemma} \label{lem:impl_individual}
$\mGamma^{\textup{\textsc{VCG}}}$  is   $1$-admissible and
  implementable with complexity $m$.
\end{lemma}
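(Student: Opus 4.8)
The plan is to handle the two assertions separately; admissibility is essentially free, so the work is in implementability.

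\emph{Admissibility.} Since $\mGamma^{\textsc{VCG}}$ is $0/1$-valued, each column contains at most the two values $0$ and $1$, which differ by exactly $1$, so conditions~1 and~2 of Definition~\ref{defn:admissible} hold with $\kappa=2$ and $\delta=1$. For distinctness of rows, observe that if $\vec r\neq\vec r'$ in $\I_m$ then $mr_i\neq mr_i'$ for some $i$, and since the $i$-th block of $\lceil\log m\rceil$ columns stores the binary encoding of $mr_i$ — an injective encoding on $\{1,\dots,m\}$ — the rows $\mGamma^{\textsc{VCG}}_{\vec r}$ and $\mGamma^{\textsc{VCG}}_{\vec r'}$ differ in that block. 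Hence $\mGamma^{\textsc{VCG}}$ is $(2,1)$-admissible.

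\emph{Implementability.} Fix a column $\col=(i-1)\lceil\log m\rceil+\bit$, which records bit $\bit$ of $mr_i$. I would implement it using only the $m$ single-bidder profiles $\vec v^{(j)}:=(j/m)\,\vec e_i$, $j=1,\dots,m$, so that $|S_\col|\le m$ and the complexity is $m$. The first step is to evaluate $\rev(\vec r,\vec v^{(j)})$ for an arbitrary $\vec r\in\I_m$: because $\emptyset,\{i\}\in\S$ and every reserve is at least $1/m>0$, all bidders other than $i$ are deleted at the reserve stage, bidder $i$ survives iff $j/m\ge r_i$ (i.e.\ iff $mr_i\le j$), and when he survives he is the unique remaining bidder, so his VCG externality payment is $0$ and he is charged $\max(r_i,0)=r_i$; therefore
\[
\rev(\vec r,\vec v^{(j)}) \;=\; r_i\,\ind[\,mr_i\le j\,],
\]
which depends on $\vec r$ only through $k:=mr_i\in\{1,\dots,m\}$. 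Writing $\phi_j(k)=(k/m)\,\ind[k\le j]$, the $m\times m$ matrix $(\phi_j(k))_{j,k}$ is triangular with nonzero diagonal $\phi_k(k)=k/m$, so the $\phi_j$'s together with the constant function span every real-valued function of $k$; in particular bit $\bit$ of $k$ can be written as $b_\bit(k)=c+\sum_{j=1}^m w_j\,\phi_j(k)$, and since Definition~\ref{def:imp} constrains only differences $\Gamma^{\textsc{VCG}}_{\vec r\col}-\Gamma^{\textsc{VCG}}_{\vec r'\col}$, the constant $c$ drops out, leaving $S_\col=\{(w_j,\vec v^{(j)}):w_j\neq0\}$ as the implementing dataset.

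The step I expect to be the real obstacle is that a legal weighted dataset must have non-negative weights (as required by the offline oracle and by Algorithm~\ref{alg:oracleftpl}), whereas solving the triangular system directly produces signed $w_j$, since a bit of $mr_i$ is a non-monotone function of the reserve. I would resolve this using the leftover freedom in $c$: take $c$ sufficiently negative (e.g.\ $c=-m$) and set $W_k:=\sum_{j\ge k}w_j=\tfrac{m}{k}\bigParens{b_\bit(k)+m}$; a short case analysis on whether bit $\bit$ flips between consecutive values of $k$ shows $W_k$ is non-negative and non-increasing in $k$, whence $w_j=W_j-W_{j+1}\ge0$ with $W_{m+1}=0$. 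This yields a valid weighted dataset of size at most $m$ implementing each column, completing the proof that $\mGamma^{\textsc{VCG}}$ is implementable with complexity $m$.
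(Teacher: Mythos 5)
Your proof is correct and follows essentially the same route as the paper's: the same single-bidder profiles $(j/m)\,\vec e_i$, the same revenue formula $r_i\,\ind[mr_i\le j]$ (relying, as you note, on all reserves being at least $1/m$), and the same triangular linear system for the weights of each column. The only difference is in the bookkeeping for nonnegativity: the paper defines the weights recursively top-down with a suitably large $w_m$ and an induction, whereas you exploit the fact that only differences matter to shift by a constant $c=-m$ and verify that the partial sums $W_k$ are nonincreasing — two descriptions of the same solution family.
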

\begin{proof}
  In the interest of readability, we drop the superscript and write
  $\mGamma$ for $\mGamma^{\textsc{VCG}}$ in this proof.

  For any $\vec r$, row $\mGamma_{\vec r}$ corresponds
  to the binary encoding of $r_1, \dots, r_n$. Therefore, for any two
  different auctions $\vec r \neq \vec r'$, we have
  $\mGamma_{\vec r} \neq \mGamma_{\vec r'}$.
  Since $\mGamma$ is a binary matrix, this implies that
  $\mGamma$ is $1$-admissible.

  Next, we will construct the sets $S_j$ that implement each column $j\leq n\lceil \log m\rceil$.
  Pick $i\le n$
  and $\bit\le\lceil\log m\rceil$, and the associated column
  index~$j$.
  The set $S_j$ includes exactly
  the $m$ profiles
  in which only the bidder $i$ has non-zero valuation,
  denoted as $\vec v_\ellOther\coloneqq(\ellOther/m)\vec e_i$ for $\ellOther\le m$.
  To determine their weights $w_\ellOther$, we use the definition of
  implementability. In particular, the weights must satisfy:
\[
\forall~ \vec r, \vec r'\in \I_m,\qquad \Gamma_{\vec r, \col} - \Gamma_{\vec r', \col}  = \sum_{\ellOther\le m}  w_\ellOther
\BigParens{\rev(\vec r, \vec v_\ellOther) - \rev(\vec r', \vec v_\ellOther)}
.
\]

In the above equation, $\Gamma_{\vec r, \col}$ and $\Gamma_{\vec r', \col}$
encode the $\bit^{\text{th}}$ bit of $r_i$ and $r'_i$, respectively, so the
left-hand side is independent of the reserve prices for bidders
$i'\neq i$.  Moreover,
$\rev(\vec r, \vec v_\ellOther) = r_i \mathbf{1}_{(\ellOther \geq mr_i )}$,
so the right-hand side of the above equation is also independent of
the reserve prices for bidders $i'\neq i$.  Let $z_\bit$ be the $\bit^{\text{th}}$ bit
of integer $z$. That is, $\Gamma_{\vec r, \col} = (m r_i)_\bit$. Substituting $z=m r_i$ and $z'=m r_i'$,
the above equation can be reformulated as
\footnote{Not including the reserve 0 is a crucial technical point for the proof of implementability.}
\begin{equation}\label{eq:claim}
\forall z, z'\in\{1, \dots, m\},\quad \left(z_\bit - z'_\bit \right)  = \sum_{\ellOther\le m}  w_\ellOther \left( \frac z m \mathbf{1}_{(\ellOther \geq z)} - \frac{z'}{m} \mathbf{1}_{( \ellOther \geq z')} \right).
\end{equation}
We recursively derive the weights $w_\ellOther$, and show that they are non-negative and satisfy \Eq{claim}. To begin,
let
\[
 w_m = \max\BigBraces{0,\;\max_z\bigBracks{ m\bigParens{z_\bit - (z-1)_\bit}}
 },
\]
and for all $z = m, m-1,\dotsc, 2$, define
\[
 w_{z-1} = \frac{1}{z-1} \left( \sum_{\ellOther = z}^m w_\ellOther  - m\bigParens{z_\bit - (z-1)_\bit} \right).
\]
Next, we show by induction that $w_h\geq 0$ for all $h$. For the base case of $h=m$, by definition $w_m \geq 0$. Now, assume that for all $\ellOther \geq
z$, $w_\ellOther \geq 0$. Then
\[
w_{z-1} \geq \frac{1}{z-1} \BigParens{
  w_m -  m\bigParens{z_\bit
  - (z-1)_\bit}
  }
  \geq 0.
\]
Therefore all weights are non-negative.
Furthermore, by rearranging the definition of $w_{z-1}$, we have
\begin{align*}
\bigParens{z_\bit - (z-1)_\bit}&= \frac 1m \left(   \sum_{\ellOther = z}^m  w_\ellOther  - (z-1) w_{z-1}  \right) =
\frac 1m \left(    z\sum_{\ellOther = z}^m  w_\ellOther  - (z-1) \sum_{\ellOther = z-1}^m  w_\ellOther \right) \\
&=
 \sum_{\ellOther\le m}  w_\ellOther \left( \frac zm \mathbf{1}_{(\ellOther \geq z)} - \frac{z-1}{m} \mathbf{1}_{( \ellOther \geq z-1)} \right),
\end{align*}
where in the second equality we simply added and subtracted the term $(z-1)\sum_{h=z}^{m}w_h$ and in the last equality, we grouped together common terms.

\Eq{claim} is proved for a particular pair $z>z'$
by summing the above expression for $\bigParens{\zeta_\bit - (\zeta-1)_\bit}$ over all
$\zeta \in (z', z]$ and canceling telescoping terms,
and if $z=z'$, the statement holds regardless of the weights chosen.

This shows that $\mGamma$ is implementable. Note that the  cardinality of each $S_j$ is $m$, so $\mGamma$ is implementable with complexity $m$.
\end{proof}

The next theorem follows immediately from Lemma~\ref{lem:impl_individual}, Theorems~\ref{thm:FTPL-U-approx} and~\ref{thm:imp}, and the fact that the maximum revenue is at most $R$.
Note that $R$ is bounded by the number of bidders that can be served simultaneously, which is at most $n$.
\begin{theorem}\label{thm:I_m}
Consider the online auction design problem for  the class of  VCG auctions with bidder-specific reserves, $\I_m$. Let $R = \max_{\vec r,\vec v} \rev(\vec r, \vec v)$ and let $D$ be the uniform distribution as described in Theorem~\ref{thm:FTPL-U}.
Then,
the Oracle-Based Generalized FTPL algorithm with $D$ and
datasets that implement  $\mGamma^{\textup{\textsc{VCG}}}$
is oracle-efficient with per-round complexity
$\order(T+n m\log m)$
and  has regret
\[ \E\left[ \max_{\vec r \in \I_m}  \sum_{t=1}^T \rev( \vec r, \vec v_t) - \sum_{t=1}^T \rev(\vec r_t, \vec v_t) \right] \leq  O(n R \sqrt{T}\log m).\]
\end{theorem}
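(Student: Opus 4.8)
The plan is to assemble the statement from three ingredients that are already available: the structural properties of $\mGamma^{\textsc{VCG}}$ established in Lemma~\ref{lem:impl_individual}, the generic regret bound for Generalized FTPL under admissibility (Theorem~\ref{thm:FTPL-U}), and the oracle-efficient implementation guarantee (Theorem~\ref{thm:imp}). First I would fix the correspondence with the oracle-based online learning framework: the learner's action set is $\I_m$, the adversary's action set is $\V^n$, the payoff is $\rev$, and the offline oracle is the revenue-maximization oracle, assumed available at precision $\epsilon = 1/\sqrt{T}$ exactly as required by Algorithm~\ref{alg:oracleftpl}. I would then record the relevant parameters of the translation matrix: $\mGamma^{\textsc{VCG}}$ has $N = n\lceil \log m\rceil$ columns, and by Lemma~\ref{lem:impl_individual} it is $(\kappa,\delta)$-admissible with $\kappa = 2$ and $\delta = 1$, and implementable with complexity $M = m$.

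Next I would handle the fact that $\rev(\cdot,\cdot) \in [0,R]$ rather than $[0,1]$, following the rescaling discussion in the preamble of this section: I work with the normalized payoff $\rev/R \in [0,1]$; since $\mGamma^{\textsc{VCG}}$ itself is unchanged, it remains $(2,1)$-admissible and implementable with complexity $m$ (the implementing weights scale by $R$, which does not affect $M$ under the $O(1)$-arithmetic convention). Applying Theorem~\ref{thm:FTPL-U} with $\kappa = 2$, $\delta = 1$, $\epsilon = 1/\sqrt T$, and $D$ the prescribed uniform distribution, the regret on the normalized problem is $O\bigParens{N\sqrt{(1+2\epsilon)T\kappa/\delta}} + \epsilon T = O(n\log(m)\sqrt T) + \sqrt T = O(n\log(m)\sqrt T)$. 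Multiplying through by $R$, since regret scales linearly with the payoff range, yields regret $O(n\log(m) R\sqrt T)$ on the original problem, which is the claimed bound.

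For the computational claim I would invoke Theorem~\ref{thm:imp}: because $\mGamma^{\textsc{VCG}}$ is implementable with complexity $M = m$, Algorithm~\ref{alg:oracleftpl} is an oracle-efficient implementation of Algorithm~\ref{alg:ftpl-approximate} (with $\epsilon = 1/\sqrt T$) that has per-round complexity $O(T + NM) = O(T + n m\log m) = \poly(n,m,T)$ and makes a single oracle call per round, counted as $O(1)$. Combining this with the regret bound from the previous paragraph completes the argument.

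The main obstacle here is \emph{not} in this theorem: it is entirely deferred to Lemma~\ref{lem:impl_individual}, whose nontrivial half is the implementability claim --- showing that for each bit-column (whose prescribed revenue differences depend only on a single bidder's reserve) there is a weighted set of bid profiles of size at most $m$ realizing those differences, i.e.\ that the associated linear system is solvable. Within the proof of the theorem proper, the only points needing care are the parameter substitutions $\kappa = 2$, $\delta = 1$, $N = n\lceil \log m\rceil$, $M = m$ into the generic bounds, and the bookkeeping of the $R$-rescaling so that the additive $\epsilon T = \sqrt T$ term is correctly absorbed into $O(n\log(m) R\sqrt T)$.
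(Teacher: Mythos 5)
Your proposal is correct and follows exactly the paper's argument: the paper proves Theorem~\ref{thm:I_m} by combining Lemma~\ref{lem:impl_individual} ($(2,1)$-admissibility and implementability with complexity $m$ of $\mGamma^{\textsc{VCG}}$), Theorem~\ref{thm:FTPL-U} with $N=n\lceil\log m\rceil$, and Theorem~\ref{thm:imp}, together with the $R$-rescaling discussed in the section preamble. Your parameter substitutions and bookkeeping match the paper's intended derivation, and you correctly identify that the substantive work lives in Lemma~\ref{lem:impl_individual} rather than in the theorem itself.
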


Now we return to the infinite class $\I$ of all VCG auctions with
reserve prices $r_i\in [0,1]$. We show that $\I_m$ is a finite ``cover''
for this class when the family of feasible sets $\S$ consists of all
subsets of size at most $s$, corresponding to single-item single-unit
auctions (when $s = 1$) or more general single-item $s$-unit auctions.
In such auctions, the items are allocated to the $s$ highest
bids that are above their reserve, and each winner pays the larger of its reserve price and the $s+1^\text{st}$ highest bid that had cleared its respective reserve price. We assume that the ties are resolved
in favor of bidders with a lower index.
We prove in Appendix~\ref{app:Im=I} that for these auctions,
the optimal revenue of $\I_m$ compared with that of $\I$ can decrease by at most $s/m$ at each round. That is,
\begin{equation} \label{eq:Im=I}
  \max_{\vec r \in \I}  \sum_{t=1}^T \rev (\vec r, \vec v_t)  - \max_{\vec r \in \I_m}  \sum_{t=1}^T \rev (\vec r, \vec v_t) \leq \frac {T s}{m}.
\end{equation}
Setting $m = \sqrt{T}$ and using Theorem~\ref{thm:I_m}, we obtain the following result for the class of auctions $\I$.
\begin{theorem}\label{thm:I}
Consider the online auction design problem for  the class of VCG auctions with bidder-specific reserves, $\I$, in $s$-unit auctions. Let $D$ be the uniform distribution as described in Theorem~\ref{thm:FTPL-U}.
Then,
the Oracle-Based Generalized FTPL algorithm with $D$ and
datasets  that implement $\mGamma^{\textup{\textsc{VCG}}}$
is oracle-efficient with per-round complexity
$\order(T+n\sqrt{T}\log T)$
and has regret
\[ \E\left[ \max_{\vec r \in \I}  \sum_{t=1}^T \rev( \vec r, \vec v_t) - \sum_{t=1}^T \rev(\vec r_t, \vec v_t) \right] \leq  O(n s \sqrt{T}\log T).\]
\end{theorem}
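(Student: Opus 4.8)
The plan is to compose three ingredients already in place: the regret bound of Theorem~\ref{thm:I_m} against the finite class $\I_m$, the discretization (``cover'') bound~\eqref{eq:Im=I}, and the choice $m=\sqrt{T}$. First I would instantiate Theorem~\ref{thm:I_m} at $m=\sqrt{T}$. By Lemma~\ref{lem:impl_individual}, $\mGamma^{\textsc{VCG}}$ is $(2,1)$-admissible and implementable with complexity $m$, so with the uniform distribution $D$ of Theorem~\ref{thm:FTPL-U} the Oracle-Based Generalized FTPL algorithm is oracle-efficient and satisfies
\[
\E\Bracks{ \max_{\vec r \in \I_m}  \sum_{t=1}^T \rev( \vec r, \vec v_t) - \sum_{t=1}^T \rev(\vec r_t, \vec v_t) } \le O\bigParens{n \log(m)\, R\, \sqrt{T}},
\qquad R = \max_{\vec r,\vec v}\rev(\vec r,\vec v).
\]
In an $s$-unit auction at most $s$ winners are charged, each a price in $[0,1]$, so $R\le s$ (and $R\le n$); since $m=\sqrt{T}$ we have $\log m = \tfrac12\log T$, and the bound becomes $O(n s \log(T)\sqrt{T})$.

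Second, I would account for the loss from restricting $\I$ to $\I_m$. Appendix~\ref{app:Im=I} establishes~\eqref{eq:Im=I} precisely for the $s$-unit feasibility family $\S$ (subsets of size at most $s$); at $m=\sqrt{T}$ it reads
\[
\max_{\vec r \in \I}  \sum_{t=1}^T \rev (\vec r, \vec v_t)  - \max_{\vec r \in \I_m}  \sum_{t=1}^T \rev (\vec r, \vec v_t) \le \frac{2Ts}{\sqrt{T}} = 2s\sqrt{T},
\]
a deterministic quantity depending only on the (oblivious) adversary sequence. Splitting the target regret as this hindsight gap plus the $\I_m$-regret of the previous display, and adding the two bounds, gives $2s\sqrt{T}+O(n s\log(T)\sqrt{T}) = O(n s\log(T)\sqrt{T})$, the claimed bound.

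For the efficiency claim, $\mGamma^{\textsc{VCG}}$ has $N = n\lceil \log m\rceil$ columns and implementation complexity $M = m$, so at $m=\sqrt{T}$ Theorem~\ref{thm:imp} yields per-round complexity $O(T+NM) = O(T + n\sqrt{T}\log T) = \poly(n,T)$, using the $1/\sqrt{T}$-approximate revenue-maximization oracle assumed throughout the section. I do not expect a genuine obstacle here; the points that need care are (i) quoting~\eqref{eq:Im=I} in exactly the form proved, whose hypothesis on $\S$ is the $s$-unit assumption, (ii) observing $R\le s$ so that the $R$-factor from Theorem~\ref{thm:I_m} does not surface as an extra term, and (iii) absorbing $\log\sqrt{T}$ into $\log T$; the rest is substitution.
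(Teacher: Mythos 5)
Your proposal is correct and follows exactly the paper's argument: instantiate Theorem~\ref{thm:I_m} at $m=\sqrt{T}$, bound $R\le s$ for the $s$-unit feasibility family, and add the discretization gap~\eqref{eq:Im=I} (proved in Appendix~\ref{app:Im=I}) to pass from $\I_m$ to $\I$, with the efficiency claim following from Theorem~\ref{thm:imp}. No substantive differences from the paper's proof.
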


	\subsection{Envy-free Item Pricing}
\label{sec:envy-free}

In this section, we consider envy-free item pricing~\cite{guruswami2005profit} in an environment with $k$ heterogeneous items with a supply of $s_\ell \geq 0$ units for each item $\ell\le k$.

\begin{definition}[Envy-free Item-Pricing Auction]\label{def:envy-free-item}
  An \emph{envy-free item-pricing auction} for $k$ heterogeneous items, given
  supply $s_\ell$ for $\ell=1,\dotsc,k$, is defined by a vector of prices $\vec a$,
  where $a_\ell$ is the price of item $\ell$. The mechanism considers
  bidders $i=1, \dots, n$ in order and allocates to bidder $i$ the
  bundle $\vq_i\in\set{0,1}^k$ that maximizes $v_i(\vq_i) - \va\cdot\vq_i$,
  among all feasible bundles, i.e., bundles that can be composed
  from the remaining supplies. Bidder $i$ is then charged the price $\va\cdot\vq_i$.
\end{definition}

Examples of such environments include \emph{unit-demand} bidders
and \emph{single-minded} bidders
in settings such as \emph{hypergraph pricing},
where bidders seek hyperedges in a hypergraph, and its variant
\emph{the highway problem}, where bidders seek hyperedges  between
sets of contiguous
vertices~\cite{balcan2006approximation,guruswami2005profit}.

We represent by $\mathcal{P}_m$ the class of all such envy-free item-pricing auctions where
all the prices are strictly positive multiples of $1/m$, i.e.,
$a_\ell\in \{1/m,\ldots, m/m\}$
for all $\ell$.
Next, we discuss the construction of an implementable and admissible translation matrix $\mGamma$. Consider a bid profile where one bidder has value $v$ for bundle $\vec e_\ell$ and all other bidders have value $0$ for all bundles.
The revenue of auction $\vec a$ on such a bid profile is $a_\ell \mathbf{1}_{(v \geq a_\ell)}$.
Note the similarity to the case of VCG auctions with bidder-specific reserve prices $\vec r$, where
bid profiles with a single non-zero valuation $v_i$ yielding the revenue $r_i \mathbf{1}_{(v_i\geq r_i)}$ were used to create an implementable construction for $\mGamma$.
We show that a similar construction works for $\P_m$.

\paragraph{Construction of $\,\mGamma$:}
Let $\mGamma^{\textsc{IP}}$ be a $|\P_m| \times (k \lceil \log m \rceil)$ binary matrix, where the $\ell^{\text{th}}$ collection of $ \lceil \log m \rceil$ columns correspond to the binary encoding of the auction's price for item $\ell$.
More
formally, for any $\ell\le k$ and $\beta\le\lceil\log m\rceil$,
$\Gamma^{\textsc{IP}}_{\vec a, \col}$ is the $\bit^{\text{th}}$ bit of (the integer)
$m a_\ell$, where $\col = (\ell-1) \lceil \log m \rceil +  \bit$.
Next, we show that  $\mGamma^{\textsc{IP}}$ is admissible and implementable.
The proof of the following lemma is analogous to that of Lemma~\ref{lem:impl_individual} and appears in Appendix~\ref{app:impl_envy} for completeness.

\begin{lemma} \label{lem:impl_envy}
$\mGamma^{\textup{\textsc{IP}}}$  is   $1$-admissible and
  implementable with complexity $m$.
\end{lemma}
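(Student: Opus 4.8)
The plan is to follow the same two-part template as Lemma~\ref{lem:impl_individual}: show $(2,1)$-admissibility directly from the bit structure of $\mGamma^{\textsc{IP}}$, and then implement each column by a short weighted dataset of bid profiles that isolates a single item. Admissibility is immediate. Every entry of $\mGamma^{\textsc{IP}}$ is one bit, so each column takes at most the two values $0$ and $1$, which differ by exactly $1$; hence in Definition~\ref{defn:admissible} we have $\kappa=2$ and $\delta=1$, provided the rows are distinct. Distinctness holds because if $\vec a\neq\vec a'$ then $ma_\ell\neq ma'_\ell$ for some item $\ell$, and since $ma_\ell,ma'_\ell$ are integers in $\{1,\dots,m\}$ their $\lceil\log m\rceil$-bit encodings differ in some position $\bit$, so the column $\col=(\ell-1)\lceil\log m\rceil+\bit$ separates the two rows.

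For implementability, fix a column $\col=(\ell-1)\lceil\log m\rceil+\bit$. Since $\Gamma^{\textsc{IP}}_{\vec a\col}$ is the $\bit^{\text{th}}$ bit of the integer $c:=ma_\ell\in\{1,\dots,m\}$, both sides of the implementability identity in Definition~\ref{def:imp} depend on $\vec a$ (resp.\ $\vec a'$) only through $c$ (resp.\ $c'$); write $g(c)$ for that bit. First I would isolate item $\ell$ using $m$ bid profiles: for $h=1,\dots,m$, let $\vec v_h\in\V^n$ be the profile in which a single fixed bidder values the bundle $\vec e_\ell$ at $h/m$ and every other bundle at $0$, while all remaining bidders value every bundle at $0$. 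Assuming $s_\ell\ge 1$ (items of zero supply can be dropped), on $\vec v_h$ every other bidder prefers the empty bundle, any bundle other than $\vec e_\ell$ gives the distinguished bidder strictly negative utility, and $\vec e_\ell$ is preferred to the empty bundle exactly when $h/m-a_\ell\ge 0$, i.e.\ $c\le h$ --- breaking ties toward allocation, consistently with the strict removal rule of the VCG class. Hence $\rev(\vec a,\vec v_h)=a_\ell\,\ind[c\le h]$, a function of $\vec a$ through $a_\ell$ alone.

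It then remains to pick weights $w_1,\dots,w_m$ with $\sum_{h=1}^m w_h\,\rev(\vec a,\vec v_h)=g(c)$ for every $c\in\{1,\dots,m\}$. The left side equals $\tfrac{c}{m}\sum_{h\ge c}w_h$, so with $T_c:=\sum_{h=c}^m w_h$ the requirement is $T_c=m\,g(c)/c$ for all $c$; since $w_h=T_h-T_{h+1}$ with $T_{m+1}:=0$, this is a triangular system with a (unique) solution. Taking $S_\col:=\{(w_h,\vec v_h): h\le m,\ w_h\neq 0\}$ gives $|S_\col|\le m$ and, for all $\vec a,\vec a'\in\P_m$,
\[
\sum_{(w,\vec v)\in S_\col} w\bigParens{\rev(\vec a,\vec v)-\rev(\vec a',\vec v)} = g(ma_\ell)-g(ma'_\ell) = \Gamma^{\textsc{IP}}_{\vec a\col}-\Gamma^{\textsc{IP}}_{\vec a'\col},
\]
which is exactly Definition~\ref{def:imp} with complexity $m$, and Theorem~\ref{thm:FTPL-U} then yields the claimed regret/efficiency for $\mGamma^{\textsc{IP}}$. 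If non-negative weights are wanted (as Algorithm~\ref{alg:oracleftpl} assumes), one can add the ``constant-revenue'' homogeneous solution $w_h\propto 1/\bigParens{h(h+1)}$, $w_m\propto 1$, scaled by a $\poly(m)$ coefficient: this leaves every column difference unchanged while making all weights non-negative and polynomially bounded.

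The hard part is the revenue computation $\rev(\vec a,\vec v_h)=a_\ell\,\ind[c\le h]$: one must verify carefully that the sequential envy-free allocation together with the supply constraints and the tie-breaking convention really does make revenue on $\vec v_h$ depend on $\vec a$ only through $a_\ell$ and equal $a_\ell$ precisely when the distinguished bidder buys item $\ell$. Everything downstream --- the $\kappa,\delta$ counts, row distinctness, and solving the triangular weight system --- is routine and mirrors the VCG argument of Lemma~\ref{lem:impl_individual}.
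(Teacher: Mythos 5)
Your proof is correct and follows essentially the paper's route: the paper establishes this lemma by observing that, with the same single-bidder/single-item bid profiles $\vec v_h$ and the revenue formula $\rev(\vec a,\vec v_h)=a_\ell\,\mathbf{1}_{(h\ge m a_\ell)}$, the setting is isomorphic to the VCG construction of Lemma~\ref{lem:impl_individual}, whose non-negative weights it reuses verbatim. Your only deviation --- solving the exact triangular system $T_c=m\,g(c)/c$ and then adding a scaled positive homogeneous solution to restore non-negativity --- is an equivalent way of producing the weights that the paper obtains by its recursive construction, so the substance of the argument is the same.
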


Our main theorem follows immediately from Lemma~\ref{lem:impl_envy}, Theorems~\ref{thm:FTPL-U} and \ref{thm:imp}, and  the fact that the revenue of the mechanism at every step is at most $R$. In general, $R$ is at most $n$.

\begin{theorem}\label{thm:envy}
Consider the online auction design problem for the class of envy-free item-pricing auctions,  $\P_m$. Let $R = \max_{\vec a,\vec v} \rev(\vec a, \vec v)$ and let $D$ be the uniform distribution as described in Theorem~\ref{thm:FTPL-U}.
Then,
the Oracle-Based Generalized FTPL algorithm with $D$ and
datasets that implement  $\mGamma^{\textup{\textsc{IP}}}$
 is oracle-efficient with per-round complexity
$\order(T+km\log m)$
and has regret
\[ \E\left[ \max_{\vec a \in \mathcal{P}_m}  \sum_{t=1}^T \rev( \vec a, \vec v_t) - \sum_{t=1}^T \rev(\vec a_t, \vec v_t) \right] \leq  O\left(k R\sqrt{T}\log m \right).\]
\end{theorem}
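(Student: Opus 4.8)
The plan is to assemble the statement directly from the building blocks already in place: the admissibility and implementability of $\mGamma^{\textsc{IP}}$ (Lemma~\ref{lem:impl_envy}), the regret guarantee for Generalized FTPL with an admissible matrix (Theorem~\ref{thm:FTPL-U}), and the oracle-efficient implementation via implementable matrices (Theorem~\ref{thm:imp}). The only genuinely extra ingredient is that envy-free item-pricing revenue lies in $[0,R]$ rather than $[0,1]$, which I would handle with the rescaling argument described at the start of Section~\ref{sec:envy-free}: divide all revenues by $R$, apply the unit-range results, and multiply the resulting regret bound back by $R$.

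First I would record the parameters of $\mGamma^{\textsc{IP}}$: it has $N = k\lceil\log m\rceil$ columns, and by Lemma~\ref{lem:impl_envy} it is $(2,1)$-admissible (so $\kappa=2$, $\delta=1$) and implementable with complexity $M=m$. Next I would instantiate the Oracle-Based Generalized FTPL (Algorithm~\ref{alg:oracleftpl}) with the datasets $S_j$ implementing $\mGamma^{\textsc{IP}}$, the uniform distribution $D$ on $[0,1/\eta]$ from Theorem~\ref{thm:FTPL-U} with $\eta = \sqrt{\delta/((1+2\epsilon)T\kappa)}$, and $\epsilon = 1/\sqrt{T}$. By Theorem~\ref{thm:imp} this is an oracle-efficient implementation of Algorithm~\ref{alg:ftpl-approximate} with per-round complexity $\order(T+NM) = \order(T + k m\log m) = \poly(k,m,T)$, where the oracle (here a revenue-maximization oracle over $\P_m$) is counted as $\order(1)$.

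For the regret bound, on the rescaled unit-range problem Theorem~\ref{thm:FTPL-U} gives $\regret \le O\bigParens{N\sqrt{(1+2\epsilon)T\kappa/\delta}} + \epsilon T$; substituting $N = k\lceil\log m\rceil$, $\kappa=2$, $\delta=1$, and $\epsilon = 1/\sqrt T$ yields $O(k\log(m)\sqrt T) + \sqrt T = O(k\log(m)\sqrt T)$, and multiplying back by $R$ gives the claimed $O(kR\log(m)\sqrt T)$; one also checks $R\le n$ since each bidder is charged $\va\cdot\vq_i\le v_i(\vq_i)\le 1$. I do not expect a real obstacle: all the substantive work has been front-loaded into Lemma~\ref{lem:impl_envy} (whose proof, parallel to that of Lemma~\ref{lem:impl_individual}, is where the solvable linear system for the implementing weights is exhibited) and into the general machinery of Section~\ref{sec:ftpl}. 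The one point deserving care is verifying that the $1/R$ rescaling does not inflate the implementation complexity — it does not, because scaling revenues by $1/R$ scales each implementing dataset $S_j$ by $R$ without changing $|S_j|$, under the convention that numerical operations cost $\order(1)$.
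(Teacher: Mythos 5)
Your proposal is correct and follows essentially the same route as the paper: the paper's proof of Theorem~\ref{thm:envy} is exactly the combination of Lemma~\ref{lem:impl_envy} ($(2,1)$-admissibility and implementability with complexity $m$), Theorem~\ref{thm:FTPL-U} for the regret with $N=k\lceil\log m\rceil$, and Theorem~\ref{thm:imp} for oracle efficiency, together with the $1/R$ rescaling discussed at the start of Section~3 and the observation that $R\le n$. Your additional checks (substituting $\kappa=2$, $\delta=1$, $\epsilon=1/\sqrt{T}$, and noting that rescaling does not affect the implementation complexity) simply make explicit what the paper leaves implicit.
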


Now consider the class of  all envy-free item-pricing auctions where $a_\ell \in[0,1]$ is a real number and denote this class by $\mathcal{P}$.
We show that $\P_m$ is a discrete  ``cover'' for $\P$ when there is an \emph{unlimited supply} of all items ($s_\ell = \infty$ for all $\ell$) and the bidders have \emph{single-minded} or \emph{unit-demand} valuations. In the single-minded setting, each bidder $i$ is interested in one particular bundle of items $\hat\vq_i$. That is, $v_i(\vq_i) = v_i(\hat{\vq}_i)$ for all $\vq_i \supseteq \hat{\vq}_i$ and $0$ otherwise. In the unit-demand setting, each bidder $i$ has valuation $v_{i}(\vec e_\ell)$ for item $\ell$, and wishes to purchase \emph{at most one item}, i.e., item $ \argmax_{\ell} \left( v_i(\vec e_\ell) - a_\ell \right)$. We show that in both settings,
discretizing item prices cannot decrease the revenue by much
(see Appendix~\ref{app:P-grid}).

\begin{lemma}\label{lem:P-grid}
For any $\vec a \in \P$ there is $\vec a'\in \P_m$, such that for any unit-demand  valuation profile $\vec v$ with infinite supply (the digital goods setting), $\rev(\vec a, \vec v) - \rev(\vec a', \vec v) \leq nk/m$.
Similarly, there is  $\vec a'\in \P_m$, such that for any single-minded  valuation profile $\vec v$ with infinite supply, $\rev(\vec a, \vec v) - \rev(\vec a', \vec v) \leq nk^2/m$.
\end{lemma}

These discretization arguments together with Theorem~\ref{thm:envy}
yield the following result for the class of auctions $\P$
(using the fact that $R\leq n$, and setting $m = \sqrt{T}$ for the unit-demand and $m = k\sqrt{T}$ for the single-minded setting):

\begin{theorem}\label{thm:envy-unit}\label{thm:envy-signle}
Consider the online auction design problem
for the class of envy-free item-pricing auctions, $\P$, with unit-demand bidders with infinite supply (the digital goods setting).
Let $D$ be the uniform distribution as described in Theorem~\ref{thm:FTPL-U}.
Then,
the Oracle-Based Generalized FTPL algorithm with $D$ and
datasets  that implement $\mGamma^{\textup{\textsc{IP}}}$
 is oracle-efficient with per-round complexity
$\order\bigParens{T+k\sqrt{T}\log T}$
 and has regret
\[ \E\left[ \max_{\vec a \in \mathcal{P}}  \sum_{t=1}^T \rev( \vec a, \vec v_t) - \sum_{t=1}^T \rev(\vec a_t, \vec v_t) \right] \leq O\left(nk\sqrt{T}\log T \right).\]
Similarly, for single-minded bidders with infinite supply,
the Oracle-Based Generalized FTPL algorithm with $D$ and
datasets  that implement $\mGamma^{\textup{\textsc{IP}}}$
is oracle-efficient with per-round complexity
$\order\bigParens{T+k^2\sqrt{T}\log (kT)}$
 and has regret
\[ \E\left[ \max_{\vec a \in \mathcal{P}}  \sum_{t=1}^T \rev( \vec a, \vec v_t) - \sum_{t=1}^T \rev(\vec a_t, \vec v_t) \right] \leq O\left(n k\sqrt{T}\log(kT) \right).\]
\end{theorem}

\subsection{Level Auctions}
\label{sec:level}

We next consider the class of
\emph{level auctions} introduced by~\citet{morgenstern2015pseudo},
who show that these auctions can achieve $(1{-}\epsilon)$-approximate revenue maximization if the valuations of the bidders are drawn independently (but not necessarily identically) from a distribution,
thus approximating Myerson's optimal auction~\cite{Myerson1981}.
Using our tools, we derive oracle-efficient no-regret algorithms for this auction class.

The $s$-level auctions realize a single-item single-unit allocation as follows:
\begin{definition}\label{def:s-level}
Given $s\ge 2$, an $s$-level auction $\vec \theta$ is defined by $s$ thresholds for each bidder $i$, $0 \leq \theta^i_0  \leq \dots \leq \theta^i_{s-1}\le 1$.
For any bid profile $\vec v$,
we let $b_i^{\vec \theta}(v_i)$ denote the largest index $b$ such that $\theta^i_b\le v_i$, or $-1$ if $v_i < \theta^i_0$.
If $v_i<\theta^i_0$ for all $i$, the item is not allocated.
Otherwise, the item goes to the bidder with the largest index $b^{\vec \theta}_i(v_i)$, breaking ties in favor of bidders with smaller $i$.
The winner pays the price equal to the minimum bid that he could have submitted and still won the item.
\end{definition}

When it is clear from the context, we omit $\vec \theta$ in $b^{\vec \theta}_i(v_i)$ and write just $b_i(v_i)$.
In the remainder of the section we assume that $n\ge 2$.
For $n=1$, the level auctions are equivalent to the second-price auctions with reserves, so we can just appeal to the bounds from previous sections (specifically, Theorems~\ref{thm:I_m} and~\ref{thm:I} from \Sec{VCG}).

We consider two classes of
$s$-level auctions, $\mathcal{R}_{s,m}$ and $\mathcal{S}_{s,m}$, where $\mathcal{R}_{s,m}$ is the set of all auctions described by Definition~\ref{def:s-level} with
thresholds in the set $\{0,\, 1/m,\, \dotsc,\, m/m\}$ and $\mathcal{S}_{s,m}$ is the subset of $\mathcal{R}_{s,m}$ containing the auctions in which the thresholds for each bidder $i$ are distinct.

We first consider  $\mathcal{S}_{s,m}$. To construct an admissible
and implementable $\mGamma$ for $\mathcal{S}_{s,m}$,
we begin with a matrix that is clearly implementable, with each column implemented by a single bid profile, and then show its admissibility.

We consider the bid profiles in
which the only non-zero bids are $v_n = \ell/m$ for some $0\le\ell\le m$,
and $v_i = 1$ for a single bidder $i < n$.  Note that bidder $i$
wins the item in any such profile and pays $\theta^i_b$ corresponding to $b = \max\{0, b_n(v_n)\}$.
We define a matrix $\mGamma$ with one column for every bid profile of this form and an additional column for the bid profile $\vec e_n$, with the entries in each row consisting of the revenue of the corresponding auction on the given bid profile. Clearly, $\mGamma$ is implementable.
As for
admissibility, take $\vec \theta \in \S_{s, m}$ and the corresponding
row $\mGamma_{\vec \theta}$. Note that as $v_n = \ell/m$ increases
for $\ell=0, \dots, m$, there is an increase in
$b_n(\ell/m) = -1,0, \dotsc, s-1$, possibly skipping the initial $-1$.
As the level $b_n(v_n)$ increases, the auction revenue attains the values $\theta^i_0$, $\theta^i_1,\dotsc,\theta^i_{s-1}$, changing exactly at those points where $v_n$ crosses thresholds $\theta^n_1, \dots, \theta^n_{s-1}$.
Since any
two consecutive thresholds of $\vec \theta$ are different,  the
thresholds of $\theta^i_b$ for $b\ge 0$ and $\theta^n_b$ for $b\ge 1$ can be reconstructed by analyzing the revenue
of the auction and the values of $v_n$ at which the revenue changes. The remaining threshold $\theta^n_0$
is equal to the revenue of the bid profile $\vec v=\vec e_n$.
Since all of the parameters of the auction can be recovered from the entries in the row $\mGamma_{\vec \theta}$,
this shows
that any two rows of $\mGamma$ are different and $\mGamma$ is
$1/m$-admissible. This reasoning is summarized in the following construction and the corresponding lemma, formally proved
in Appendix~\ref{app:admissible-Sm}. See Figure~\ref{fig:imp_SL} for more intuition.

\begin{figure}
\centering
\includegraphics[width =  \textwidth]{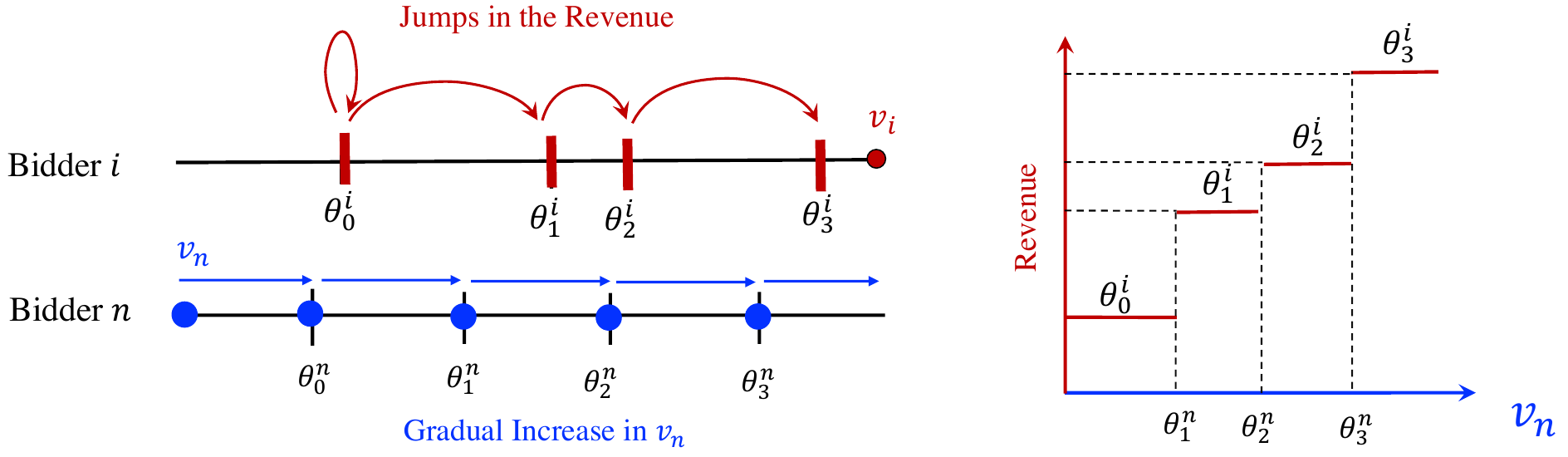}
\put(-416, 21){\small{$0$}}
\put(-209, 70){\small{$1$}}
\caption{Demonstration of how $\vec \theta$ can be reconstructed by its revenue on the bid profiles in $V = \set{ \vec v^{i,\ell}}_{i,\ell} \cup \{ \vec e_n \}$.
On the left, we show that as the value $v_{n}$ (blue circle) gradually increases from $0$ to $1$, the revenue of the auction (red vertical lines) jumps along the sequence of values $\theta^i_0, \theta^i_1, \dots,\theta^i_{s-1}$.
So by analyzing the revenue of an auction on all bid profiles $\set{ \vec v^{i,\ell}}_{i,\ell}$ one can reconstruct $\vec \theta^i$ for $i\neq n$ and $\theta^n_1, \dots, \theta^n_{s-1}$.
To reconstruct $\theta_0^n$, one only needs to consider the profile $\vec e_n$.
The figure on the right demonstrates the revenue of the same auction, where the horizontal axis is the value of $v_n$ and the vertical axis is the revenue of the auction when $v_i=1$ and all other valuations are $0$.
}
\label{fig:imp_SL}
\end{figure}

\paragraph{Construction of $\,\mGamma$:}
For $i\in\set{1,\dotsc,n-1}$ and $\ell\in\set{0,\dotsc,m}$, let $\vec v^{i,\ell}=\vec e_i + (\ell/m)\vec e_n$. Let $V = \set{ \vec v^{i,\ell}}_{i,\ell} \cup \{ \vec e_n \}$.
Let $\mGamma^{\textsc{SL}}$ be the matrix of size $|\S_{s,m}| \times\card{V}$ with entries indexed by $(\vec\theta,\vec v)\in\S_{s,m}\times V$,
such that
$\Gamma^{\textsc{SL}}_{\vec \theta, \vec v} = \rev( \vec \theta, \vec v)$.
\begin{lemma}\label{lem:admissible_Sm}
$\mGamma^{\textup{\textsc{SL}}}$  is   $1/m$-admissible and
  implementable with complexity $1$.
\end{lemma}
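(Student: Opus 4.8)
The plan is to handle implementability in one line and then devote the work to admissibility, which has two parts: the per-column bounds $(\kappa,\delta)=(m+1,\,1/m)$ and the distinctness of the rows.

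\emph{Implementability.} Each column of $\mGamma^{\textsc{SL}}$ is indexed by a single bid profile $\vec v\in V$, with entry $\rev(\vec\theta,\vec v)$ in row $\vec\theta$. Taking $S_j=\set{(1,\vec v_j)}$ for the $j$-th column immediately gives $\Gamma^{\textsc{SL}}_{\vec\theta j}-\Gamma^{\textsc{SL}}_{\vec\theta' j}=\rev(\vec\theta,\vec v_j)-\rev(\vec\theta',\vec v_j)$, which is exactly what Definition~\ref{def:imp} requires, with complexity $\max_j\card{S_j}=1$. For the per-column bounds I would show that every entry of $\mGamma^{\textsc{SL}}$ is one of the auction's own thresholds: on $\vec v^{i,\ell}=\vec e_i+(\ell/m)\vec e_n$ with $i<n$ bidder $i$ wins and pays the minimum bid at which he would still win, namely $\theta^i_{\max\{0,\,b^{\vec\theta}_n(\ell/m)\}}$, while on $\vec e_n$ the payment is $\theta^n_0$. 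Since every threshold lies in $\set{0,1/m,\dots,m/m}$, each column takes at most $m+1$ distinct values, and any two distinct values differ by at least $1/m$; together with row distinctness this yields $(m+1,\,1/m)$-admissibility.

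\emph{Row distinctness.} The key step is to recover the entire threshold vector of $\vec\theta$ from the single row $\mGamma^{\textsc{SL}}_{\vec\theta}$, which makes $\vec\theta\mapsto\mGamma^{\textsc{SL}}_{\vec\theta}$ injective. Fix $i<n$ and let $\ell$ range over $0,\dots,m$, so $v_n=\ell/m$ sweeps all multiples of $1/m$. Then $b^{\vec\theta}_n(\ell/m)$ is nondecreasing and, because the thresholds $\theta^n_1<\dots<\theta^n_{s-1}$ are themselves multiples of $1/m$, it attains every level in $\set{0,\dots,s-1}$ (after possibly an initial stretch at $-1$); correspondingly the entry $\rev(\vec\theta,\vec v^{i,\ell})$ steps through the strictly increasing sequence $\theta^i_0<\dots<\theta^i_{s-1}$. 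The distinct values appearing in these columns give $\theta^i_0,\dots,\theta^i_{s-1}$, and the smallest $\ell$ at which the entry first equals $\theta^i_b$ is $m\theta^n_b$, recovering $\theta^n_b$ for each $b\ge1$. Doing this for all $i<n$ recovers $\vec\theta^i$ for every $i\ne n$ together with $\theta^n_1,\dots,\theta^n_{s-1}$, and the lone entry in column $\vec e_n$ supplies the remaining threshold $\theta^n_0$. Hence two distinct auctions in $\S_{s,m}$ disagree on some coordinate of $V$, so the rows of $\mGamma^{\textsc{SL}}$ are distinct.

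\emph{Main obstacle.} I expect the delicate part to be the rigorous bookkeeping of ties and boundary cases in Definition~\ref{def:s-level}: when some bidder has a zero threshold, that bidder is nominally in the auction even with value $0$ and may influence who wins $\vec e_n$ or a profile $\vec v^{i,\ell}$ (relevant when $s$ is small), and one must confirm that the ``minimum winning bid'' price really equals the stated threshold in each such configuration, including when thresholds sit at the endpoints $0$ or $1$. Checking that the reconstruction argument above survives all of these degenerate cases is the part that has to be carried out carefully in the appendix.
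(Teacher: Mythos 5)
Your plan follows the paper's own route: the implementability and per-column bounds are handled exactly as in the paper, and your reconstruction of $\vec\theta$ from its row is precisely the intuition the paper gives alongside Figure~\ref{fig:imp_SL}; the appendix proof formalizes distinctness slightly differently (it locates the smallest level $b$ and largest bidder $i$ at which two auctions disagree and exhibits one distinguishing column, via three cases), but both arguments rest on the same payment computations $\rev(\vec\theta,\vec v^{i,\ell})=\theta^i_{\max\{0,\,b_n(\ell/m)\}}$ and $\rev(\vec\theta,\vec e_n)=\theta^n_0$.

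The obstacle you flag at the end, however, is not mere bookkeeping, and the deferred verification cannot be completed as literally stated. If some bidder $j<i$ has $\theta^j_0=0$, then on every profile $\vec v^{i,\ell}$ bidder $j$ sits at level $0$ with value $0$, and since ties are broken toward smaller indices, the minimum level at which bidder $i$ still wins is $\max\{1,\,b_n(\ell/m)\}$; hence the entry is $\theta^i_{\max\{1,\,b_n(\ell/m)\}}$ and the value $\theta^i_0$ never appears anywhere in the row (similarly $\vec e_n$ then yields $\theta^n_1$, not $\theta^n_0$). So $\theta^i_0$ is not recoverable, and two distinct vectors in $\S_{s,m}$ differing only in such a shielded $\theta^i_0$ (e.g., $\theta^1_0=0$ and $\theta^2_0\in\{0,1/m\}$ with everything else equal) produce identical rows of $\mGamma^{\textsc{SL}}$. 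This is not a defect of your route versus the paper's: the appendix proof asserts the same payment formula and has the same unaddressed cases (its Case~1 with $b=0$ and Case~3), and in fact no matrix implemented by bid profiles could separate such vectors, because one can check they induce identical allocations and payments on every profile, i.e., they are the same auction written with different parameters. The correct way to close your argument is therefore not to chase the degenerate cases but to observe that rows can coincide only for threshold vectors with identical revenue functions, and to state the lemma for auctions identified up to outcome equivalence (or with the degenerate parameterizations excluded); this loses nothing for the regret bound or the oracle reduction, which depend only on revenues.
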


Our next theorem is an immediate  consequence of Lemma~\ref{lem:admissible_Sm}, Theorems~\ref{thm:FTPL-U} and \ref{thm:imp}, and the fact that the revenue of the mechanism in each round is at most $1$.

\begin{theorem}\label{thm:S_m,s}
Consider the online auction design problem
for the class $\mathcal{S}_{s,m}$ of $s$-level auctions.
Let $D$ be the uniform distribution as described in Theorem~\ref{thm:FTPL-U}.
Then,
the Oracle-Based Generalized FTPL algorithm with $D$  and
datasets that implement $\mGamma^{\textsc{SL}}$
is oracle-efficient with per-round complexity
$\order(T+nm)$
and has regret
\[ \E\left[ \max_{\vec \theta \in \S_{s,m}}  \sum_{t=1}^T \rev (\vec \theta, \vec v_t) - \sum_{t=1}^T \rev(\vec \theta_t, \vec v_t) \right] \leq  O(n m^2 \sqrt{T}).\]
\end{theorem}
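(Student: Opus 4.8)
The plan is to derive Theorem~\ref{thm:S_m,s} as a direct instantiation of the general reduction (Theorems~\ref{thm:FTPL-U} and~\ref{thm:imp}), with Lemma~\ref{lem:admissible_Sm} supplying the only problem-specific input. First I would read off the parameters of $\mGamma^{\textsc{SL}}$. By construction its columns are indexed by $V=\set{\vec v^{i,\ell}}_{i,\ell}\cup\set{\vec e_n}$ with $i\in\set{1,\dotsc,n-1}$ and $\ell\in\set{0,\dotsc,m}$, so it has $N=\card{V}=(n-1)(m+1)+1=O(nm)$ columns; its entries are auction revenues, hence lie in $[0,1]$; and by Lemma~\ref{lem:admissible_Sm} it is $(\kappa,\delta)$-admissible with $\kappa=m+1$, $\delta=1/m$, and implementable with complexity $M=1$, each dataset $S_j$ being the singleton $\set{(1,\vec v)}$ for the corresponding $\vec v\in V$.

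Next I would invoke the regret bound. Since revenues lie in $[0,1]$, no rescaling is needed ($R=1$), so Theorem~\ref{thm:FTPL-U} applies directly with the prescribed uniform $D$. Plugging $N=O(nm)$, $\kappa=m+1$, $\delta=1/m$ into $O\bigParens{N\sqrt{(1+2\epsilon)T\kappa/\delta}}+\epsilon T$ gives $O\bigParens{nm\sqrt{T\cdot m(m+1)}}+\epsilon T=O(nm^2\sqrt{T})+\epsilon T$, and with the value $\epsilon=1/\sqrt{T}$ hard-coded in Algorithm~\ref{alg:oracleftpl} the additive term $\epsilon T=\sqrt{T}$ is dominated, leaving $O(nm^2\sqrt{T})$.

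Finally I would address efficiency via Theorem~\ref{thm:imp}. Because $\mGamma^{\textsc{SL}}$ is implementable with complexity $M=1$ and $D$ has non-negative support with $\mGamma^{\textsc{SL}}\ge 0$, the Oracle-Based Generalized FTPL of Algorithm~\ref{alg:oracleftpl} is an exact implementation of Algorithm~\ref{alg:ftpl-approximate}, making one oracle call per round with per-round running time $O(T+NM)=O(T+nm)=\poly(n,m,T)$; its output is distributed identically to Generalized FTPL with $\epsilon=1/\sqrt{T}$, so the regret bound of the previous paragraph transfers verbatim. Combining the regret and efficiency claims proves the theorem.

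The parameter bookkeeping is routine; the only points to watch are that $\kappa/\delta=m(m+1)=\Theta(m^2)$ is what produces the $m^2$ (rather than $m$) dependence, and that $N$ is linear, not quadratic, in $m$. The genuine content lies entirely in Lemma~\ref{lem:admissible_Sm} (proved in Appendix~\ref{app:admissible-Sm}), and within it in the claim that all rows of $\mGamma^{\textsc{SL}}$ are distinct: one shows every $\vec\theta\in\S_{s,m}$ is reconstructible from its revenue vector on $V$, because fixing $v_i=1$ for some $i<n$ and sweeping $v_n$ over $0,1/m,\dotsc,1$ makes the winner's payment step through $\theta^i_0,\dotsc,\theta^i_{s-1}$, changing precisely where $v_n$ crosses $\theta^n_1,\dotsc,\theta^n_{s-1}$, so the revenue pattern reveals all thresholds $\theta^i_b$ ($b\ge 0$) and $\theta^n_b$ ($b\ge 1$), while $\vec e_n$ recovers $\theta^n_0$. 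The ``no repeated thresholds'' hypothesis is exactly what guarantees that consecutive levels yield different revenues, hence both distinctness of rows and the $\delta=1/m$ spacing.
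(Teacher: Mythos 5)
Your proposal is correct and follows essentially the same route as the paper, which derives the theorem as an immediate consequence of Lemma~\ref{lem:admissible_Sm}, Theorems~\ref{thm:FTPL-U} and~\ref{thm:imp}, and the fact that single-item revenue is at most $1$ (so $R=1$ and no rescaling is needed). Your parameter bookkeeping ($N=O(nm)$, $\kappa=m+1$, $\delta=1/m$, $M=1$, per-round complexity $O(T+nm)$, and $\epsilon T=\sqrt{T}$ absorbed into the bound) matches the paper's intended instantiation exactly.
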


Next, we turn our attention to  the class of auctions $\mathcal{R}_{s,m}$ and construct an admissible and implementable matrix for this auction class. As a warm-up, we demonstrate how the structure of $\mathcal{R}_{s,m}$ differs from $\mathcal{S}_{s,m}$ and argue that $\mGamma^{\textsc{SL}}$ is not admissible for $\mathcal{R}_{s,m}$. Take $\vec v$ such that $v_i = 1$ and $v_n$ is some multiple of $1/m$. Similarly as before, as $v_n$ increases there is also an increase in $b_n(v_n)$ and the revenue $\smash{\theta^i_{b_n(v_n)}}$.  However, the thresholds of bidder
$n$ are no longer required to be distinct, so $b_n(v_n)$ can skip certain values, and as a result $\theta^i_b$ is not revealed for all $b$. In addition, when the thresholds of bidder $i$ are not distinct, we have that even when $b_n(v_n)$ strictly increases, there might be no increase in the revenue, and as a result $\theta^n_b$  cannot be reconstructed.

We next show how to construct an admissible matrix for $\mathcal{R}_{s,m}$.
In preparation for this construction, we first define an equivalence relation among auctions which puts auctions with distinct parameters but identical outcomes in the same equivalence class. Specifically, we say that two auctions $\vec \theta$ and $\vec \theta'$ are \emph{equivalent} if for any bid profile $\vec v$, $\rev(\vec \theta, \vec v) = \rev(\vec \theta', \vec v)$, i.e., they receive the same revenue on all bid profiles. Note that it is sufficient to limit our attention to a subset of auctions $\barCalR_{s,m}\subseteq \mathcal{R}_{s,m}$ that includes exactly one auction from each equivalence class. The regret with respect to $\barCalR_{s,m}$ is the same as the regret with respect to $\mathcal{R}_{s,m}$. Also, when applying Algorithm~\ref{alg:oracleftpl}, any result $x_t$ returned by an offline oracle for $\mathcal{R}_{s,m}$ can be replaced by the equivalent $\bar{x}_t\in\barCalR_{s,m}$ without affecting the algorithm's regret. This means that an offline oracle for $\mathcal{R}_{s,m}$ can be viewed as an offline oracle for $\barCalR_{s,m}$ and used together with an admissible and implementable matrix $\mGamma^\barRL$ for the auction class $\barCalR_{s,m}$
to solve the online optimization problem for the larger but equivalent auction class $\mathcal{R}_{s,m}$.
In the following, we introduce an implementable and admissible construction for $\barCalR_{s,m}$.

\paragraph{Construction of $\,\mGamma$:}
Let $V = \{\vec v \mid \vec v\in \{0,  1/m, \dots,  m/m\}^n \text{ and } \|\vec v\|_0 \leq 3 \}$, where $\lVert\cdot\rVert_0$ denotes the number of non-zero entries of a vector. Let $\mGamma^\barRL$ be the matrix of size $|\barCalR_{s,m} | \times \card{V}$ with entries indexed by $(\vec \theta, \vec v) \in \barCalR_{s,m} \times V$, such that $\Gamma^\barRL_{\vec \theta, \vec v} = \rev( \vec \theta, \vec v)$.
\begin{lemma}\label{lem:admissible_Rm}
$\mGamma^\barRL$  is   $1/m$-admissible and
  implementable with complexity $1$ for the class of auctions $\barCalR_{s,m}$.
\end{lemma}
\begin{proof}
Clearly, $\mGamma^\barRL$ is implementable with complexity $1$ for the class  of auctions $\barCalR_{s,m}$.
We show that $\mGamma^\barRL$ is also admissible.
Take any two auctions $\vec \theta$ and $\vec \theta'$ in $\barCalR_{s,m}$. Since these auctions are not equivalent, there is a bid profile $\vec v$ such that $\rev(\vec \theta, \vec v) \neq \rev (\vec \theta', \vec v)$. Without loss of generality, assume that $\rev(\vec \theta, \vec v) < \rev (\vec \theta', \vec v)$.
In what follows, we construct a corresponding bid profile $\vec v' \in V$ such that  $\rev(\vec \theta, \vec v') \neq \rev (\vec \theta', \vec v')$. Let $i$ and $i'$ be the winners in the auctions $\vec \theta$ and $\vec \theta'$, respectively. Let $j$ and $j'$ be the bidders with second highest bucketed bids, i.e., the bidders who set prices in auctions $\vec \theta$ and $\vec \theta'$, respectively.  We consider two cases:

If $i, i', j$, and $j'$ are not distinct, we construct the bid profile $\vec v'$
that is the same as $\vec v$ on indices $i$, $i'$, $j$, and $j'$, and is 0 otherwise.
Note that $\vec v'$ has at most $3$ non-zero elements, hence $\vec v' \in V$. Moreover,
the bids of the winners and the price setters are the same in both bid profiles, so the allocations and payments of both auctions remain the same. That is,
\[ \rev(\vec \theta, \vec v')= \rev(\vec \theta, \vec v) \neq \rev (\vec \theta', \vec v)  =\rev(\vec \theta', \vec v').
\]
\indent If $i, i', j$, and $j'$ are all distinct,  we construct the bid profile $\vec v'\in V$ that
is the same as $\vec v$ on indices $i$, $i'$, and $j'$, and is 0 otherwise (including on index $j$).
On auction $\vec \theta'$, since the bids of the winner and the price setter, $i'$ and $j'$ are the same in both bid profiles, the allocation and payment of both auctions remain the same, and we have
$\rev(\vec \theta', \vec v')= \rev(\vec \theta', \vec v)$.
On auction $\vec \theta$, the winner's bid remains the same and the price setter's bid is equal or lowered, so
$\rev(\vec \theta, \vec v') \leq  \rev(\vec \theta, \vec v)$. Hence,
\[ \rev(\vec \theta, \vec v') \leq  \rev(\vec \theta, \vec v) < \rev (\vec \theta', \vec v)  =\rev(\vec \theta', \vec v').
\]
Since the revenue of any auction is a multiple of $1/m$ and any two rows of $\mGamma^\barRL$ differ in at least one entry by $1/m$, we obtain that $\mGamma^\barRL$ is
$1/m$-admissible.
\end{proof}

Our next theorem is an immediate  consequence of Lemma~\ref{lem:admissible_Rm}, Theorems~\ref{thm:FTPL-U} and \ref{thm:imp}, the equality of regrets with respect to $\barCalR_{s,m}$ and $\mathcal{R}_{s,m}$,
and the fact that the revenue of the mechanism in each round is at most $1$. As discussed above, Algorithm~\ref{alg:oracleftpl} can use an offline oracle for $\mathcal{R}_{s,m}$ in place of
an offline oracle for $\barCalR_{s,m}$.
\begin{theorem}\label{thm:R_m,s}
Consider the online auction design problem
for the class $\mathcal{R}_{s,m}$ of $s$-level auctions.
Let $D$ be the uniform distribution as described in Theorem~\ref{thm:FTPL-U}.
Then,
the Oracle-Based Generalized FTPL algorithm with $D$  and
datasets that implement $\mGamma^\barRL$
is oracle-efficient with per-round complexity
$\order(T+ n^3m^3)$
and has regret
\[ \E\left[ \max_{\vec \theta \in \mathcal{R}_{s,m}}  \sum_{t=1}^T \rev (\vec \theta, \vec v_t) - \sum_{t=1}^T \rev(\vec \theta_t, \vec v_t) \right] \leq  O(n^3 m^4 \sqrt{T}).\]
\end{theorem}

\section{Stochastic Adversaries and Universal Benchmarks}
\label{sec:overallopt}

So far our results apply to general adversaries, where the sequence of adversary actions is arbitrary, and where we can achieve the payoff that is close to the payoff of the best action in our class. For auctions, we might be interested in the comparison with an arbitrary auction rather than an auction that is in our class. In this section, we show that in certain cases this can be achieved if we impose distributional assumptions on the sequence of the adversary.

We start with the easier setting where the actions of the adversary are drawn i.i.d. across all rounds  and then we analyze the slightly more complex setting where the actions of the adversary follow a fast-mixing Markov chain. For both settings we show that the average payoff of the learning algorithm is close to the optimal expected payoff, where expectation is with respect to the adversary's distribution in the i.i.d.\ setting and the adversary's stationary distribution in the Markovian setting.

In online auction design, we can combine these results with approximate optimality results of simple auctions, such as
$s$-level auctions or VCG with bidder-specific reserves, to prove universal optimality of our online learning algorithms in these distributional settings
rather than only the optimality among the auctions within the class over which our algorithms are learning.

\subsection{Stochastic Adversaries}

\paragraph{I.I.D.\ Adversary.} When
the adversary actions are drawn independently from the same unknown distribution $F$, we can use Chernoff-Hoeffding bound to show that the average payoff of a no-regret learner converges to the best payoff one could achieve in expectation over the distribution $F$ (see Appendix~\ref{app:iid} for the proof):

\begin{lemma}\label{lem:iid}
Suppose that $y_1,\ldots,y_T$ are i.i.d.\ draws from a distribution
$F$. Then for any no-regret learning algorithm, with probability at least $1-\delta$,
\[
\frac{1}{T}\sum_{t=1}^T \E_{x_t}[f(x_t,y_t)] \geq \sup_{x\in \X}
\E_{y\sim F}[f(x,y)]- \sqrt{\frac{\log(1/\delta)}{ 2 T}} - \frac{\regret}{T}.
\]
\end{lemma}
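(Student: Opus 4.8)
The plan is a standard online-to-batch conversion in the i.i.d.\ case; the only subtlety is to keep the two sources of randomness separate: the draw of $y_1,\dots,y_T\sim F$ and the learner's internal randomization. Since $\X$ is finite the supremum on the right-hand side is attained, so I would fix a comparator $x^\star\in\argmax_{x\in\X}\E_{y\sim F}[f(x,y)]$ and abbreviate $\mathrm{OPT}\coloneqq\E_{y\sim F}[f(x^\star,y)]=\sup_{x\in\X}\E_{y\sim F}[f(x,y)]$.

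First I would condition on the realized sequence $y_{1:T}$. Because the adversary is oblivious, the algorithm's no-regret guarantee holds for every fixed sequence (i.e.\ $\E[\max_{x\in\X}\sum_t f(x,y_t)-\sum_t f(x_t,y_t)\mid y_{1:T}]\le\regret$ for all $y_{1:T}$), hence for the realized one; taking expectation over only the learner's coins with $y_{1:T}$ held fixed, and using that $\max_{x\in\X}\sum_{t=1}^T f(x,y_t)$ is then deterministic, this yields
\[
\sum_{t=1}^T\E_{x_t}[f(x_t,y_t)]\;\ge\;\max_{x\in\X}\sum_{t=1}^T f(x,y_t)-\regret\;\ge\;\sum_{t=1}^T f(x^\star,y_t)-\regret .
\]
Next, since $x^\star$ is fixed and does not depend on the data, $f(x^\star,y_1),\dots,f(x^\star,y_T)$ are i.i.d.\ random variables in $[0,1]$ with mean $\mathrm{OPT}$, so the one-sided Chernoff--Hoeffding bound gives that, with probability at least $1-\delta$ over the draw of $y_{1:T}$,
\[
\frac1T\sum_{t=1}^T f(x^\star,y_t)\;\ge\;\mathrm{OPT}-\sqrt{\frac{\log(1/\delta)}{2T}}\;\ge\;\mathrm{OPT}-\sqrt{\frac{\log(2/\delta)}{2T}} .
\]
Dividing the first display by $T$ and intersecting it with this high-probability event then gives exactly the claimed inequality.

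The only point requiring care --- and hence the main, though mild, obstacle --- is the order in which the two arguments are applied: the per-sequence regret bound must be invoked \emph{after} conditioning on $y_{1:T}$, and the concentration bound must be invoked for the \emph{data-independent} action $x^\star$, so that the summands $f(x^\star,y_t)$ are genuinely i.i.d.\ (invoking Hoeffding for the algorithm's random actions $x_t$, which depend on $y_{1:t-1}$, would not be legitimate). Everything else is routine bookkeeping; in particular the $\log(2/\delta)$ in the statement is simply a harmless relaxation of the $\log(1/\delta)$ that plain Hoeffding delivers.
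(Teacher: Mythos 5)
Your proposal is correct and follows essentially the same route as the paper: bound the learner's payoff via the per-sequence regret guarantee against the fixed maximizer $x^\star$, then apply Chernoff--Hoeffding to the i.i.d.\ variables $f(x^\star,y_t)$ with $\epsilon=\sqrt{\log(2/\delta)/(2T)}$. Your remark that a one-sided bound already yields $\log(1/\delta)$ (the paper's $\log(2/\delta)$ comes from invoking the two-sided bound) is a harmless refinement and does not change the argument.
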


\paragraph{Markovian Adversary.} Suppose that the choice of the adversary $y_t$ follows a stationary and reversible Markov process based on some transition matrix $P(y,y')$ with a stationary distribution $F$. Moreover, consider the case where the set $\Y$ is finite. For any Markov chain, the spectral gap $\gamma$ is defined as the difference between the first and the second largest eigenvalue of the transition matrix $P$ (the first eigenvalue always being $1$). We will assume that this gap is bounded away from zero. The spectral gap of a Markov chain is strongly related to its mixing time. In this work we will specifically use the following result of \citet{Paulin2015}, which is a Bernstein concentration inequality for sums of dependent random variables sampled from a stationary Markov chain with spectral gap bounded away from zero. A Markov chain $y_1,\ldots, y_T$ is stationary if $y_1\sim F$ where $F$ is the stationary distribution, and is reversible if for any $y, y'$, $F(y) P(y, y') = F(y') P(y', y)$.
For simplicity, we focus on stationary chains, though similar results hold for non-stationary chains (see \citet{Paulin2015} and references therein).

\begin{theorem}[\citet{Paulin2015}, Theorem 3.8]\label{thm:conc-markov}
Let $X_1,\ldots,X_T$ be a stationary and reversible Markov chain on a state space $\Omega$, with stationary distribution $F$ and spectral gap $\gamma$. Let $g:\Omega\rightarrow [0,1]$, then
\begin{equation*}
\Pr \left[ \left| \frac{1}{T}\sum_{t=1}^T g(X_t)- \E_{X\sim F}[g(X)] \right|>\epsilon \right] \leq 2 \exp\left(-\frac{T \gamma \epsilon^2 }{4+10 \epsilon}\right).
\end{equation*}
\end{theorem}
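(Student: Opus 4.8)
\medskip\noindent\textbf{Proof proposal.}
The plan is first to recognize that this statement is not ours to establish: it is verbatim Theorem~3.8 of \citet{Paulin2015}, so strictly speaking the ``proof'' is a pointer to that reference. For completeness, though, here is the shape of the argument one would give. The standard route to a Bernstein-type tail bound for a stationary reversible chain is the \emph{spectral / operator} method via the Laplace transform. Fix $\lambda>0$, set $h=g-\E_{X\sim F}[g(X)]$ so that $h$ is centered with $\|h\|_\infty\le 1$ and $\mathrm{Var}_F(h)\le 1$, and let $G=\mathrm{diag}\bigl(e^{\lambda h(\omega)}\bigr)$ acting on $L^2(F)$. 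Using reversibility, the symmetrized operator $K_\lambda=G^{1/2}PG^{1/2}$ is self-adjoint, and a direct expansion of the chain's law gives $\E_F\bigl[e^{\lambda\sum_{i=1}^z h(X_i)}\bigr]\le c(\lambda)\,\|K_\lambda\|^{\,z}$ for an explicit $c(\lambda)$ depending only on $\lambda$ and $\|h\|_\infty$.

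The crux is a sharp bound on the top eigenvalue $\|K_\lambda\|$. Since $P$ is self-adjoint with spectral gap $\gamma$ (its eigenvalue $1$, with eigenfunction $\mathbf 1$, is isolated from the rest of the spectrum, which lies in $[-1,1-\gamma]$), analytic perturbation theory shows that the top eigenvalue of $K_\lambda$ is analytic in $\lambda$ near $0$, with expansion $1+\lambda\,\E_F[h]+\tfrac{\lambda^2}{2}\sigma^2_{\mathrm{as}}+O(\lambda^3)$, where $\sigma^2_{\mathrm{as}}=\mathrm{Var}_F(h)+2\sum_{k\ge1}\mathrm{Cov}_F\bigl(h(X_0),h(X_k)\bigr)$ is the asymptotic variance. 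The first-order term vanishes because $h$ is centered, and the resolvent bound coming from the gap gives $\sigma^2_{\mathrm{as}}\le (2/\gamma)\,\mathrm{Var}_F(h)\le 2/\gamma$; controlling the cubic-and-higher remainder uniformly over the relevant range of $\lambda$ then yields $\|K_\lambda\|\le\exp\bigl(O(\lambda^2/\gamma)\bigr)$ in that range.

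The remaining steps are routine: Markov's inequality gives $\Pr\bigl[\tfrac1z\sum_i h(X_i)>\epsilon\bigr]\le e^{-\lambda z\epsilon}\,\E_F\bigl[e^{\lambda\sum_i h(X_i)}\bigr]$, one optimizes over $\lambda$ to obtain the Bernstein form $\exp\bigl(-z\gamma\epsilon^2/(4+10\epsilon)\bigr)$, repeats the argument with $-h$ for the other tail, and applies a union bound to pick up the factor of $2$ and the absolute value; no burn-in term appears because $X_1\sim F$. The main obstacle — and the reason we simply invoke \citet{Paulin2015} rather than reproduce the argument — is the delicate bookkeeping of the remainder term in the eigenvalue expansion, since pinning down the explicit constants $4$ and $10$ (rather than a qualitatively similar but looser bound) requires Paulin's careful refinement of the Lezaud-type estimates.
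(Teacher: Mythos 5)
Your proposal handles this statement exactly as the paper does: the result is imported verbatim as Theorem~3.8 of \citet{Paulin2015}, and the paper offers no proof of its own, so deferring to that reference is the right move. Your supplementary sketch of the spectral/Laplace-transform argument is consistent with the standard route and correctly flags that the explicit constants $4$ and $10$ come from Paulin's refined analysis, so nothing further is needed.
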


Applying this result, we obtain the following lemma
(see Appendix~\ref{app:Markov} for the proof):

\begin{lemma}\label{lem:Markov}
Suppose that the adversary actions $y_1,\ldots,y_T$ form a stationary and reversible Markov
chain with stationary distribution $F$ and spectral gap $\gamma$. Then for any no-regret learning
algorithm, with probability at least $1-\delta$,
\[
\frac{1}{T}\sum_{t=1}^T \E_{x_t}[f(x_t,y_t)] \geq \sup_{x\in \X} \E_{y\sim F}[f(x,y)]- \sqrt{\frac{14 \log(2/\delta)}{ \gamma T}} - \frac{\regret}{T}.
\]
\end{lemma}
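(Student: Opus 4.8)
The plan is to mirror the i.i.d.\ argument of Lemma~\ref{lem:iid}, replacing the Chernoff--Hoeffding bound by the Markov-chain Bernstein inequality of Theorem~\ref{thm:conc-markov}. First I would fix an arbitrary action $x\in\X$ and apply Theorem~\ref{thm:conc-markov} to the sequence $X_t = y_t$ with the bounded test function $g(y) = f(x,y)\in[0,1]$, which yields that with probability at least $1-\delta$,
\[
\frac{1}{T}\sum_{t=1}^T f(x,y_t) \geq \E_{y\sim F}[f(x,y)] - \epsilon
\]
provided $2\exp\bigParens{-T\gamma\epsilon^2/(4+10\epsilon)}\le\delta$. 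Solving this for $\epsilon$ (and absorbing the lower-order $10\epsilon$ term in the denominator into constants, using that we only care about the regime where $\epsilon\le 1$ so $4+10\epsilon\le 14$) gives that $\epsilon=\sqrt{14\log(2/\delta)/(\gamma T)}$ suffices, which is exactly the deviation term appearing in the statement.

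Next I would chain this with the no-regret guarantee. By definition of $\regret$, for every realization of $y_1,\dots,y_T$ we have
\[
\sum_{t=1}^T \E_{x_t}[f(x_t,y_t)] \geq \max_{x\in\X}\sum_{t=1}^T f(x,y_t) - \regret \geq \sum_{t=1}^T f(x^\star, y_t) - \regret,
\]
where I take $x^\star \in \argmax_{x\in\X}\E_{y\sim F}[f(x,y)]$ (or a near-maximizer, if the supremum is not attained; since $\X$ is finite it is attained). Dividing by $T$ and combining with the concentration bound applied to this fixed $x^\star$ gives
\[
\frac{1}{T}\sum_{t=1}^T \E_{x_t}[f(x_t,y_t)] \geq \E_{y\sim F}[f(x^\star,y)] - \sqrt{\frac{14\log(2/\delta)}{\gamma T}} - \frac{\regret}{T} = \sup_{x\in\X}\E_{y\sim F}[f(x,y)] - \sqrt{\frac{14\log(2/\delta)}{\gamma T}} - \frac{\regret}{T},
\]
which is the claim.

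The only subtlety—and the main thing to be careful about—is that the concentration inequality must be invoked for a \emph{fixed} comparator $x^\star$ that does not depend on the realized chain; since $x^\star$ is defined purely through the stationary distribution $F$ (a deterministic object), this is legitimate and no union bound over $\X$ is needed. A secondary point is the algebraic step of inverting the tail bound: I would verify that requiring $\epsilon\le 1$ is harmless (for $\epsilon>1$ the bound is vacuous since payoffs lie in $[0,1]$), so that replacing $4+10\epsilon$ by the constant $14$ only loosens the inequality, and then the stated form of $\epsilon$ follows by taking logarithms and rearranging. Everything else is routine.
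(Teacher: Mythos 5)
Your proposal is correct and follows essentially the same route as the paper's proof: fix the stationary-optimal comparator $x^\star$, apply Theorem~\ref{thm:conc-markov} to $f(x^\star,\cdot)$, handle the $\epsilon>1$ case as vacuous and otherwise use $4+10\epsilon\le 14$ to invert the tail bound, and chain with the pointwise regret guarantee. No gaps; the subtleties you flag (fixed comparator, constant absorption) are exactly the ones the paper addresses.
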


\begin{example}[Sticky Markov Chain] Consider a Markov chain where at each round $y_t$ is equal to $y_{t-1}$ with some probability $\rho\geq 1/2$ and with the remaining probability $(1-\rho)$ it is drawn independently from some fixed distribution $F$. It is clear that the stationary distribution of this  chain is equal to $F$. We can bound the spectral gap of this Markov chain by the Cheeger bound~\cite{Cheeger1969}. The Cheeger constant for a finite state, reversible Markov chain is defined, and in this case bounded, as
\begin{align*}
\Phi =~& \min_{\states \subseteq \Omega: F(\states )\leq 1/2}\frac{\sum_{y\in \states } \sum_{y'\in \states ^c} F(y) P(y,y')}{F(\states )} = \min_{\states \subseteq \Omega: F(\states )\leq 1/2}\frac{\sum_{y\in \states } \sum_{y'\in \states ^c} F(y) (1-\rho) F(y')}{F(\states )}\\
=~& \min_{\states \subseteq \Omega: F(\states )\leq 1/2}(1-\rho) \frac{F(\states )\cdot F(\states ^c)}{F(\states )} =\min_{\states \subseteq \Omega: F(\states )\leq 1/2} (1-\rho) F(\states ^c) \geq \frac{1-\rho}{2}
\enspace.
\end{align*}
Moreover, by the Cheeger bound we know that $\gamma \geq \frac{\Phi^2}{2} \geq \frac{(1-\rho)^2}{8}$. Thus we get that for such a sequence of adversary actions, with probability $1-\delta$,
\[
\frac{1}{T}\sum_{t=1}^T \E_{x_t}[f(x_t,y_t)] \geq \sup_{x\in \X} \E_{y\sim F}[f(x,y)]- \frac{4}{1-\rho}\sqrt{\frac{7 \log(2/\delta)}{T}} - \frac{\regret}{T}
\enspace.
\]
\end{example}

\subsection{Implications for Online Optimal Auction Design}
\label{subsec:overallOPT}

Consider the online optimal auction design problem for a single item and $n$ bidders. Suppose that the adversary who picks the valuation profiles $\vec v_1,\ldots,\vec v_T$ is Markovian and that the stationary distribution $F$ of the chain is independent across players, i.e., the stationary distribution is a product distribution $F=F_1\times\dotsb\times F_n$.
Then we know that the optimal auction for this setting is \emph{Myerson's (optimal) auction} \cite{Myerson1981}, which translates the players' values based on some monotone function $\phi$, known as the \emph{ironed virtual value function}, and then allocates the item to the bidder with the highest virtual value, charging payments so that the mechanism is dominant-strategy truthful.
In this section, we use the Generalized FTPL algorithm to compete with Myerson's optimal auction in this Markovian environment.
This extends the prior work that only gave such guarantees for the i.i.d.\ setting.\footnote{If we know ahead of the time that the stationary distribution is symmetric across bidders, i.e., $F_1=F_2=\cdots=F_n$, then the optimal auction is a second-price auction with a reserve. In that case, we can appeal to \Thm{I} to obtain a better regret bound than those presented in this section.}

A natural approach for approximating the overall optimal auction in a Markovian environment is through level auctions. As discussed in Section~\ref{sec:level}, \citet{morgenstern2015pseudo} show that level auctions with repeated thresholds and arbitrarily fine discretization approximate Myerson's optimal auction in terms of revenue. In more detail, when distributions $F_i$ are bounded in $[1,H]$, the class of  auctions $\mathcal{R}_{s, \infty}$ with $s=\Omega\Parens{\frac1\epsilon+\log_{1+\epsilon}H}$ achieves expected revenue of at least a factor of $(1 - \epsilon)$ of the expected optimal revenue of Myerson's auction. Since the runtime and regret of our Generalized FTPL algorithm for the class of auctions $\mathcal{R}_{s, m}$ scale with the discretization level $m$, we require $m$ to be finite.
Therefore, we cannot use this characterization of \citet{morgenstern2015pseudo} directly.
Analogously to these results, we prove and use an additive approximation guarantee for the class of discretized level auctions $\mathcal{R}_{n/\epsilon, 1/\epsilon}$, showing that
\begin{equation} \label{eq:markov}
 \max_{\vec \theta \in \mathcal{R}_{n/\epsilon, 1/\epsilon}} \E_{\vec v \sim F}\left[ \rev(\vec \theta, \vec v)  \right] \geq \opt(F) - \epsilon,
\end{equation}
where $\opt(F)$ is the optimal revenue achievable by any dominant-strategy truthful mechanism for valuation vector distribution $F$.

 At a high level, we prove Equation~\eqref{eq:markov} using a three-step approach.
We first consider the discretization of the bidders' valuations.
That is, for any bid profile $\vec v$, we consider $\floors{\vec v}_\epsilon$ that denotes the bid profile where each entry of $\vec v$ is  rounded down to the nearest whole multiple of $\epsilon$.
As the first step, it is not hard to show  that for any $\vec \theta \in \mathcal{R}_{n/\epsilon, 1/\epsilon}$, $\rev(\vec \theta, \floors{\vec v}_\epsilon) =  \rev(\vec \theta, \vec v)$.
In the second step, we show that Myerson's optimal auction on the discretized valuations indeed belongs to the class $\mathcal{R}_{n/\epsilon, 1/\epsilon}$. That is, when $F'$ represents the distribution over valuations $\floors{\vec v}_\epsilon$, we have
\[ \max_{\vec \theta \in \mathcal{R}_{n/\epsilon, 1/\epsilon}} \E_{\vec v \sim F}\left[\rev(\vec \theta, \floors{\vec v}_\epsilon )  \right] = \opt(F').
\]
To prove this we use a characterization of Myerson's optimal auction on discrete (and not necessarily regular) distributions provided by \citet{elkind2007designing}. Much like level auctions, this characterization assigns a bucket number to each possible valuation of each bidder and allocates the item to the valuation with the highest bucket number. We show that one can create $n/\epsilon$ thresholds for each bidder to mimic this bucketing effect. Therefore, the class of auctions $\mathcal{R}_{n/\epsilon, 1/\epsilon}$ includes an optimal auction for the discretized valuations.
In the last step, we use a result of  \citet{DHP16} that establishes that $\opt(F') \geq \opt(F) - \epsilon$.
Putting the above ingredients together, Theorem~\ref{thm:markov-opt} shows that the Genearlized FTPL algorithm for $\mathcal{R}_{n/\epsilon, 1/\epsilon}$ approximates Myerson's auction for a given Markov chain.
We defer the detailed proof of Theorem~\ref{thm:markov-opt} to Section~\ref{app:opt:level} after giving an example of this setting.

\begin{theorem}[Competing with Universally Optimal Auction] \label{thm:markov-opt}
Consider the online auction design problem for a single item among $n$ bidders, where the sequence of valuation vectors $\vec v_1,\dotsc, \vec v_T$ is Markovian, following a stationary and reversible Markov process with a spectral gap of $\gamma>0$ and with a stationary distribution $F$ which is a product distribution across bidders, meaning $F=F_1\times\cdots\times F_n$. Then, the Generalized FTPL algorithm for  $\mathcal{R}_{n/\epsilon,1/\epsilon}$, where  $\epsilon = \Theta(n^{3/5}T^{-1/10})$,
guarantees the following bound with probability at least $1-\delta$:
\begin{align*}
\frac{1}{T}\sum_{t=1}^T \E \left[ \rev ({\vec \theta}_t, \vec v_t) \right] \geq \opt(F) - \sqrt{\frac{14 \log(2/\delta)}{\gamma T}} - O\left( n^{3/5} T^{ - 1/10}    \right).
\end{align*}

\end{theorem}

\begin{example}[Valuation Shocks] Consider the setting where valuations of players in the beginning of time are drawn from some product distribution $F=F_1\times\cdots\times F_n$. Then in each round with some probability $\rho$ the valuations of all players remain the same as in the previous round, while with some probability $1-\rho$, there is a shock in the market and the valuations of the players are re-drawn from distribution $F$. As we analyzed in the previous section, the spectral gap of the Markov chain defined by this problem is at least $\frac{(1-\rho)^2}{8}$. Thus we get a regret bound which depends inversely on the quantity $1-\rho$.

Hence, our online learning algorithm achieves revenue that is close to the optimal revenue achievable by any dominant-strategy truthful mechanism for the distribution $F$. More importantly, it achieves this guarantee even if the valuations of the players are not drawn i.i.d.\ at every iteration and even if the learner does not know what the distribution $F$ is, or when the valuations of the players are going to be re-drawn, or what the rate $\rho$ of shocks in the markets is.
\end{example}

\subsection{Proof of Theorem~\ref{thm:markov-opt}}
\label{app:opt:level}

\newcommand{\pos}{\mathrm{pos}}

We set out to prove that
\[ \frac 1T \sum_{t=1}^T \E\left[ \rev(\vec \theta_t, \vec v_t) \right] \geq \opt(F) - \epsilon - \sqrt{\frac{14 \log(2/\delta) }{\gamma T}} - O \left(n^{3/5} T^{-1/10}  \right).
\]
We first show that $\rev({\vec \theta}, \vec v) = \rev(\vec \theta, \floors{\vec v}_\epsilon)$ for any $\vec v$ and any $\vec \theta \in \mathcal{R}_{n/\epsilon, 1/\epsilon}$.
To start, note that the thresholds in $\vec \theta$ are multiples of $\epsilon$, so bid profiles $\vec v$ and $\floors{\vec v}_\epsilon$ are bucketed identically by $\vec \theta$.
Therefore, these two bid profiles have the same winner and the same payment amount (equal to the smallest bid with which the bidder still wins the item). Thus,
we have that $\rev({\vec \theta}, \vec v) =  \rev(\vec \theta, \floors{\vec v}_\epsilon)$,
and by Lemma~\ref{lem:Markov}, we can bound the revenue of the Generalized FTPL algorithm as
\begin{align*}
\frac{1}{T}
\sum_{t=1}^T \E_{{\vec\theta}_t}\BigBracks{\rev({\vec \theta}_t, \vec v_t)}
 & \geq \sup_{\vec\theta\in \mathcal{R}_{n/\epsilon, 1/\epsilon}} \E_{\vec v\sim F}\BigBracks{\rev(\vec \theta, \vec v)} - \sqrt{\frac{14 \log(2/\delta) }{\gamma T}} - O\Parens{n^{3/5} T^{-1/10}} \\
 & \geq \sup_{\vec\theta\in \mathcal{R}_{n/\epsilon, 1/\epsilon}} \E_{\vec v\sim F}\BigBracks{\rev\bigParens{\vec \theta, \floors{\vec v}_\epsilon}} - \sqrt{\frac{14 \log(2/\delta) }{\gamma T}} - O\Parens{n^{3/5} T^{-1/10}}
.
\numberthis
\label{eq:opti-1st}
\end{align*}

Next, we show that the expected revenue of the optimal auction $\vec \theta \in \mathcal{R}_{n/\epsilon, 1/\epsilon}$ on the rounded valuation $\vec v'=\floors{\vec v}_\epsilon$ is close to $\opt(F')$, where $F'$
is the product distribution of $\vec v'$.

\begin{lemma}\label{lem:R=OPT(F')}
Let $F' = F'_1 \times \dots \times F'_n$ be a product distribution over $\vec v'$ that are  supported on multiples of $\epsilon$. Let $\opt(F')$ be  the revenue of Myerson's optimal auction on $F'$. Then,
\[   \max_{\vec \theta \in \mathcal{R}_{n/\epsilon, 1/\epsilon}} \E_{\vec v'\sim F'} \left[  \rev(\vec \theta, \vec v') \right] = \opt(F').
\]
\end{lemma}

To prove Lemma~\ref{lem:R=OPT(F')}, we first use the following result of \citet{elkind2007designing} which describes Myerson's optimal auctions on arbitrary discrete distributions with a finite support.

\begin{lemma}[Theorem 3.2 of \citet{elkind2007designing}] \label{lem:elkind}
Consider a product distribution
$F' = F'_1 \times \dots \times F'_n$, where each $F'_i$ is a discrete distribution
supported on $m+1$ values indexed in an increasing order by $\ell\in\set{0,1,\dotsc,m}$.
There exist values $\bar{c}_i^\ell$ associated with all possible bid levels of each bidder, satisfying
$\bar{c}_i^{\ell} \leq \bar{c}_i^{\ell+1}$ for all $i\in[n]$ and $\ell\in\set{0,1,\dots,m-1}$, giving rise to the optimal auction for $F'$ according to the following mechanism: Sort bidders in the order of decreasing values $\bar{c}_i^{\ell}$ that correspond to their submitted bids, breaking ties in favor of bidders with smaller indices. Allocate the item to the first bidder whose corresponding $\bar{c}_i^{\ell}$ is non-negative and charge the bidder the minimum bid that he could have placed and still won the item.
\end{lemma}

\begin{proof}[Proof of Lemma~\ref{lem:R=OPT(F')}]
Since each bid is of the form $\ell\epsilon$ where $\ell\in\set{0,\dotsc,m}$, by Lemma~\ref{lem:elkind},
there exist suitable values $\bar{c}_i^{\ell}$ that implement the optimal auction. We will show that
there is a level auction $\vec \theta \in \mathcal{R}_{n/\epsilon, 1/\epsilon}$ that implements the same auction.

First sort the values $\bar{c}_i^{\ell}$ from smallest to largest, breaking ties in favor of smaller $\ell$ and larger $i $ (same as done in Elkind's  mechanism, but in the reverse order). Let $\pos(i, \ell)$ be as
follows: Assign $\pos(i, \ell) = -1$ for all leading negative $\bar{c}_i^{\ell}$ and then let $\pos(i, \ell)$ be the position of $\bar{c}_i^{\ell}$ in this
list, starting with index $0$ after the leading $-1$s. The allocation rule of Elkind's mechanism is equivalent to replacing each bid by its
corresponding $\pos(i, \ell)$ and giving the item to the bidder with the highest non-negative position. We now inductively construct a
level-auction mechanism $\vec \theta$ in which $b_i^{\vec \theta}(\ell \epsilon) = \pos(i, \ell)$.

Take any $i \in [n]$. We begin with an empty set of thresholds and introduce new thresholds gradually for $\ell=0,\dotsc,m$.
For ease of exposition let $\pos(i, -1) = -1$ and  $b_i^{\vec \theta}(-\epsilon) = -1$. This guarantees that
$b_i^{\vec \theta}(\ell \epsilon) = \pos(i, \ell)$ holds for $\ell=-1$. Now assume that the thresholds introduced so far guarantee
that $b_i^{\vec \theta}(\ell \epsilon) = \pos(i, \ell)$ holds for $\ell<k$. For $\ell=k$, we only create additional thresholds with the value $k/m$, which will preserve this
property for $\ell<k$, but also allows us to extend it to $\ell=k$. Specifically, we add $\pos(i, k)- \pos(i, k- 1)$ many thresholds with the value $k/m$.
Using the induction hypothesis, we have
\[ b_i^{\vec \theta}(k \epsilon)  = b_i^{\vec \theta}\bigParens{(k-1)\epsilon} + \pos(i, k) - \pos(i, k- 1) = \pos(i,k).
\]
Hence, by induction $b_i^{\vec \theta}(\ell \epsilon) = \pos(i, \ell)$ for all $\ell$.
This completes the claim that we have created a level auction $\vec \theta\in \mathcal{R}_{n/\epsilon, 1/\epsilon}$ that implements the optimal auction described in Lemma~\ref{lem:elkind}.
\end{proof}

We next use the result of \citet{DHP16} that shows that $\opt(F')$ is almost as large as $\opt(F)$.

\begin{lemma}[Lemma 4.3 of \citet{DHP16arXiv}]
Given any product distribution $F = F_1 \times \dots \times F_n$, let $F'$ be the distribution obtained by rounding down the values from $F$  to the nearest multiple of $\epsilon$. Then $\opt(F') \geq \opt(F) - \epsilon$.
\end{lemma}

Combining the above proves Theorem~\ref{thm:markov-opt}.

\section{Contextual Online Learning: Learning with Side Information}
\label{sec:contexts}

We now consider a generalization of the online learning setting of Section~\ref{sec:ftpl}, where in each round the learner also observes some side information,
called \emph{context}. The context $\sigma_t$ in round $t$ comes from some abstract context space $\Sigma$. The learner wants to use this contextual information to improve his performance. Specifically, the learner's goal is to compete with a set of policies $\Pi$, where each policy $\pi\in\Pi$ is a mapping from contexts $\sigma\in \Sigma$ to actions $\pi(\sigma)\in \X$.

The adversary, in each round, chooses both a context $\sigma_t\in \Sigma$ and an action
$y_t\in \Y$. The payoff of the learner if he chooses a policy $\pi_t$ is then $f(\pi_t(\sigma_t),y_t)$. The regret of an algorithm is given by:
\[
\regret = \E\left[ \max_{\pi \in \Pi} \sum_{t=1}^T f(\pi(\sigma_t), y_t) -
  \sum_{t=1}^T f(\pi_t(\sigma_t),
  y_t) \right] .
\]
An offline oracle in the contextual setting is an algorithm that takes as input a distribution over pairs of contexts and adversary actions and returns the best policy in the policy space $\Pi$ for this distribution.

The contextual learning problem, described by $\X$, $\Sigma$, $\Pi$, $\Y$ and $f$, can be viewed as a specific instance of a (non-contextual) learning problem described in Section~\ref{sec:ftpl}, with the learner's action space $\X_c = \Pi$, adversary action space $\Y_c=\Sigma \times \Y$ and payoff function $f_c(\pi,(\sigma,y))=f(\pi(\sigma),y)$. Moreover, any offline oracle for the contextual problem $(\X,\Sigma,\Pi,\Y,f)$ is also an offline oracle for the non-contextual problem $(\X_c,\Y_c,f_c)$. Below,
we show how to obtain oracle-efficient no-regret algorithms for $(\X_c,\Y_c,f_c)$ if we have access to an admissible and implementable matrix $\mGamma$ for
the simpler non-contextual problem $(\X,\Y,f)$.
We will conflate the problems $(\X,\Sigma,\Pi,\Y,f)$ and $(\X_c,\Y_c,f_c)$, and jointly refer
to them as the \emph{contextual problem}, and refer to the simpler problem $(\X,\Y,f)$ as the \emph{non-contextual problem}.

\begin{example}[Contextual Online Auction Design] In contextual online auction design, the auctioneer gets to see some side information about the bidders before they place their bids. The goal of the auctioneer is to compete with a set of policies that map such contextual information to an auction. Given a class of auctions~$\A$, a set of possible valuations $\V$, a policy class $\Pi$ and an unknown sequence of contexts and bid profiles $(\sigma_1,\vec v_1), \dots, (\sigma_T,\vec v_T)\in \Sigma\times\V^n$, the goal of an online algorithm is to pick in each round a policy $\pi_t\in \Pi$ such that the algorithm's total revenue is close to the revenue of the best policy $\pi\in \Pi$ in hindsight:
\[ \regret = \E\left[ \max_{\pi \in \Pi}  \sum_{t=1}^T \rev (\pi(\sigma_t), \vec v_t) - \sum_{t=1}^T \rev (\pi_t(\sigma_t), \vec v_t) \right] \leq  o(T).
\]
\end{example}

\subsection{From Non-Contextual to Contextual Learning for Separable Policy Spaces}
\label{sec:separable}

Assume we are given an admissible and implementable matrix $\mGamma$
for $(\X,\Y,f)$.
We next show how to construct an admissible and implementable matrix $\mGamma^\sep$ for the contextual problem $(\X_c,\Y_c,f_c)$. Our construction builds on the notion of a separator of a policy space, also known as the \emph{universal identification sequence}, introduced by~\citet{goldman1993exact}; the term \emph{separator} is due to \citet{Syrgkanis2016}.

\begin{definition}[Separator] A set $\sep\subseteq \Sigma$ of contexts is a separator for a policy space $\Pi$, if for any two policies $\pi,\pi'\in \Pi$, there exists a context $\sigma\in \sep$ such that $\pi(\sigma)\neq \pi'(\sigma)$.
\end{definition}

\begin{definition}[Contextual $\sep$-extension of a matrix $\mGamma$] For any matrix $\mGamma \in [0,1]^{|\X|\times N}$, we define its contextual $\sep$-extension $\mGamma^\sep$ as an $|\Pi|\times (|\sep|\cdot N)$ matrix, where each column $j_c$ is associated with a pair $(\sigma,j)$ for $\sigma \in \sep$ and $j\in [N]$ and the entry of $\mGamma^\sep$ at coordinates $(\pi,(\sigma,j))$ is equal to the entry of matrix $\mGamma$ at coordinates $(\pi(\sigma),j)$, i.e.,
\begin{equation}
\Gamma_{\pi,(\sigma,j)}^\sep = \Gamma_{\pi(\sigma), j}.
\end{equation}
\end{definition}

The next two lemmas, whose proofs appear in Appendices~\ref{app:contexts-admissible} and \ref{app:contexts-implementable}, show that if a matrix is admissible and implementable for
$(\X,\Y,f)$, then its contextual extension with respect to a separator is admissible and implementable for $(\X_c,\Y_c,f_c)$.

\begin{lemma}[Contextual Admissibility] \label{lem:contextual_admissibility}
If $\mGamma$ is a $\delta$-admissible matrix for $(\X,\Y,f)$ and $\sep$ is a separator for the policy space $\Pi$, then the contextual $\sep$-extension $\mGamma^\sep$ is
$\delta$-admissible for the contextual problem $(\X_c,\Y_c,f_c)$.
\end{lemma}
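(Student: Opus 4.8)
The plan is to verify directly the three defining properties of $(\kappa,\delta)$-admissibility for the contextual extension $\mGamma^\sep$, each of which follows from the corresponding property of $\mGamma$ together with the separator property of $\sep$.

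First I would establish that the rows of $\mGamma^\sep$ are distinct. Fix two policies $\pi\neq\pi'\in\Pi$. Since $\sep$ is a separator for $\Pi$, there is a context $\sigma\in\sep$ with $\pi(\sigma)\neq\pi'(\sigma)$. Because $\mGamma$ is admissible, its rows are distinct, so $\mGamma_{\pi(\sigma)}$ and $\mGamma_{\pi'(\sigma)}$ differ in some coordinate $j\in[N]$, i.e., $\Gamma_{\pi(\sigma),j}\neq\Gamma_{\pi'(\sigma),j}$. Unfolding the definition of the extension, $\Gamma^\sep_{\pi,(\sigma,j)}=\Gamma_{\pi(\sigma),j}\neq\Gamma_{\pi'(\sigma),j}=\Gamma^\sep_{\pi',(\sigma,j)}$, so the rows of $\mGamma^\sep$ indexed by $\pi$ and $\pi'$ differ in the column associated with $(\sigma,j)$.

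Second I would bound the number of distinct values and their minimum gap within each column of $\mGamma^\sep$. A column of $\mGamma^\sep$ is indexed by a pair $(\sigma,j)$ with $\sigma\in\sep$, $j\in[N]$, and its set of entries is $\set{\Gamma_{\pi(\sigma),j}:\pi\in\Pi}$, which is a subset of the set of entries $\set{\Gamma_{x,j}:x\in\X}$ appearing in column $j$ of $\mGamma$. Since $\mGamma$ is $(\kappa,\delta)$-admissible, this latter set contains at most $\kappa$ distinct values and any two of its distinct values differ by at least $\delta$; both properties are inherited by every subset, so they hold for column $(\sigma,j)$ of $\mGamma^\sep$. Combining this with the previous paragraph shows $\mGamma^\sep$ is $(\kappa,\delta)$-admissible for the contextual learning setting.

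I expect no real obstacle here: the argument is a direct unfolding of the definitions. The only step requiring a moment of thought is converting the separator guarantee ``$\pi(\sigma)\neq\pi'(\sigma)$'' into ``some column of $\mGamma^\sep$ distinguishes $\pi$ from $\pi'$'', which is exactly where row-distinctness of $\mGamma$ is used; the parameter bounds $\kappa$ and $\delta$ then transfer for free since each column of $\mGamma^\sep$ is a copy of some column of $\mGamma$ restricted to the actions in the image of $\Pi$.
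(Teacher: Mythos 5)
Your proof is correct and follows essentially the same route as the paper's: row-distinctness via the separator plus admissibility of $\mGamma$, and the $(\kappa,\delta)$ bounds by observing that each column of $\mGamma^\sep$ takes its values from a subset of the values in the corresponding column of $\mGamma$. No gaps.
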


\begin{lemma}[Contextual Implementability] \label{lem:contextual_imp}
If matrix $\mGamma$ is implementable with complexity $M$ for $(\X,\Y,f)$, then matrix $\mGamma^\sep$ is implementable with complexity $M$ for the contextual problem $(\X_c,\Y_c,f_c)$.
\end{lemma}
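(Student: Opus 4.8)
The plan is to take the datasets $S_j$, $j\in[N]$, that implement $\mGamma$ in the original learning setting and, for each column $j_c$ of $\mGamma^\sep$ — which is indexed by a pair $(\sigma,j)$ with $\sigma\in\sep$ and $j\in[N]$ — to build a corresponding dataset $S^\sep_{(\sigma,j)}$ in the contextual setting by simply pairing every adversary action appearing in $S_j$ with the fixed context $\sigma$. Concretely, if $S_j=\set{(w_\ell,y_\ell)}_\ell$, define
\[
S^\sep_{(\sigma,j)} \coloneqq \bigSet{\,(w_\ell,\,(\sigma,y_\ell))\,}_\ell,
\]
which is a weighted dataset of actions in the contextual adversary space $\Y_c=\Sigma\times\Y$ of size $|S_j|\le M$. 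This immediately gives the complexity bound $\max_{(\sigma,j)}|S^\sep_{(\sigma,j)}|\le M$, so the only thing left to check is the defining identity of implementability.

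The key step is to verify, for any two policies $\pi,\pi'\in\Pi$ and any column $(\sigma,j)$, that
\[
\Gamma^\sep_{\pi,(\sigma,j)} - \Gamma^\sep_{\pi',(\sigma,j)}
= \sum_{(w,z)\in S^\sep_{(\sigma,j)}} w\bigParens{f_c(\pi,z)-f_c(\pi',z)}.
\]
Unwinding the definitions, the left-hand side equals $\Gamma_{\pi(\sigma),j}-\Gamma_{\pi'(\sigma),j}$ by the definition of the $\sep$-extension, and on the right-hand side each term is $w_\ell\bigParens{f_c(\pi,(\sigma,y_\ell))-f_c(\pi',(\sigma,y_\ell))} = w_\ell\bigParens{f(\pi(\sigma),y_\ell)-f(\pi'(\sigma),y_\ell)}$ by the definition of $f_c$. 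Hence the right-hand side is exactly $\sum_{(w,y)\in S_j} w\bigParens{f(\pi(\sigma),y)-f(\pi'(\sigma),y)}$, which equals $\Gamma_{\pi(\sigma),j}-\Gamma_{\pi'(\sigma),j}$ by the assumed implementability of $\mGamma$ applied to the pair of original actions $x=\pi(\sigma)$ and $x'=\pi'(\sigma)$ in $\X$. This matches the left-hand side, completing the verification.

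There is no real obstacle here — the argument is a direct unfolding of the two definitions, and the main (minor) point to be careful about is that implementability of $\mGamma$ is a statement quantified over \emph{all} pairs $x,x'\in\X$, so it can be invoked with the particular pair $\pi(\sigma),\pi'(\sigma)$ regardless of whether those actions are actually in the image of some policy evaluated at $\sigma$. Combined with Lemma~\ref{lem:contextual_admissibility}, this means that whenever $\mGamma$ is admissible and implementable and $\Pi$ has a small separator $\sep$, the $\sep$-extension $\mGamma^\sep$ inherits both properties (with $N$ replaced by $|\sep|\cdot N$ and the same $\kappa,\delta,M$), so Generalized FTPL with $\mGamma^\sep$ yields an oracle-efficient no-regret algorithm for the contextual setting via Theorems~\ref{thm:FTPL-U} and~\ref{thm:adm:imp}.
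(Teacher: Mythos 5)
Your proposal is correct and follows essentially the same route as the paper's proof: annotate each weighted point $(w,y)\in S_j$ with the fixed context $\sigma$ to form the dataset for column $(\sigma,j)$, and verify the implementability identity by unfolding the definitions of $\mGamma^\sep$ and $f_c$ and invoking implementability of $\mGamma$ at the pair $\pi(\sigma),\pi'(\sigma)$. No gaps.
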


Given the above two lemmas, we can now invoke Corollary~\ref{cor:oracle-ftpl} to obtain oracle-efficient online learning for the contextual problem $(\X_c,\Y_c,f_c)$:
\begin{corollary}
\label{cor:oracle-ftpl:sep}
Assume that $\mGamma\in[0,1]^{\card{\X}\times N}$ is implementable with complexity $M$ and $\delta$-admissible for $(\X,\Y,f)$. Let $\sep$ be a separator of the policy space $\Pi$ of size $d$. Let $D$ be the uniform distribution on $[0, 1/\eta]$ for $\eta  =  \delta /\sqrt{2T(1+2T^{-1/2})(1+\delta)}$.
Then Algorithm~\ref{alg:oracleftpl} applied to the contextual problem $(\X_c,\Y_c,f_c)$ with the contextual $\sep$-extension $\mGamma^\sep$ of matrix $\mGamma$ achieves the following regret:
\[
\regret \leq  O(N d \sqrt{T}/\delta).
\]
Moreover, if there exists a contextual offline oracle $\opt(\cdot, \frac{1}{\sqrt{T}})$ which runs in time $\poly(N,d,M,T)$, then Algorithm \ref{alg:oracleftpl} can be implemented efficiently in time $\poly(N,d,M,T)$.
\end{corollary}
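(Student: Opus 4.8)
The plan is to observe that the contextual learning setting is a special case of the abstract online learning setting of Section~\ref{sec:ftpl} — as already noted in this section, with learner action space $\X_c=\Pi$, adversary action space $\Y_c=\Sigma\times\Y$, and payoff $f_c(\pi,(\sigma,y))=f(\pi(\sigma),y)$ — so that Algorithm~\ref{alg:oracleftpl} together with Theorems~\ref{thm:FTPL-U} and~\ref{thm:imp} applies verbatim once we exhibit a translation matrix for the contextual setting that is both admissible and implementable. The matrix $\mGamma^\sep$ is exactly such a matrix: by Lemma~\ref{lem:contextual_admissibility} it is $(\kappa,\delta)$-admissible (the parameters $\kappa$ and $\delta$ are inherited unchanged from $\mGamma$), and by Lemma~\ref{lem:contextual_imp} it is implementable with complexity $M$. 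Its entries lie in $[0,1]$ and $D$ has non-negative support, so we are in the non-negative-perturbation case and only need $\mGamma^\sep$ (not $-\mGamma^\sep$) to be implementable. The only parameter that changes relative to $\mGamma$ is the number of columns, which for $\mGamma^\sep$ is $N'=\abs{\sep}\cdot N=dN$.

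First I would plug $\mGamma^\sep$ and the stated distribution $D$ into the regret analysis of Generalized FTPL. The corollary's choice $\eta=\sqrt{\delta/(2T\kappa)}$ differs from the one prescribed in Theorem~\ref{thm:FTPL-U} only by having the constant $2$ in place of $1+2\epsilon$, and for $\epsilon=1/\sqrt T$ with $T$ large enough we have $1+2\epsilon\le 2$, so this $D$ is at least as dispersed as Lemma~\ref{lem:stability-approx} requires. Re-running the short derivation of Theorem~\ref{thm:FTPL-U} from Lemmas~\ref{lem:stab+noise_approx} and~\ref{lem:stability-approx} with $N'=dN$ columns and $1/\eta=\sqrt{2T\kappa/\delta}$ bounds the stability term and the perturbation term each by $O\bigParens{dN\sqrt{T\kappa/\delta}}$, giving
\[
\regret\le O\bigParens{dN\sqrt{T\kappa/\delta}}+\epsilon T=O\bigParens{dN\sqrt{T\kappa/\delta}}+\sqrt T.
\]
Since $d,N\ge 1$, $\kappa\ge 1$, and $\delta\le 1$, we have $dN\sqrt{\kappa/\delta}\ge 1$, so the first term dominates the $\sqrt T$ term and $\regret\le O(dN\sqrt{T\kappa/\delta})$ as claimed.

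For the efficiency claim I would apply Theorem~\ref{thm:imp} to the contextual learning setting: since $\mGamma^\sep$ is implementable with complexity $M$ and has $N'=dN$ columns, Algorithm~\ref{alg:oracleftpl} is an oracle-efficient implementation of Generalized FTPL with per-round complexity $\order\bigParens{T+N'M}=\order\bigParens{T+dNM}$, making a single call per round to the contextual offline oracle $\opt(\cdot,\tfrac{1}{\sqrt T})$ on a weighted dataset of size $\order(T+dNM)$. Hence, if this oracle runs in time polynomial in its input size and in $N,M,T$, the total running time over all $T$ rounds is $\poly(N,d,M,T)$.

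I do not expect a genuine obstacle here: the substance of the argument resides in Lemmas~\ref{lem:contextual_admissibility} and~\ref{lem:contextual_imp} (proved in the appendices), and the corollary is essentially a plug-in. The only point requiring care is the parameter bookkeeping — tracking that $\kappa$ and $\delta$ are unchanged while $N$ is replaced by $dN$, reconciling the constant in the corollary's choice of $\eta$ with the one in Theorem~\ref{thm:FTPL-U}, and confirming that the additive $\epsilon T=\sqrt T$ error term is dominated by the main regret term.
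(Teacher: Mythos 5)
Your proposal is correct and follows essentially the same route as the paper, which obtains the corollary by combining Lemma~\ref{lem:contextual_admissibility} and Lemma~\ref{lem:contextual_imp} with Theorems~\ref{thm:FTPL-U} and~\ref{thm:imp} applied to the contextual setting with $N$ replaced by $dN$. Your extra bookkeeping (the constant $2$ versus $1+2\epsilon$ in the choice of $\eta$, and the domination of the $\epsilon T=\sqrt{T}$ term) is consistent with what the paper leaves implicit.
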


Several examples of policy spaces with small separators, i.e., separators whose size is poly-logarithmic in the number of contexts, are presented by \citet{goldman1993exact} and \citet{Syrgkanis2016}. In their examples, contexts are binary vectors and the task is binary classification; the examples include Boolean conjunctions, Boolean disjunctions, logarithmic-depth read-once majority formulas, and logarithmic-depth read-once positive NAND formulas.
We next present an online auction design example where a Boolean disjunction is used to discriminate between two vectors of reserve prices.

\begin{example}[Price Discrimination based on ORs of Binary Features]
\label{ex:or}
We consider the class of single-item second price-auctions with discretized bidder-specific reserves, $\I_m$, where the arriving bidders are described by binary feature vectors in $\{0,1\}^K$, constituting the context $\sigma$. In \Sec{VCG}, our goal was to identify a single vector of reserves $\vec r\in\I_m$ that would optimize the revenue. Here, we would like to achieve a higher revenue by splitting the market into two segments according to the contextual information and using a separate vector of reserves for each market segment. Formally, we define the set of policies $\Pi = \{\pi_{\vec r_0, \vec r_1,S} \mid \vec r_0 \in \I_m, \vec r_1\in \I_m, S\subseteq [K]\}$, where $\vec r_0$ and $\vec r_1$ are the reserve vectors for the two respective segments and $S$ determines the two segments as follows: $\Sigma_0=\Set{\sigma\bigm{\vert}\bigvee_{i\in S}\sigma_i = 0}$, $\Sigma_1=\Set{\sigma\bigm{\vert}\bigvee_{i\in S}\sigma_i = 1}$. Thus,
\[
  \pi_{\vec r_0,\vec r_1, S}(\sigma)=
\begin{cases}
  \vec r_0 & \text{if $\sigma\in\Sigma_0$}
\\
  \vec r_1 & \text{if $\sigma\in\Sigma_1$.}
\end{cases}
\]
Our contextual online learning algorithm over this policy space competes with a variety of
market splitting strategies. For instance, we can do as well as an auctioneer that picks some high-reserve vector $\vec r_1$ and a low-reserve vector $\vec r_0$, learns which of the Boolean coordinates are indicative of a high valuation for the players, and
when any one of them is 1, it uses the high reserves $\vec r_1$, and otherwise the low reserves $\vec r_0$.

To show that we can indeed efficiently compete with the class $\Pi$ we just
need to show that it has a small separator.
We argue that the set $V = \{e_i \mid  i\in [K]\} \cup \{\boldsymbol{1}, \boldsymbol{0}\}$ is a separator for $\Pi$.
Consider any two policies
$\pi_{\vec r_0, \vec r_1, S}$ and $\pi_{\vec r'_0, \vec r'_1, S'}$ that differ on some context $\sigma$. Then these policies must also differ on one of the following $\sigma'\in V$:
\begin{align*}
 \sigma' = \begin{cases}
	\boldsymbol{1} & \text{if } \bigvee_{i\in S}\sigma_i = \bigvee_{i\in S'}\sigma_i =1
\\
	\boldsymbol{0} & \text{if } \bigvee_{i\in S}\sigma_i = \bigvee_{i\in S'}\sigma_i =0
\\
	\vec e_j, \text{ for } j\in S\setminus S' & \text{if } \bigvee_{i\in S}\sigma_i = 1,\,\bigvee_{i\in S'}\sigma_i =0
\\
	\vec e_j, \text{ for } j\in S'\setminus S & \text{if } \bigvee_{i\in S}\sigma_i = 0,\,\bigvee_{i\in S'}\sigma_i =1.
 \end{cases}
\end{align*}

Therefore, the size of the separator is $d=O(K)$, while the size of the policy space is exponential in $K$.
Similar analysis can be done when the policies are using the AND of Boolean features rather than the OR.
\end{example}

\subsection{Transductive Contextual Learning}
\label{sec:context-transductive}

We now consider the case where the learner knows a priori the set of contexts that could potentially arise, i.e., we assume that he knows that all contexts $\sigma_1,\ldots,\sigma_T$ come from a set $\tsep$, which we will refer to as the \emph{transductive set}. We do not require that the learner knows the multiplicity of each context or the sequence under which contexts arrive. Moreover, the set $\tsep$ could be of size
as large as $T$, and in fact our analysis allows $\card{\tsep}=o(T^2)$.

In this setting, ignoring some technical details, we can treat the transductive set $\tsep$ as a separator defined in the previous section. However, using the analysis of the previous section, we would obtain the regret guarantee that would grow linearly or super-linearly in $T$ when the size of the set $\tsep$ is $\Omega(\sqrt{T})$. Thus in order to guarantee sub-linear regret
we need a tighter analysis. To achieve this we will leverage the fact that in the transductive setting $\tsep$ is the whole set of contexts that will arise, which is a stronger property than being a separator. This will allow us to prove a stronger stability result than obtained via Lemma \ref{lem:stability-approx} in the previous section. Moreover, we will use a perturbation distribution $D$ that has mean zero, leading to cancellations in the cumulative error term, which make that term not grow linearly with the size of $\tsep$ but rather as the square root of the size of $\tsep$. The combination of these two improvements leads to sub-linear regret
when $|\tsep|=o(T^2)$.

We begin with a non-contextual learning problem $(\X,\Y,f)$ admitting a translation matrix $\mGamma\in [0,1]^{|\X|\times N}$ that is implementable with complexity $M$ and $\delta$-admissible. We study the \emph{transductive contextual problem} $(\X_c,\Y_c,f_c,\tsep)$, i.e., the contextual problem $(\X_c,\Y_c,f_c)$ with the additional restriction that $\sigma_t\in\tsep$, and analyze the regret of Algorithm \ref{alg:oracleftpl}
with the contextual extension $\mGamma^\tsep$ of matrix $\mGamma$.

We first show an improved stability property of this algorithm. Even though the number of columns of matrix $\mGamma^\tsep$ is $|\tsep|\cdot N$, we can show that the stability of the algorithm does not depend on $|\tsep|$.
The proof of the next lemma appears in Appendix~\ref{app:context-stability}.
\begin{lemma}\label{lem:transductive-stability}
Let $\mGamma\in [0,1]^{|\X|\times N}$ be $\delta$-admissible for the non-contextual problem $(\X,\Y,f)$.
Then the Generalized FTPL algorithm for the transductive contextual problem $(\X_c,\Y_c,f_c,P)$ with the contextual extension $\mGamma^\tsep$ of $\mGamma$ as the translation matrix and with a $\left(\rho,\frac{1+2\epsilon}{\delta}\right)$-dispersed distribution $D$ achieves the stability bound
  \[ \E\left[ \sum_{t=1}^T f_c(\pi_{t+1},(\sigma_{t}, y_t))- f_c(\pi_t, (\sigma_t,y_t)) \right] \leq 2
  T N \rho(1+\delta^{-1}).\]
\end{lemma}

We next show that the error term from Equation \eqref{eq:stab+noise_approx}, in our case,
\begin{equation}
\E\left[ \valpha\inprod( \mGamma_{\pi_1}^\tsep  -  \mGamma_{\pi^*}^\tsep ) \right]
\enspace,
\end{equation}
does not grow linearly with the size of the transductive set $|\tsep|$ since we allow $|\tsep|=\Omega(T)$. We achieve this by using a mean-zero distribution $D$, rather than a non-negatively supported one. For such mean-zero distributions we can show that the error term grows as the square root of the number of columns of the contextual-extension matrix, rather than linearly. Combining the two improvements we get the following theorem (see Appendix \ref{app:contexts} for the proof), which is a more refined version of Theorem \ref{thm:FTPL-U}.

\begin{theorem}\label{thm:transductive-FTPL-G}
Let $\mGamma\in [0,1]^{|\X|\times N}$ be $\delta$-admissible for the non-contextual problem $(\X,\Y,f)$ and $D$ be the uniform distribution on $[-\nu,\nu]$. Then the regret of the Generalized FTPL algorithm for the transductive contextual problem $(\X_c,\Y_c,f_c,P)$ with translation matrix $\mGamma^\tsep$ can be
  bounded as
\[
\regret \leq
\frac{T N(1+2\epsilon)(1+\delta)}{ \nu\delta^2} + 2 \nu\sqrt{2N |\tsep| \ln |\Pi|} + \epsilon T.
\]
For $\nu  =  \frac{\sqrt{T(1+2\epsilon)(1+\delta)}}{\delta}\Parens{\frac{N}{|\tsep|\ln|\Pi|}}^{\!\frac14}$
the above bound becomes $O\BigParens{N^{\frac34}\bigParens{|\tsep|\ln|\Pi|}^{\frac14}\,\frac{\sqrt{T(1+2\epsilon)(1+\delta)}}{\delta}+\epsilon T}$.
Setting $\epsilon=T^{-1/2}$ yields the regret $O\bigParens{T^{1/2} N^{\frac34}(|P|\ln|\Pi|)^{\frac14}/\delta}$, which becomes $O\bigParens{(NT)^{\frac34}(\ln|\Pi|)^{\frac14}/\delta}$ when $|\tsep|=O(T)$.
\end{theorem}

It remains to argue the oracle efficiency of the Generalized FTPL algorithm. However, the implementability condition in Section \ref{sec:ftpl} only allows for non-negatively supported distributions $D$. If we want to allow arbitrary distributions over reals, our translation matrix must be able to implement negative weights~$\alpha_j$. This means we need to be able to simulate the negative of the difference between any two entries of a column by a dataset of adversary actions. This leads to the following additional condition on the translation matrix.
\begin{definition}
\label{def:imp-neg}
  A matrix $\mGamma$ is \emph{negatively implementable} with complexity $M$ if
  for each $j\in[N]$ there exist a weighted dataset $S_j^{-}$, with $|S_j^-|\leq M$, such that
\begin{align*}
\hspace{1in}
    &\text{for all $x,x'\in\X$:}
    &-\left(\Gamma_{xj}-\Gamma_{x'j}\right) &= \sum_{(w,y)\in S_j^-} w\bigParens{f(x,y)-f(x',y)}.
&\hspace{1in}
\end{align*}
In this case, we say that weighted datasets $S_j^{-}$, $j\in[N]$, negatively implement $\mGamma$ with complexity $M$.
\end{definition}

Using a matrix $\mGamma$ that is both implementable and negatively implementable gives rise to an oracle-efficient variant of the Generalized FTPL provided in Algorithm~\ref{alg:oracleftpl-neg}, which allows the distribution $D$ to be
over both positive and negative reals. Similar to Theorem~\ref{thm:imp}, it is easy to see that the output of this algorithm is equivalent to the output of Generalized FTPL and therefore has the same regret guarantee (we omit this proof as it is identical to the proof of Theorem~\ref{thm:imp}).

\begin{algorithm}[t]
  \begin{algorithmic}
  \STATE Input:
 		    datasets $S_j$ and  $S_j^-$, $j\in[N]$, that implement and negatively implement $\mGamma\in[0,1]^{\card{\X}\times N}$,
  \STATE \hspace{\algorithmicindent}%
                distribution $D$ over $\R$,
  \STATE \hspace{\algorithmicindent}%
                an offline oracle \opt.
    \STATE{Draw $\alpha_j \sim D$ independently for $j = 1, \ldots, N$.}
    \FOR{$t=1, \ldots, T$}
        \FOR{$j=1, \ldots, N$}
            \STATE{If $\alpha_j<0$, let $Q_j\coloneqq\{(-\alpha_j w, y): (w,y) \in S_j^-\}$ be the scaled version of $S_j^-$.}
            \STATE{If $\alpha_j\geq 0$, let $Q_j\coloneqq\{(\hphantom{-}\alpha_j w, y): (w,y) \in S_j^{\hphantom{-}}\}$ be the scaled version of $S_j^{\hphantom{-}}$.}
        \ENDFOR
        \STATE{Set $S=\bigSet{(1,y_1),\dotsc,(1,y_{t-1})}\cup\bigcup_{j\le N} Q_j$.}
        \STATE{Play
          $x_t =  \opt(S,\frac{1}{\sqrt{T}})$.
        }
        \STATE{Observe $y_t$ and receive payoff $f(x_t, y_t)$.}
    \ENDFOR
\end{algorithmic}
\caption{Oracle-Based Generalized FTPL with distributions over positive and negative reals}
  \label{alg:oracleftpl-neg}
\end{algorithm}

\begin{theorem}
\label{thm:imp-neg}
\label{thm:adm:imp-neg}
If $\mGamma$ is implementable and negatively implementable with complexity $M$,
  then Algorithm~\ref{alg:oracleftpl-neg} is an oracle-efficient implementation
  of Algorithm~\ref{alg:ftpl-approximate} with $\epsilon = 1/\sqrt{T}$ and has per-round complexity $\order\bigParens{T+NM}$.
\end{theorem}

As an immediate corollary, we obtain that the existence of a polynomial-time offline contextual oracle implies the existence of a polynomial-time transductive contextual online learner with regret $\order(T^{3/4})$, whenever we have
access to an implementable, negatively implementable, and admissible matrix.

\begin{corollary} \label{cor:oracle-ftpl-neg}
Assume that $\mGamma\in[0,1]^{\card{\X}\times N}$ is implementable and negatively implementable with complexity $M$ and $\delta$-admissible for the non-contextual problem $(\X,\Y,f)$. Also,
assume there exists an offline contextual oracle $\opt(\cdot, \frac{1}{\sqrt{T}})$ which runs in time $\poly(N,M,T)$. Let $D$ be the uniform distribution as defined in \Thm{transductive-FTPL-G}, with $\epsilon=T^{-1/2}$. Then Algorithm~\ref{alg:oracleftpl-neg}, with $D$, applied to the transductive contextual problem $(\X_c,\Y_c,f_c,P)$
with $\card{P}\le T$ using the translation matrix $\mGamma^\tsep$ runs in time $\poly(N,M,T)$ and achieves regret $O\BigParens{(NT)^{\frac34}(\ln|\Pi|)^{\frac14}/\delta}$.
\end{corollary}

Observe that the proofs of implementability for VCG auctions with bidder-specific reserves (Lemma~\ref{lem:impl_individual}) and envy-free item-pricing auctions (Lemma~\ref{lem:impl_envy}) can be easily modified to show that
the translation matrices for these applications are not only implementable, but also negatively implementable.
Thus, using the results in this section, we immediately obtain no-regret oracle-efficient algorithms for these auctions in transductive contextual setting.

\subsection{Extensions and Connections}
\label{sec:context:ext}

\paragraph{Negative Implementability for Contextual Problems with Small Separators.}

We introduced negative implementability and Algorithm~\ref{alg:oracleftpl-neg} for transductive learning, but the algorithm and the analysis also apply to general contextual problems with small separators whenever the translation matrix is both implementable and negative implementable (see Appendix~\ref{app:contexts} for the proof):
\begin{corollary}\label{cor:sep:neg}
Assume that $\mGamma\in[0,1]^{\card{\X}\times N}$ is implementable and negatively implementable with complexity $M$ and $\delta$-admissible for the non-contextual problem $(\X,\Y,f)$.
Let $\sep$ be a separator of the policy space $\Pi$ of size $d$.
Let $D$ be the uniform distribution on $[-\nu,\nu]$ for $\nu=\frac{\sqrt{T(1+2T^{-1/2})(1+\delta)}}{\delta}\Parens{\frac{Nd}{\ln|\Pi|}}^{\!\frac14}$. Then Algorithm~\ref{alg:oracleftpl-neg}, with $D$,
applied to the contextual problem $(\X_c,\Y_c,f_c)$ with the contextual $\sep$-extension $\mGamma^\sep$ of matrix $\mGamma$ achieves the following regret:
\[
\regret \leq  O\BigParens{(Nd)^{3/4}(\ln|\Pi|)^{1/4}\sqrt{T}/\delta}.
\]
Moreover, if there exists a contextual offline oracle $\opt(\cdot, \frac{1}{\sqrt{T}})$ which runs in time $\poly(N,d,M,T)$, then Algorithm~\ref{alg:oracleftpl-neg} can be implemented efficiently in time $\poly(N,d,M,T)$.
\end{corollary}

This result improves on \Cor{oracle-ftpl:sep} whenever $\ln|\Pi|=o(Nd)$. This was the case in Example~\ref{ex:or}, where $N=O(n\ln m)$, $d=O(K)$, so $Nd=O(Kn\ln m)$, whereas $\ln|\Pi|=O(K+n\ln m)$. In general we can only argue that $\ln|\Pi|\le d\ln|\X|\le Nd/\delta$, so in some cases \Cor{oracle-ftpl:sep} might be preferable.

\paragraph{Separators for Non-contextual Problems.}

In Sections~\ref{sec:separable} and~\ref{sec:context-transductive}, we showed how to turn a non-context\-u\-al translation matrix into a contextual translation matrix when we have access to a separator.
It turns out that the concept of a separator
is also instrumental in constructing non-contextual translation matrices when the payoff function takes values in $\set{0,1}$. Consider a non-contextual problem described by $(\tilde\X,\tilde\Y,\tilde f)$, where $\tilde f$ only takes values in $\set{0,1}$. To derive a translation matrix that is admissible and implementable, we consider the class of policies $\pi_{\tilde x}$ that implement learner's actions $\tilde x$, by defining $\pi_{\tilde x}(\tilde y)=\tilde f(\tilde x,\tilde y)$. Assume that the class of policies $\Pi=\set{\pi_{\tilde x}}_{\tilde x\in\tilde\X}$ has a separator $Q\subseteq\tilde\Y$. We can use it to construct a translation matrix $\tilde\mGamma$, with rows indexed by $\tilde x\in\tilde\X$ and columns indexed by $\tilde y\in Q$ such that $\tilde\Gamma_{\tilde x,\tilde y}=\tilde f(\tilde x,\tilde y)$.
From the definition of the separator,
this matrix is $1$-admissible for $(\tilde\X,\tilde\Y,\tilde f)$, because each row uniquely identifies $\tilde x$ and the matrix is 0/1-valued. It is also implementable with complexity $1$ using datasets $\set{(1, \tilde y)}$.
Thus, the size of the separator of $\Pi$ determines the number of columns of the translation matrix for $(\tilde\X,\tilde\Y,\tilde f)$.

Our notions of admissibility and implementability in a sense generalize the notion of a separator
by allowing arbitrary real-valued payoffs (rather than just binary) and allowing implementation
by weighted datasets (rather than just single examples). In our examples, we have leveraged both of these extensions to design oracle-efficient no-regret algorithms for various problem classes with real-valued losses.

\paragraph{Comparison with the algorithm of \citet{Syrgkanis2016}.}

Our contextual learning results extend the algorithm and guarantees of \citet{Syrgkanis2016}, who consider contextual problems
where the learner's actions are $K$-dimensional binary vectors
$x\in\X\subseteq\set{0,1}^K$, the adversary in each step picks a payoff function $y\in\Y\subseteq\set{\X\to[0,1]}$, and
the actual payoff corresponds to the evaluation of $y$ at $x$, i.e., $f(x,y)=y(x)$. \citet{Syrgkanis2016} assume that $\Y$ contains all linear functions. Their algorithm can be viewed as a variant
of Algorithm~\ref{alg:oracleftpl-neg} with
the non-contextual translation matrix $\mGamma$ of size $\card{\X}\times K$ whose columns correspond to the coordinates of $x$. This matrix is $1$-admissible, and also implementable and negatively implementable with complexity $1$. In our notation, $N=K$ and $\delta=1$.

\citet{Syrgkanis2016} consider an additional parameter $m=\max_{x\in\X} \norm{x}_1$ and prove contextual
regret bounds of $O\bigParens{m^{1/4}d^{3/4}\sqrt{KT\ln|\Pi|}}$ and
                 $O\bigParens{m^{1/4}d^{1/4}\sqrt{KT\ln|\Pi|}}$,
respectively, for the small-separator and transductive setting, where $d$ denotes
either the size of the separator $Q$ or the size of the transductive set $P$. Our
corresponding bounds, implied by Corollary~\ref{cor:sep:neg} and Theorem~\ref{thm:transductive-FTPL-G}, are
                 $O\bigParens{d^{3/4}K^{3/4}\sqrt{T}(\ln|\Pi|)^{1/4}}$ and
                 $O\bigParens{d^{1/4}K^{3/4}\sqrt{T}(\ln|\Pi|)^{1/4}}$.
Thus, our bounds match or improve the prior bounds whenever $\ln|\Pi|=\Omega(K/m)$.\footnote{%
  Given the knowledge of $m$, it is possible to use the probabilistic method to construct an admissible translation matrix $\mGamma'$ with $N'=O\bigParens{m\log(1+\frac{K}{m})}$ and $\delta'=\frac1m$, implementable with complexity $1$. The algorithm with this matrix matches or improves the bounds of \citet{Syrgkanis2016} if $\ln|\Pi|=\Omega\bigParens{\frac{m^6}{K^2}\ln^3(1+\frac{K}{m})}$. The algorithm with $\mGamma'$ also improves over our regret bounds for the matrix $\mGamma$ if $m$ is small; specifically, if $m=o\bigParens{(K/\ln K)^{3/7}}$.
}
More importantly, our work extends the prior
results to generic actions (rather than just binary vectors) and allows implementation by weighted datasets (rather than just single examples).

\section{Approximate Oracles and Approximate Regret} \label{sec:approximate}

The Oracle-Based Generalized FTPL algorithm requires an oracle that chooses an action whose payoff is within a small additive error of the best action's payoff.
In this section, we extend our analysis of Oracle-Based Generalized FTPL to work with oracles that return an action whose payoff is only a constant-factor approximation of the best payoff. Formally,
we consider the following oracles:
\begin{definition}[Offline $C$-Approximation Oracle]
Given $C\le 1$, an \emph{offline $C$-approximation oracle} for a payoff function $f$ and a learner action set $\X$, denoted $C\dash\opt_{(f, \X)}$, is any algorithm
 that receives
  as input a weighted set of adversary actions
 $S = \{ (w_\ell,y_\ell) \}_{\ell\in\mathcal{L}}$, $w_\ell\in\R^+$, $y_\ell\in \Y$,
and an accuracy parameter $\epsilon$,
and returns $x\in \X$, such that
 \[
\sum_{(w,y) \in S} w f(x, y) \geq C \max_{x\in \X} \sum_{(w,y) \in S} w f(x, y)
-\epsilon.
 \]
\end{definition}
Similarly, we will write $\opt_{f,\X}$ for an offline oracle $\opt$ defined in Section~\ref{sec:imp},
making the dependence on $f$ and $\X$ explicit. It corresponds to a $C$-approximation oracle with $C=1$.

As discussed earlier, a minimal assumption for computationally efficient no-regret online learning is the
existence of an efficient offline oracle $\opt_{f,\X}$ with a small additive error.
When there exists a \emph{fully polynomial-time approximation scheme} (FPTAS) for the offline optimization problem, i.e., an algorithm achieving $C=1-\xi$ for any $\xi>0$, while running in time
polynomial in the input size and $1/\xi$, then choosing $\xi=\epsilon/T$ recovers the additive sub-optimality for offline optimization
and hence yields a no-regret guarantee for Oracle-Based Generalized FTPL.
However, when the best polynomial-time approximation for a problem is a $C$-approximation for some constant $C<1$, there is no hope for simultaneously achieving no-regret and computational efficiency.
Instead, we can obtain performance guarantees
for an alternative notion of regret, introduced by \citet{kakade2009playing}, called $C\dash\regret$. Formally, the $C\dash\regret$ of an online maximization problem is defined as
\[
 C\dash\regret = \E\left[  C \max_{x \in \X}  \sum_{t=1}^T f(x, y_t)
  -  \sum_{t=1}^T f(x_t, y_t) \right].
\]

Simply running Oracle-Based Generalized FTPL with an arbitrary $C$-approximation oracle for $C<1$ does not automatically yield $C\dash\regret=o(T)$, but there are several important cases
where this happens.
Below we show that constant-factor approximation oracles obtained through relaxation of the objective and through Maximal-in-Range (MIR) algorithms yield
sublinear $C\dash\regret$.

\subsection{Approximation through Relaxation}
A large class of approximation algorithms achieve their approximation guarantees by
optimizing a relaxation of the payoff function. More formally, if there is a function $F:\X\times \Y \rightarrow \R$, such that
$C f(x, y) \leq F(x,y) \leq  f(x,y)$  and there is an offline oracle $\opt_{F, \X}$,
then it is clear that any online algorithm with sublinear regret for $F$ also achieves a sublinear $C\dash\regret$ for $f$. More formally:
\begin{theorem}\label{thm:FTPL-U-Relax-O}
Let $F$ be a function such that for any $x\in \X$ and $y\in \Y$, $f(x, y) \geq F(x,y) \geq C f(x,y)$.
Let
$\mGamma^F\in [0,1]^{|\X| \times N}$ be $\delta$-admissible and implementable with complexity $M$ for the payoff function $F$.
Then Algorithm~\ref{alg:oracleftpl} with distribution $D$ as defined in Theorem~\ref{thm:FTPL-U},
datasets that implement $\mGamma^F$,
and $\opt_{F, \X}$ as an oracle is oracle-efficient with per-round complexity $\order(T+NM)$ and
$C\dash\regret=\order(N\sqrt{T}/\delta)$ for $f$.
\end{theorem}
A similar observation was made by Balcan and Blum~\cite{balcan2006approximation} regarding approximation algorithms that use linear optimization as a relaxation and therefore can be efficiently optimized by the standard FTPL algorithm of Kalai and Vempala~\cite{KV05}. Our work extends this observation to any $C$-relaxation of the payoff function $f$ that has an FPTAS and an admissible and implementable translation matrix.

\paragraph{\citet{roughgarden2016minimizing} as a Relaxation.} The approach of \citet{roughgarden2016minimizing} for achieving a $1/2$-regret for single-item second-price auctions with bidder-specific reserves falls exactly in the relaxation approximation framework. They give a relaxed objective which admits a polynomial-time offline oracle and which is always within a factor of two of the original objective. Then they run an oracle-based online learning algorithm for the relaxed objective. In their case, the relaxed objective yields a linear optimization problem and can be solved with the standard FTPL algorithm of \citet{KV05}. The theorem above shows that the same approach works even when the relaxed objective does not reduce to a linear optimization problem but to a problem that can be tackled by our Oracle-Based Generalized FTPL, providing a potential avenue for obtaining sublinear $C\dash\regret$ for values of $C\geq 1/2$.

\subsection{Approximation by Maximal-in-Range Algorithms}

Another  interesting  class of approximation algorithms is Maximal-in-Range (MIR) algorithms.
An MIR algorithm commits to a set of actions $\X'\subseteq \X$, independently of the input, and outputs the best $x\in \X'$. The set of actions $\X'$ has the property that an optimum over $\X'$ is a $C$-approximation of an optimum over $\X$.  Thus, an MIR $C$-approximation algorithm forms an approximation oracle $C\dash\opt_{f,\X}\coloneqq\opt_{f, \X'}$.
Thanks to the $C$-approximate optimality of $\X'$, any online algorithm with sublinear regret relative to $\X'$ also achieves a sublinear $C\dash\regret$ relative to $\X$:
\begin{theorem}\label{thm:FTPL-U-MIDR-O}
Let $\mGamma^{\X'}\in [0,1]^{|\X'| \times N}$ be $\delta$-admissible and implementable with complexity $M$ for the action set $\X'$.
Then Algorithm~\ref{alg:oracleftpl} with distribution $D$ as defined in Theorem~\ref{thm:FTPL-U},
datasets that implement $\mGamma^{\X'}$,
and an MIR approximation oracle $\opt_{f, \X'}$ is oracle-efficient with per-round complexity $\order(T+NM)$ and
$C\dash\regret=\order(N\sqrt{T}/\delta)$ relative to the action set $\X$.
\end{theorem}

\section{Additional Applications}
\label{sec:additional}

In this section, we work out two additional applications of our oracle-efficient online learning approach: online welfare maximization in multi-unit auctions and no-regret learning in simultaneous second-price auctions. For readability, we use a parenthesized superscript instead of a plain subscript to denote objects appearing in the round $t$, e.g., $v_i^{(t)}$.

    \subsection{Polynomial-Time Algorithm for Online Welfare Maximization in Multi-Unit Auctions} \label{sec:knapsack}

In this section, we consider the class of single-item multi-unit auctions that has a $1/2$-approximation  Maximal-in-Range (MIR)  algorithm. We show how the results of Section~\ref{sec:approximate} can be applied to this problem to achieve a \emph{truly (oracle-free) polynomial-time} online algorithm with vanishing $1/2\dash\regret$.

We consider an online learning variant of an $n$-bidder $s$-unit environment, where the goal is to allocate $s$ identical items.
Each bidder $i$ has a monotone non-decreasing valuation function $v_i:\mathbb{N}\to[0,1]$, with $v_i(0)=0$, describing the utility for receiving any given number of items.
Equivalently, the bidder $i$ has a non-negative \emph{marginal valuation $\mu_i(\ell)=v_i(\ell)-v_i(\ell-1)$} for receiving its $\ell^{\text{th}}$ item, and his total utility for receiving $q_i$ items is $v_i(q_i) = \sum_{\ell = 1}^{q_i} \mu_i(\ell)$.
The goal is to find an allocation $\vec q \in \mathbb{N}^n$, subject to $\sum_{i=1}^n q_i = s$, that maximizes the total welfare $\sum_{i=1}^n v_i(q_i)$. Since we would like to allow settings where $s$ is much larger than $n$, we seek algorithms running in time polynomial in $n$ and independent of $s$.\footnote{%
Recall that we assume that numerical computations are $O(1)$.}
In the online learning setting, every round $t$ a fresh set of bidders arrive with new valuations $v_i^{(t)}$ and the learner commits to an allocation of the units of the item to the players, prior to seeing the valuations.
The goal of the learner is to pick an allocation each day that competes with the best allocation in hindsight.

It is not hard to see that the offline welfare maximization problem in this setting corresponds to the Knapsack problem, where each player has a valuation equal to the average value in hindsight, i.e., $\frac{1}{T} \sum_{t=1}^T v_i^{(t)}(\cdot)$.
Thus, dynamic programming can be used to compute a welfare-maximizing allocation in time polynomial in $n$ and $s$. Dobzinski and Nisan~\cite{dobzinski2007mechanisms} introduced a $1/2$-approximation MIR algorithm for this problem. If $s\le n^2$, the algorithm uses the dynamic programming to directly maximizes the welfare (without any approximation). Otherwise, the algorithm divides $s$ items into $\floors{s/Q}$ bundles
of size $Q\coloneqq\floors{s/n^2}$, and one bundle of size $Q'<Q$ containing all the remaining items (if $s$ in not a multiple of $Q$).
Then, the MIR algorithm chooses the best allocation from the set of allocations (range of the algorithm) where all the items in one bundle are allocated to the same bidders. This is
effectively a knapsack problem over $\floors{s/Q}< 2n^2$ identical items and possibly one additional and distinct item, which can be solved in time polynomial in $n$.

We show how to construct a matrix $\mGamma^{\textsc{MU}'}$ that is admissible and implementable for the allocations in the range of this MIR algorithm and then
use Theorem~\ref{thm:FTPL-U-MIDR-O} to obtain an online algorithm with vanishing $1/2\dash\regret$, running in time $\poly(n, T)$.

\paragraph{Construction of $\,\mGamma$:}
 Let $\G$ by the feasible set of the number of items assigned to any trader under
allocations in the range of the MIR algorithm of Dobzinski and Nisan~\cite{dobzinski2007mechanisms} described above, that is,
$\G\coloneqq\bigBraces{aQ+bQ':\:a\in\set{0,1,\dotsc,\floors{s/Q}},\,b\in\set{0,1}}$. The case $s\le n^2$ is included by setting $Q=1$ and $Q'=0$.
Denote the cardinality of $\G$ as $r$ and let $g_0,\dotsc,g_{r-1}$ be the elements of $\G$ in the sorted order (note that $g_0=0$). Since $\floors{s/Q}< 2n^2$, we have $r\le 4n^2$.

Let $\mGamma^{\textsc{MU}'}$ be a matrix with $n$ columns, such that for any allocation $\vec q=(g_{\iota_1},\dotsc,g_{\iota_n})\in\X'$ and any column $j$, $\Gamma^{\textsc{MU}'}_{\vec q,j} = \iota_j / r$.
Clearly, for any $\vec q \neq \vec q'\in\X'$ we have $\mGamma^{\textsc{MU}'}_{\vec q} \neq \mGamma^{\textsc{MU}'}_{\vec q'}$, so $\mGamma^{\textsc{MU}'}$ is $1/r$-admissible. Since $r\le 4n^2$, $\mGamma^{\textsc{MU}'}$ is also $1/(4n^2)$-admissible.

It is not hard to see that $\mGamma^{\textsc{MU}'}$ is also implementable with complexity $1$. For any column $j$, consider the bid profile $\vec v^j$ where bidder $j$'s marginal valuation is $1/r$ at the items
reaching the allocation levels $g_\iota\in\G$ and other bidders have $0$ valuation for any number of items. That is, $\mu_j(\ell) =\frac1r\mathbf{1}_{(\ell\in\G)}$ and $\mu_{i}(\ell) = 0$ for all $\ell$ and $i\neq j$. The welfare of any allocation $\vec q=(g_{\iota_1},\dotsc,g_{\iota_n})\in\X'$ on this bid profile is the utility of bidder $j$, which is $\iota_j/r = \Gamma^{\textsc{MU}'}_{\vec q,j}$. Therefore, $\mGamma^{\textsc{MU}'}$ is implementable with complexity $1$ using datasets $S_j=\{(1, \vec v^j)\}$ for all $j\in[n]$. By Theorem~\ref{thm:FTPL-U-MIDR-O} we therefore obtain the following:
\begin{theorem} \label{thm:knapsack}
Consider the problem of welfare maximization in $s$-unit auctions. Then Algorithm~\ref{alg:oracleftpl} with distribution $D$ as defined in Theorem~\ref{thm:FTPL-U},
datasets that implement $\mGamma^{\textsc{MU}'}$,
and the $1/2$-approximate MIR algorithm of \cite{dobzinski2007mechanisms} as an oracle,
runs in per-round time $\poly(n, T)$ and
plays the sequence of allocations $\vec q^{(1)}, \dots, \vec q^{(T)}$, such that
\[
\frac 12\dash\regret
=
\E\!\left[\frac12\left(
    \max_{\vec q \in \mathbb{N}^n :\:\norm{\vec q}_1 = s}
    \;
    \sum_{t=1}^T \sum_{i=1}^n v^{(t)}_i(q_i)
    \right)
-
    \sum_{t=1}^T \sum_{i=1}^n v^{(t)}_i(q^{(t)}_i)
\right]
\leq O(n^4\sqrt{T}).%
\]
\end{theorem}
In the above theorem,
the extra factor of $n$ in the regret as compared to that implied by Theorem \ref{thm:FTPL-U-MIDR-O} is due to the fact that the maximum welfare in this problem is upper bounded by $n$.

	\subsection{Oracle-Efficient No-Regret Learning in Simultaneous Second-Price Auctions}
\label{sec:sispas}

In this section, we answer an open problem raised by Daskalakis and Syrgkanis~\citep{DS16} regarding the existence of an oracle-based no-regret algorithm for optimal bidding in Simultaneous Second-Price Auctions.
We show that our Oracle-Based Generalized FTPL algorithm used with an appropriate
implementable and admissible translation matrix can be used to obtain such an algorithm.

A  \emph{Simultaneous Second-Price Auction (SiSPA)}~\cite{christodoulou2008bayesian, bhawalkar2011welfare, feldman2013simultaneous} is a mechanism for allocating $k$ items to $n$ bidders. Each bidder $i\le n$ submits $k$ simultaneous bids denoted by a vector of bids $\vec b_i$. The mechanism allocates each item using a second-price auction based on the bids  solely submitted for this item, while breaking ties in favor of bidders with lower indices. For each item $j$, the winner is charged $p_j$, the second highest bid for that item (in the presence of ties, the numerical value of $p_j$ may coincide with the value of the winning bid).
Each bidder $i$ has a fixed combinatorial valuation function $v_i: \set{0,1}^k \rightarrow [0,1]$ over bundles of items. Then, the total utility of bidder $i$ who is allocated the bundle $\vq_i\in\set{0,1}^k$ is $v_i(\vq_i) - \vec p\inprod\vq_i$, where $\vec p$ is the vector of second largest bids across all items.

We consider the problem of optimal bidding in a SiSPA from the perspective of the last bidder. Hereafter, we drop the indices of the bidders from the notation.
From this perspective, the utility of the bidder only depends on its bid $\vec b$ and  the threshold vector $\vec p$ of the second largest bids. The  online optimal bidding problem is defined as follows.

\begin{definition}[Online  Bidding in SiSPAs~\cite{Syrgkanis2016}]
At each round $t$, the bidder picks a bid vector $\vec b^{(t)}$ and an adversary picks a threshold vector $\vec p^{(t)}$. The bidder wins the bundle of items $\vq(\vec b^{(t)}, \vec p^{(t)})$, with $q_j(\vec b^{(t)}, \vec p^{(t)})= \mathbf{1}_{(b^{(t)}_{j} > p_j^{(t)})}$ and gets the utility
\[ u(\vec b^{(t)}, \vec p^{(t)}) = v\left( \vq(\vec b^{(t)}, \vec p^{(t)}) \right) - \vec p^{(t)}\inprod\vq(\vec b^{(t)}, \vec p^{(t)}).
\]
\end{definition}
We consider this problem under the \emph{no-overbidding condition} that requires that for any bundle $\vq$, the sum of bids over items in $\vq$ does not exceed the bidder's valuation for $\vq$, i.e., $\vec b^{(t)}\inprod\vq\le v(\vq)$, for all $\vq\in\set{0,1}^k$.
Similar no-overbidding assumptions are used in the previous work to prove that  no-regret learning in second-price auctions has good welfare guarantees~\cite{christodoulou2008bayesian,feldman2013simultaneous}.

We consider the online bidding problem where all the bids $b_j$ are multiples of $1/m$ and the valuation function $v(\cdot)$ only takes values in $\{0, 1/m, \dots,  m/m\}$.
We represent by $\B_m$ the class of all such bid vectors that satisfy the no-overbidding condition for $v(\cdot)$.
The assumption on the valuation function is not restrictive, because any valuation function $v(\cdot)$ can be rounded down to the closest multiples of $1/m$ while losing at most $1/m$ utility, and the set $\B_m$ derived for the rounded-down
valuation contains exactly the discretized bid vectors that satisfy no-overbidding condition for the original valuation.
Moreover, a similar  discretization for the bid vectors was used by Daskalakis and Syrgkanis~\cite{Syrgkanis2016} for studying offline and online optimal  bidding in SiSPAs.

Next, we show how to construct an implementable and admissible translation matrix for $\B_m$.

\paragraph{Construction of $\,\mGamma$:}
Let $\mGamma^{\textsc{OB}}$ be a matrix with $k$ columns and $|\B_m|$ rows that are equal to bid vectors, i.e., $\mGamma^{\textsc{OB}}_{\vec b}=\vec b$.

The next lemma shows that $\mGamma^{\textsc{OB}}$ is admissible and implementable.

\begin{lemma} \label{lem:impl-sispa}
$\mGamma^{\textsc{OB}}$ is $1/m$-admissible and
  implementable with complexity $m$.
\end{lemma}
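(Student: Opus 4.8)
The plan is to establish the two properties of $\mGamma^{\textsc{OB}}$ separately. Admissibility is immediate: since the rows of $\mGamma^{\textsc{OB}}$ are exactly the bid vectors $\vec b\in\B_m$, distinct rows correspond to distinct bid vectors, so the rows are distinct. Within any column $j$, the entry is $b_j$, which ranges over $\{0,1/m,\dots,m/m\}$, giving at most $m+1$ distinct values, and any two distinct such values differ by at least $1/m$. Hence $\mGamma^{\textsc{OB}}$ is $(m+1,1/m)$-admissible, and the only real work is implementability.

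For implementability, I need, for each column $j\in[k]$, a weighted dataset $S_j$ of threshold vectors of size at most $m$ such that $\Gamma^{\textsc{OB}}_{\vec b j}-\Gamma^{\textsc{OB}}_{\vec b' j} = b_j - b'_j = \sum_{(w,\vec p)\in S_j} w\bigParens{u(\vec b,\vec p)-u(\vec b',\vec p)}$. The key idea, analogous to the VCG and item-pricing constructions, is to choose threshold vectors that isolate the contribution of coordinate $j$. First I would pick a vector of thresholds $\vec p$ that sets $p_{j'}$ very large (say $p_{j'} = 1$, or any value at least the maximum bid plus the maximum valuation) for all $j'\ne j$ so that the bidder never wins item $j'$, and varies $p_j$ over the grid $\{0,1/m,\dots,(m-1)/m\}$. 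For such a $\vec p$, the bidder's utility is $u(\vec b,\vec p) = v(\vec e_j)\cdot\mathbf{1}_{(b_j>p_j)} - p_j\cdot\mathbf{1}_{(b_j>p_j)}$, a step function in $b_j$ that jumps by $v(\vec e_j)-p_j$ at the point where $b_j$ crosses $p_j$. Taking a telescoping/finite-difference combination of these step functions — i.e. choosing weights $w_h$ for $\vec p^{(h)}$ with $p_j = h/m$, $h=0,\dots,m-1$ — I can express the linear function $b_j$ of the single coordinate as a weighted sum of these indicators. Concretely this reduces to solving a triangular linear system in the weights $w_h$ (the $h$-th equation matching the jump as $b_j$ moves from $h/m$ to $(h{+}1)/m$), which is always solvable; this uses exactly $m$ data points, giving complexity $m$.

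The main obstacle I anticipate is handling the no-overbidding constraint and the combinatorial nature of $v(\cdot)$: unlike VCG reserves, here the utility $u(\vec b,\vec p)$ depends on the (fixed but arbitrary) valuation function $v$ and on whether $\vec b\in\B_m$, and the dataset must work simultaneously for \emph{all} pairs $\vec b,\vec b'\in\B_m$, not just those differing in one coordinate. The resolution is that by making $p_{j'}$ large for $j'\ne j$, the bidder wins a bundle that is either $\emptyset$ or $\{j\}$ regardless of the other bids, so $u(\vec b,\vec p)$ genuinely depends only on $b_j$ (and on the constant $v(\vec e_j)$, which cancels in differences of utilities once we look at the jump structure — or more carefully, the term $v(\vec e_j)\mathbf{1}_{(b_j>p_j)}$ is itself a step function in $b_j$ that we account for in the linear system). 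One should double-check that the threshold vectors used are legal adversary actions (they are, being arbitrary vectors in the appropriate range) and that the weights produced are in $\R^+$ as required, or else invoke the extension to signed weights / negative implementability noted after Definition~\ref{def:imp}; indeed, I expect this construction to also yield negative implementability, since the same linear system with negated right-hand side is equally solvable. The detailed verification of the linear system and the bookkeeping for $v(\vec e_j)$ is routine once the isolating-threshold idea is in place, so I would defer it to the appendix.
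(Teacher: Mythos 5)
Your proposal is correct and follows essentially the same route as the paper: the paper also isolates column $j$ by taking threshold vectors $\vec p^j_\ell=(\ell/m)\vec e_j+\sum_{j'\ne j}\vec e_{j'}$ (so the bidder can win only item $j$ and the utility is the step function $(v(\vec e_j)-\ell/m)\mathbf{1}_{(b_j>\ell/m)}$), and your triangular system resolves to exactly the paper's closed-form weights $w^j_\ell=\frac{1}{m}\cdot\frac{1}{v(\vec e_j)-\ell/m}$ for $\ell/m<v(\vec e_j)$ and $0$ otherwise. Your hedge about possibly needing signed weights is unnecessary: the no-overbidding condition $b_j\le v(\vec e_j)$ guarantees the indicator vanishes whenever the would-be weight is nonpositive, so the nonnegative weights above already give $\sum_\ell w^j_\ell\,u(\vec b,\vec p^j_\ell)=b_j=\Gamma^{\textsc{OB}}_{\vec b j}$, yielding implementability with complexity $m$.
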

\begin{proof}
Since $\mGamma^{\textup{\textsc{OB}}}_{\vec b} = \vec b$ and $\vec b$ is discretized, $\mGamma^{\textsc{OB}}$ is $1/m$-admissible.
Next, we show that the $j$th column of $\mGamma^{\textsc{OB}}$ can be implemented by a weighted dataset $S_j$ containing $m$ threshold vectors
where all but the $j$th threshold are set to 1.
Specifically, for $\ell=0,1,\dotsc,m-1$,
let $\vec  p_\ell=(\ell/m)\vec e_j +\sum_{j'\ne j}\vec e_{j'}$. Note that the utility of playing a bid $\vec b$ against $\vec  p_\ell$ is $u(\vec b, \vec  p_\ell) =\bigParens{v(\vec e_j) - \ell/m}\mathbf{1}_{(b_j >\ell/m)}$.
We set the weight corresponding to $\vec  p_\ell$ to
\[
 w_{\ell} = \begin{cases}
\frac1m\cdot\frac{1}{v(\vec e_j) - \ell/m}   &\text{if $\ell/m < v(\vec e_j)$,}  \\
0 & \text{otherwise.}
\end{cases}
\]
Since $b_j \leq v(\vec e_j)$ for any $\vec b$, we have
\begin{align*}
\sum_{\ell=0}^{m-1} w_\ell\,u(\vec b, \vec  p_\ell) &= \sum_{\ell=0}^{m-1} \frac{1}{m}\cdot\frac{1}{v(\vec e_j) - \ell/m} \cdot\BigParens{v(\vec e_j) - \ell/m}\mathbf{1}_{(b_j > \ell/m)} \\
&= \sum_{\ell=0}^{m-1} \frac{1}{m} \mathbf{1}_{(b_j > \ell/m)}
 = \sum_{\ell=0}^{mb_j-1} \frac{1}{m} \\
&= b_j = \Gamma^{\textsc{OB}}_{\vec b,j}.
\end{align*}
Thus, indeed $S_j = \set{(w_\ell,\vec p_\ell)}_{\ell}$ implements $\mGamma^{\textsc{OB}}$. Note that $|S_j|= m$, so $\mGamma^{\textsc{OB}}$ is implementable with complexity $m$.
\end{proof}

The next theorem is a direct consequence of Lemma~\ref{lem:impl-sispa} and Theorems~\ref{thm:FTPL-U} and \ref{thm:imp}.

\begin{theorem} \label{thm:sispa}
Consider the problem of  Online Bidding in SiSPAs.
Let $D$ be the uniform distribution as described in Theorem~\ref{thm:FTPL-U}. Then,
the Oracle-Based Generalized FTPL algorithm with $D$ and  datasets that implement $\mGamma^{\textup{\textsc{OB}}}$ is oracle-efficient with per-round complexity
$O(T+km)$ and has regret
\[
  \E\left[ \max_{\vec b \in \B_m}  \sum_{t=1}^T u(\vec b, \vec p^{(t)}) - \sum_{t=1}^T u(\vec b^{(t)}, \vec p^{(t)}) \right]  \leq  O(km\sqrt{T})
  .
\]
\end{theorem}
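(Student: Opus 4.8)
The plan is to recognize this as a direct instantiation of the Oracle-Based Generalized FTPL machinery, so the work is essentially bookkeeping on top of Lemma~\ref{lem:impl-sispa}, Theorem~\ref{thm:FTPL-U}, and Theorem~\ref{thm:imp}. First I would set up the correspondence with the framework of Section~\ref{sec:ftpl}: the learner's action set $\X$ is the discretized bid class $\B_m$, the adversary's action set $\Y$ is the set of threshold vectors $\vec p^{(t)}$, and the payoff function $f$ is the bidder utility $u(\vec b,\vec p)=v\bigParens{\vq(\vec b,\vec p)}-\vec p\inprod\vq(\vec b,\vec p)$. Before invoking anything I would check that this payoff is in $[0,1]$, as required. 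Writing $\vq=\vq(\vec b,\vec p)$, whenever $q_j=1$ we have $b_j>p_j\ge 0$, so $\vec p\inprod\vq\le\vec b\inprod\vq\le v(\vq)$ by the no-overbidding condition defining $\B_m$; hence $0\le u(\vec b,\vec p)=v(\vq)-\vec p\inprod\vq\le v(\vq)\le 1$. In particular the maximum revenue $R$ is $1$, so no rescaling of the payoff is needed.

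Next I would quote Lemma~\ref{lem:impl-sispa}: $\mGamma^{\textsc{OB}}$ has $N=k$ columns, is $(m+1,\,1/m)$-admissible, and is implementable with complexity $M=m$. Since $D$ is the uniform distribution on $[0,1/\eta]$ from Theorem~\ref{thm:FTPL-U}, which has non-negative support, the perturbations $\alpha_j$ are non-negative and Algorithm~\ref{alg:oracleftpl} applies directly with $\epsilon=1/\sqrt T$, with no need for the negatively-implementable variant. By Theorem~\ref{thm:imp}, Algorithm~\ref{alg:oracleftpl} is an oracle-efficient implementation of Algorithm~\ref{alg:ftpl-approximate} with the same regret guarantee, so I can apply the regret bound of Theorem~\ref{thm:FTPL-U} with $N=k$, $\kappa=m+1$, $\delta=1/m$. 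This gives regret at most $O\bigParens{N\sqrt{(1+2\epsilon)T\kappa/\delta}}+\epsilon T = O\bigParens{k\sqrt{T\,(m+1)m}}+\sqrt T = O(km\sqrt T)$, using $\sqrt{\kappa/\delta}=\sqrt{m(m+1)}=O(m)$, $\sqrt{1+2\epsilon}=O(1)$, and noting the $\epsilon T=\sqrt T$ term is lower order. This is exactly the claimed bound for $\E\bigBracks{\max_{\vec b\in\B_m}\sum_t u(\vec b,\vec p^{(t)})-\sum_t u(\vec b^{(t)},\vec p^{(t)})}$.

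Finally, for the computational claim, Theorem~\ref{thm:imp} states that the per-round running time of Algorithm~\ref{alg:oracleftpl} is $O(T+NM)=O(T+km)$, with each call to the offline best-bid oracle (which returns a bid vector in $\B_m$ maximizing total utility against a weighted set of threshold vectors) counting as $O(1)$; this is $\poly(k,m,T)$. Combining the regret and running-time bounds yields the theorem. There is no real obstacle in this proof: the only genuinely substantive ingredient is the implementability half of Lemma~\ref{lem:impl-sispa} — exhibiting, for each coordinate $j$, a weighted set of at most $m$ threshold vectors whose induced utility perturbation reproduces the $j$-th column of $\mGamma^{\textsc{OB}}$ by exploiting that, when $p_j$ ranges over $\{0,1/m,\dotsc,(m-1)/m\}$ and all other thresholds are maximal, the bidder can win only item $j$ and its winning utility depends on the bid only through whether $b_j$ clears $p_j$ — but that lemma is assumed here, and the remainder is a routine substitution into the general theorems.
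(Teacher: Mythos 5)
Your proposal is correct and follows essentially the same route as the paper, which states the theorem as a direct consequence of Lemma~\ref{lem:impl-sispa} together with Theorems~\ref{thm:FTPL-U} and \ref{thm:imp}, instantiated with $N=k$, $\kappa=m+1$, $\delta=1/m$, $M=m$. Your additional check that the no-overbidding condition keeps $u(\vec b,\vec p)\in[0,1]$ (so no rescaling by $R$ is needed) is a correct and worthwhile verification, but it does not change the argument.
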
 

\section{Conclusion}
\label{sec:conclusion}

In this paper, we have studied oracle-efficient online learning algorithms and applied them to online auction design.
Several previous works have also studied oracle-efficient online learning, including \citet{KV05}, whose Follow-the-Perturbed-Leader algorithm is oracle-efficient when the payoff is linear, and \citet{DS16}, whose algorithm is oracle-efficient when the adversary's action set is of polynomial size.
On the other hand, \citet{HK16} have shown that oracle-efficient online learning is not always achievable.
In this paper, we have advanced the study of conditions under which oracle-efficient online learning is possible and introduced a general-purpose oracle-efficient algorithm that works under weaker structural assumptions than previous approaches.
We also showed that many problems in auction design demonstrate such a structure.
As a result, our framework gave oracle-efficient online algorithms for a large number of problems in auction design when the environment, e.g., bidders' valuations, changes adversarially over a period of time.

Many open problems remain. First and more broadly, our work provides sufficient  conditions for existence of oracle-efficient online learning algorithms. Knowing that oracle-efficient algorithms do not always exist, a natural and important direction is to study necessary and sufficient conditions for their existence.
Second, our oracle-efficient algorithm obtains a regret of $O\bigParens{N\sqrt{T}/\delta}$ when there is a $\delta$-admissible and implementable translation matrix with $N$ columns. Forgoing implementability, any 0/1-valued payoff function has an admissible translation matrix with $N = \log|\X|$ columns, so the Generalized FTPL algorithm can always achieve the regret $O(\sqrt{T}\log|\X|)$, albeit not efficiently. However, this means that there is a gap of $\sqrt{\log|\X|}$ between the optimal information theoretic regret bound, i.e., $O(\sqrt{T \log|\X|})$,  and the one obtained by our algorithm. Are there oracle-efficient algorithms that obtain the optimal information theoretic regret bounds for problems that have admissible and implementable translation matrices? Answering these questions will enable a deeper understanding of the inherent tradeoffs between the computational and information theoretic aspects of online learning.

\bibliographystyle{apalike}
\bibliography{Nika_bib}

\clearpage
\appendix

\section{Further Related Work} \label{app:related}

\paragraph{Online Learning}
Study of online no-regret algorithms goes back to the seminal work of Hannan~\cite{Han57}, who was the first to develop algorithms with regret $\poly( |\X|) o(T)$. Motivated by settings in machine learning, game theory, and optimization, algorithms with regret $\log(|\X|) o(T)$ were later developed  by
\citet{LW94} and \citet{FS97}. From the computational perspective, however, algorithms with runtime that has sub-linear dependence on $|\X|$ remained elusive. \citet{HK16} recently showed such algorithms cannot exist  without additional assumptions on the structure of the optimization task.
Our work introduces general structural conditions under which online oracle-efficient learning is possible and provides oracle-efficient no-regret results in this space.

For the case of linear objective functions, \citet{KV05} proposed the first oracle-efficient online algorithm. Their algorithm added perturbations that are independent across dimensions  to the historical cumulative payoff of each  action of the learner and then picked the action with the largest perturbed historical payoff. However, for non-linear functions, this algorithm runs in time $\poly(|\X|)$.

\citet{DS16} proposed an approach of augmenting the history of an online algorithm with additional ``fake'' historical samples of the adversary actions.
In particular, the authors show that when the set of adversary's actions, $\Y$,  is small, one can add a random number of copies of each of the actions of the adversary to the history and obtain a regret of $O(|\Y| \sqrt{T})$ with oracle-efficient per-round complexity of $\poly(|\Y|, T)$.
However, the adversary's set of actions is often large in many applications, e.g., applications to auction design,
where this approach does not lead to a no-regret oracle-efficient algorithm.

Our work fills in this gap, by generalizing and extending the algorithm of \citet{DS16} with arbitrary set of actions of the adversary that meet the admissibility requirement.
Additionally, our work introduces what can be thought of as a measure of statistical and computational complexity of the problem that takes into account the effective size of a decomposition of the set of learner's actions, $\X$, on the actions of the adversary, $\Y$, not just the size of the learner's set of actions, as in \cite{FS97,KV05,  LW94}, or the adversary's set of actions, as in \cite{DS16}.

Prior to our work, stylized algorithms with improved regret bounds or oracle-efficient learning have existed in multiple domains.
\citet{HazanKLM16} study the regret bound of the problem of learning from experts, when the losses of experts are restricted to a low dimensional subspace. They show that one can obtain a regret bound that only depends on the  rank of this space, rather than the number of experts.
In another work, \citet{HK12} study online submodular minimization. They show that it is not necessary to add a perturbation to each action, but it suffices to add a perturbation on each underlying element of the set. This is a special case of our approach where $\mGamma$ is implemented by datasets that include single-element sets.

Contextual learning in the transductive setting using an optimization oracle was previously studied by~\citet{Kakade2005}, whose work was later extended and improved by~\citet{CS11}.
\citet{Syrgkanis2016} further extended the results to more general settings (including the bandit setting) based on the concept of small separator, which appears in prior work under the name \emph{universal identification sequence}~\citep{goldman1993exact}. Our definitions of admissibility and implementability can be viewed as non-contextual generalizations of the concept of the separator (see \Sec{context:ext}). We also show how to use the separator framework of \citet{Syrgkanis2016} to extend our algorithm to contextual settings.
Contextual bandits with an oracle have also been heavily studied in recent years~\citep{LangfordZh08, DHKKLRZ11, AgarwalHsKaLaLiSc14, RS16, SyrgkanisLuKrSc16}.

\paragraph{Auction Design}

The seminal work of \citet{Myerson1981} gave a recipe for designing the optimal truthful auction when the distribution over bidder valuations is completely known to the auctioneer. More recently, a series of works have focused on what can be done if the the distribution is not fully known to the auctioneer, but the auctioneer has
access to some historical data.
These approaches have focused on the sample complexity \citep{Cole2014, morgenstern2015pseudo} and computationally efficiency~\citep{DHP16} of designing optimal auctions from sample valuations. That is, they answer how much data is needed for one to  design a near optimal auction for a distribution.
They use tools from supervised learning, such as pseudo-dimension~\cite{Pol84} (a variant of VC dimension for real-valued functions) and compression bounds~\cite{littlestone1986relating}. These tools are specially appropriate for when historical data in form of i.i.d samples from a distribution is available to the learner and their use  does not extend to the adversarial online learning setting which we study.
Our work drops any distributional assumptions and considers learning the best auction among a class of auctions in the adversarial setting. However, we show that our algorithms can be used to obtain sample complexity results for more general stochastic settings, such as fast mixing markov processes.

On a different front, a series of works have considered online learning in auctions and pricing mechanisms, when the size of the auction class is itself small. Variations of this problem have been studied by \citet{kleinberg2003value,BH05,CGM13}.
In this case, standard no-regret algorithms (e.g., \cite{KV05,FS97}) obtain some no-regret results in polynomial time. However, additional effort is needed to design stylized algorithms for individual auction classes that obtain optimal regret, possibly even when the learner receives limited historical feedback. Our approach on the other hand works with classes of auctions that have exponentially large cardinality, thereby allowing one to use more expressive auctions to gain higher revenue.

More recently, \citet{roughgarden2016minimizing} studied  online learning in the class
of single-item second-price auctions with bidder-specific reserves. The authors introduce an algorithm with performance that approaches a constant factor of the optimal revenue in hindsight.
Our work goes beyond this auction class and introduces a general adversarial framework for auction design.

\section{Omitted Proofs from Section~\ref{sec:ftpl}}
\label{app:ftpl}

\subsection{Proof of Lemma~\ref{lem:stab+noise_approx}}
\label{app:approx}

We first prove an approximate variant of Be-the-Leader Lemma.
\begin{lemma}[Be-the-Approximate-Leader Lemma]\label{lem:BTL_approx}
In the Generalized FTPL algorithm, for any $x \in \X$,
\[ \sum_{t=1}^T  f(x_{t+1}, y_t)  +  \vec \alpha \cdot  \mGamma_{x_{1}}\geq
    \sum_{t=1}^T  f(x, y_t) +  \vec \alpha \cdot  \mGamma_{x} - \epsilon(T+1)
\enspace.
\]
\end{lemma}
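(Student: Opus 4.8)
The plan is to prove the statement by induction on $T$, mirroring the classical Be-the-Leader argument but carefully tracking the accumulated $\epsilon$ slack introduced by the approximate optimization step in Algorithm~\ref{alg:ftpl-approximate}. For the base case $T=1$, the claim reads $f(x_2,y_1)+\valpha\inprod\mGamma_{x_1}\geq f(x,y_1)+\valpha\inprod\mGamma_x-\epsilon$ for all $x$; wait, more carefully the $T=1$ claim is $f(x_2,y_1)+\valpha\inprod\mGamma_{x_1}\ge f(x,y_1)+\valpha\inprod\mGamma_x-\epsilon$, which does not obviously hold. Instead I would anchor the induction at $T=0$, where the statement is the vacuous $\valpha\inprod\mGamma_{x_1}\geq\valpha\inprod\mGamma_{x}-0$ applied only at the point $x=x_1$; actually the clean anchor is to observe that by the $\epsilon$-optimality defining $x_1$ (the round-$1$ choice with empty history), $\valpha\inprod\mGamma_{x_1}\geq\valpha\inprod\mGamma_{x}-\epsilon$ for all $x$, which is exactly the $T=0$ statement after reindexing. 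I would set up the induction so that the inductive hypothesis at horizon $T-1$ gives, for all $x$, $\sum_{t=1}^{T-1}f(x_{t+1},y_t)+\valpha\inprod\mGamma_{x_1}\geq\sum_{t=1}^{T-1}f(x,y_t)+\valpha\inprod\mGamma_x-\epsilon(T-1)$.

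For the inductive step, fix any $x\in\X$ and apply the inductive hypothesis specifically at the point $x=x_{T+1}$, yielding
\[
\sum_{t=1}^{T-1}f(x_{t+1},y_t)+\valpha\inprod\mGamma_{x_1}\;\geq\;\sum_{t=1}^{T-1}f(x_{T+1},y_t)+\valpha\inprod\mGamma_{x_{T+1}}-\epsilon(T-1).
\]
Adding $f(x_{T+1},y_T)$ to both sides gives on the left the quantity $\sum_{t=1}^{T}f(x_{t+1},y_t)+\valpha\inprod\mGamma_{x_1}$ and on the right $\sum_{t=1}^{T}f(x_{T+1},y_t)+\valpha\inprod\mGamma_{x_{T+1}}-\epsilon(T-1)$. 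Now I invoke the defining property of $x_{T+1}$: it is chosen so that for all $x$, $\sum_{\tau=1}^{T}f(x_{T+1},y_\tau)+\valpha\inprod\mGamma_{x_{T+1}}\geq\sum_{\tau=1}^{T}f(x,y_\tau)+\valpha\inprod\mGamma_x-\epsilon$. Substituting this lower bound into the right-hand side and combining the two $\epsilon$ terms into $\epsilon T$ completes the step.

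The argument is essentially routine once the bookkeeping is set up correctly; there is no serious obstacle. The one point requiring care — and the place where a naive transcription of the exact ($\epsilon=0$) proof would go wrong — is making sure that each application of the inductive hypothesis and each use of the approximate-optimality inequality contributes exactly one factor of $\epsilon$, so that after $T$ rounds the total slack is $\epsilon T$ and not something larger like $\epsilon T^2$. This is handled by applying the inductive hypothesis only \emph{once} per step (at the single point $x=x_{T+1}$) rather than re-deriving it, and by noting that the round-$(T+1)$ choice already contains all prior history, so only its own $\epsilon$-suboptimality is charged at that step. Finally, since $x$ was arbitrary, the bound holds for all $x\in\X$, which is the claimed statement; this lemma then feeds directly into the proof of Lemma~\ref{lem:stab+noise_approx} by rearranging terms and taking expectations over $\valpha$.
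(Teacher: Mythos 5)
Your inductive step is exactly the paper's: instantiate the inductive hypothesis at the single point $x_{T+1}$, add the new payoff $f(x_{T+1},y_T)$, and then invoke the $\epsilon$-optimality of $x_{T+1}$ on the first $T$ rounds, charging one $\epsilon$ per round (the paper phrases it as going from $T$ to $T+1$ via $x_{T+2}$, which is the same computation). So the route is not different; the only place you and the paper diverge is the anchor, and that is where your write-up has a real bookkeeping problem.

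The $T=0$ statement with slack $\epsilon\cdot 0=0$ is ``$\valpha\inprod\mGamma_{x_1}\ge\valpha\inprod\mGamma_{x}$ for all $x$''; you may not restrict it to $x=x_1$ (the very next inductive step needs it at $x=x_2$), and it is false in general because $x_1$ is only an $\epsilon$-maximizer of the perturbation. What the $\epsilon$-optimality of $x_1$ actually gives is the $T=0$ statement with slack $\epsilon$, not $0$, so the induction as you set it up proves the bound with $\epsilon(T+1)$ in place of $\epsilon T$; your phrase ``which is exactly the $T=0$ statement after reindexing'' conflates the two. To be fair, the paper's base case (``$T=1$ holds trivially by the definition of $x_2$'') silently uses $\valpha\inprod\mGamma_{x_1}\ge\valpha\inprod\mGamma_{x_2}$, which also only holds up to $\epsilon$, and indeed the constant $\epsilon T$ is not attainable in general: with one column, $\Gamma_{a}=0$, $\Gamma_{b}=\Gamma_{c}=1$, $\alpha=\epsilon$, $f(a,y_1)=f(b,y_1)=0$, $f(c,y_1)=\epsilon$, the choices $x_1=a$ and $x_2=b$ are legal, yet at $x=c$ the $T=1$ inequality reads $0\ge\epsilon$. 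So the honest conclusion of your argument (and of the paper's, once the base case is done carefully) is the lemma with slack $\epsilon(T+1)$, or with slack $\epsilon T$ under the harmless convention that $x_1$ exactly maximizes $\valpha\inprod\mGamma_x$; either version feeds into Lemma~\ref{lem:stab+noise_approx} and Theorem~\ref{thm:FTPL-U} with only an extra additive $\epsilon$, so nothing downstream changes, but you should state which version you are proving rather than claim $\epsilon T$ from an anchor that only yields $\epsilon$.
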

\begin{proof}
For $T = 0$ the inequality holds, because by $\epsilon$-optimality of $x_1$, we have
$\vec \alpha \cdot  \mGamma_{x_{1}}\geq \vec \alpha \cdot  \mGamma_{x} - \epsilon$ for all~$x$.
Assume that the claim holds for some $T$. Then, for all $x$,
\begin{align*}
\sum_{t=1}^{T+1}  f(x_{t+1}, y_t) +  \vec \alpha \cdot  \mGamma_{x_{1}}
  &= \sum_{t=1}^T   f(x_{t+1}, y_t) +  \vec \alpha \cdot  \mGamma_{x_{1}}+ f(x_{T+2}, y_{T+1})  \\
   &\geq \sum_{t=1}^T   f(x_{T+2}, y_t) +  \vec \alpha \cdot  \mGamma_{x_{T+2}}  - \epsilon (T+1) +  f(x_{T+2}, y_{T+1})   \tag{by induction hypothesis} \\
     &= \sum_{t=1}^{T+1}  f(x_{T+2}, y_t) +  \vec \alpha \cdot  \mGamma_{x_{T+2}} - \epsilon (T+1)\\
     &\geq \sum_{t=1}^{T+1}   f(x, y_t) +  \vec \alpha \cdot  \mGamma_{x} - \epsilon (T+2)\enspace,  \tag{by $\epsilon$-optimality of $x_{T+2}$}
\end{align*}
proving the lemma.
\end{proof}

\begin{proof}[Proof of Lemma~\ref{lem:stab+noise_approx}]
Let $ x^* = \argmax_{x\in \X} \sum_{t=1}^T f(x, y_t)$. Then by Lemma~\ref{lem:BTL_approx},
\begin{align*}
\regret &= \E\left[ \sum_{t=1}^T  f(x^*, y_t)   -  \sum_{t=1}^T  f(x_t, y_t) \right]
\\
& = \E\left[ \sum_{t=1}^T  f(x^*, y_t) -  \sum_{t=1}^T  f(x_{t+1}, y_t) \right]  + \E\left[  \sum_{t=1}^T  f(x_{t+1}, y_t)  -  \sum_{t=1}^T  f(x_t, y_t) \right]
\\
\tag*{\qedhere}
& \leq \E\left[  \vec \alpha \cdot \left( \mGamma_{x_1} -  \mGamma_{x^*} \right)  \right]  + \E\left[  \sum_{t=1}^T  f(x_{t+1}, y_t)  -  \sum_{t=1}^T  f(x_t, y_t) \right] + \epsilon (T+1)
\enspace.
\end{align*}
\end{proof}

\section{Omitted Proofs from Section~\ref{sec:auctions}}
\label{app:auctions}

\subsection{Proof of Equation~\eqref{eq:Im=I}}  \label{app:Im=I}

Consider any vector of reserves $\vec r\in \I$ and let
$\vec r'\in \I_m$ be the vector obtained by rounding each reserve
price down to the nearest multiple of $1/m$, except for reserves that lie in the range $[0,1/m)$, which are rounded up to $1/m$.

First we show that by increasing all the reserves in $\vec r$ that are below $1/m$ to the value $1/m$, we cannot decrease the revenue by more than $(\card{W}-\card{W''})/m$ where $W$ is the set of winners under $\vec r$ and $W''$ is the set of winners under the new reserves.
Let $\vec r''$ be the vector of new reserves and let $p_i$ and $p_i''$ be the payments of bidder $i$ in auctions $\vec r$ and $\vec r''$. Let $v_i$ be the valuation of bidder $i$. Let $b$ and $b''$ denote the highest unserviced bids that cleared their reserve under $\vec r$ and $\vec r''$, respectively, or $0$ if all the bidders that cleared their reserve were serviced.
Recall that $W$ and $W''$ are the sets of winners under $\vec r$ and $\vec r''$, respectively.
First, note that the set of bidders that clear the reserves under $\vec r''$ is always a subset of those that clear the reserves under $\vec r$. However,
if $b\ge 1/m$ then all the bids in $W$ must be at least $1/m$, and so all the bidders in $W$ will clear their reserves under $\vec r''$, and therefore $W''=W$. Moreover, also the highest unserviced bidder under $\vec r$ clears the new reserve, and so $b''=b$. Thus, for all the winners $i$, $p''_i=\max\set{r''_i,b''}=\max\set{r_i,1/m,b}=\max\set{r_i,b}=p_i$, so the revenue under new reserves is the same as under old reserves.
Now consider the case where $b<\frac 1m$. This means that all the unserviced bids that cleared the reserves $\vec r$ are lower than $\frac 1m$, and so they do not clear the new reserves $\vec r''$. As a result, the set of bidders that clear the new reserves is a subset of $W$, so $b''=0$ and $W''\subseteq W$. In fact, $W''$ in this case consists exactly of the bidders that clear the new reserves. For a bidder $i\in W''$ we have
\[
   p''_i = \max\{r''_i, b''\} = \max\set{r_i,1/m} \ge \max\set{r_i,b} = p_i,
\]
so the revenue obtained from $i\in W''$ can only increase under $\vec r''$ compared with $\vec r$. The bidders $i\in W\backslash W''$, that is those who win an item under $\vec r$ but not under $\vec r'$, must have a bid (and therefore a payment) smaller than $1/m$. Therefore, the overall revenue decreases by at most $(\card{W}-\card{W''})/m$.

Now we assume that we start with a reserve price $\vec r$, such that $r_i\geq 1/m$ for all $i$, and let $\vec r'$ be the reserve vector where all reserves in $\vec r$ are rounded down to the nearest multiple of $1/m$. If $v_i > r_i$, then
$v_i > r'_i$, so any bidder who would have been included in the basic VCG auction using reserves $\vec r$ is still included with
$\vec r'$. This can only increase the number of bidders who are serviced and therefore pay a charge. We now show also that the total decrease in payment from bidders with value at least $1/m$ is at most $s/m$.

Consider the amount that serviced bidder $i$ is charged. This is
the maximum of $r_i$ and the highest bid of a bidder in the basic VCG
auction who was not serviced (or 0 if all bidders were serviced); let
$b$ denote this highest unserviced bid in the basic VCG auction under $\vec r$,
and similarly let $b'$ denote such a bid under $\vec r'$. Since the set
of bidders entering the basic VCG auction increases from $\vec r$ to $\vec r'$,
we must have $b' \geq b$.

Let $U$ be the set of bidders serviced under $\vec r$, and $U'$ the set
under $\vec r'$. The difference in revenue is
\begin{align}
\notag
&\sum_{i \in U}\max\{r_i, b\} - \sum_{i \in U'} \max\{r'_i, b'\}
\\[-3pt]
\label{eq:l:item}
&\qquad\qquad{}
=\!\!\!\sum_{i \in U \cap U'}\!\!\!(\max\{r_i, b\} - \max\{r'_i, b'\})
+\!\!\!\sum_{i \in U\setminus U'}\!\!\!\max\{r_i, b\}
-\!\!\!\sum_{i' \in U'\setminus U}\!\!\!\max\{r'_{i'}, b'\}.
\end{align}
We begin by analyzing the last two terms.
For any $i \in U\setminus U'$ and $i' \in U'\setminus U$,
\[
r'_{i'} + 1/m > r_{i'} > v_{i'} \geq v_i \mathrel{\ge} r_i,
\]
where $r_{i'}>v_{i'}$, because $i'$ did not enter the basic VCG auction for $\vec r$ (otherwise it would have
won), and $v_{i'}\geq v_i$, because $i$ enters the basic VCG auction for $\vec r'$, but is not
allocated the item.
Therefore,
\[
\max\{r'_{i'}, b'\} \geq \max\bigSet{r_i - 1/m,\,b} \geq
\max\{r_i, b\} - 1/m.
\]
Since $\card{U\setminus U'}\le\card{U'\setminus U}$, we can pick $V\subseteq U'\setminus U$ such that
$\card{V}=\card{U\setminus U'}$ and obtain
\begin{align*}
  \sum_{i \in U\setminus U'}\!\!\!\max\{r_i, b\}
-\!\!\!\sum_{i' \in U'\setminus U}\!\!\!\max\{r'_{i'}, b'\}
\le
  \sum_{i \in U\setminus U'}\!\!\!\max\{r_i, b\}
-\!\sum_{i' \in V}\!\max\{r'_{i'}, b'\}
\le
\frac{\card{U\setminus U'}}{m}
\enspace.
\end{align*}
Note that each term in the first sum of \Eq{l:item} is at most $1/m$, because
\[
  \max\set{r'_i,b'}\ge\max\bigSet{r_i-1/m,\,b}\ge\max\set{r_i,b}-1/m.
\]
Thus, we have
\[
  \rev(\vec r,\vec v)-\rev(\vec r',\vec v) \le
  \frac{\card{U\cap U'}}{m}
+ \frac{\card{U\setminus U'}}{m}
  = \frac{\card{U}}{m}
\enspace.
\]

Thus if we start from any reserve price vector $\vec r$, we can first round up to $1/m$ all reserves that are below $1/m$ and then round down any other reserve to the nearest multiple of $1/m$. Denoting the initial set of winners $W$,
the intermediate set of winners $U$ and the final set of winners $U'$, we have shown that the revenue can drop by at most $(\card{W}-\card{U})/m$ in the first step and by at most $\card{U}/m$ in the second step, so the overall drop
is at most $\card{W}/m\le s/m$. This yields the approximation result
\begin{equation}
\tag*{\qed}
  \max_{\vec r \in \I}  \sum_{t=1}^T \rev (\vec r, \vec v_t)  - \max_{\vec r \in \I_m}  \sum_{t=1}^T \rev (\vec r, \vec v_t) \leq \frac {T s}{m}.
\end{equation}

\subsection{Proof of Lemma~\ref{lem:impl_envy}}  \label{app:impl_envy}

We will describe an isomorphism with the setting in the proof of
Lemma~\ref{lem:impl_individual}, which will allow us to directly apply the analysis of $\mGamma^{\textsc{VCG}}$
to $\mGamma^{\textsc{IP}}$.
The isomorphism from the IP setting to VCG setting maps
items $\ell$ in IP to bidders $i$ in VCG, and price vectors $\vec a$ to reserve price vectors $\vec r$.
We therefore assume that $n$ in VCG equals $k$ in IP,
and the values of $m$ in both settings are equal. Then, indeed $\mGamma^{\textsc{VCG}}$
equals $\mGamma^{\textsc{IP}}$.

Next we need to show how to construct $S_j$ for all $j$ in the $\mGamma^{\textsc{IP}}$ setting. Assume that $j$
corresponds to the bidder $i$ and the bit $\beta$ in VCG setting, and the item $\ell$
and the bit $\beta$ in IP setting.
In VCG, we considered the bid profiles $\vec v_\ellOther=(\ellOther/m)\vec e_i$,
and the revenue of any auction $\vec r$ is
\[
  \rev^{\textsc{VCG}}(\vec r, \vec v_\ellOther) = r_i \mathbf{1}_{(\ellOther \geq mr_i )}
.
\]
In IP setting, we consider profiles $\vec v'_\ellOther$
of combinatorial valuations over bundles $\vq\in\set{0,1}^k$, in which
all bidders have values zero on all bundles and one bidder has value $\ellOther/m$ for
bundle $\vec e_\ell$ and zero on all other bundles.\footnote{%
Note that a simple variation of this bid profile can be used in settings where the valuations need to satisfy additional assumptions, such as (sub-)additivity  or  free disposal. In such cases, we can
use a similar bid profile where one bidder has valuation $\ellOther/m$ for any bundle that includes item $\ell$ and all other valuations are $0$.}
In this case, we have
\[
  \rev^{\textsc{IP}}(\vec a, \vec v'_\ellOther) = a_i \mathbf{1}_{(\ellOther \geq ma_i )}
.
\]
Thus, both the translation matrices $\mGamma^{\textsc{VCG}}$ and $\mGamma^{\textsc{IP}}$ as well as
the revenue functions $\rev^{\textsc{VCG}}$ and $\rev^{\textsc{IP}}$ are isomorphic (given these choices of
the profiles). Therefore, we can set the weights $w'_\ellOther$ in IP setting
equal to the weights $w_\ellOther$ in VCG setting and obtain admissibility and implementability
with the same constants and complexity.
\qed

\subsection{Proof of Lemma~\ref{lem:P-grid}}\label{app:P-grid}

\paragraph{Single-minded setting.}
In the single-minded setting, each bidder $i$ is interested in one particular bundle of items $\hat\vq_i$. That is, $v_i(\vq_i) = v_i(\hat{\vq}_i)$ for all $\vq_i \supseteq \hat{\vq}_i$ and $0$ otherwise. Consider any vector of prices $\vec a \in \P$ and let $\vec a'\in \P_m$ be the vector obtained by rounding up all prices that are below $k/m$ to the nearest multiple of $1/m$ and reducing all prices that are at least $k/m$ by $(k-1)/m$ and then rounding down to the nearest multiple of $1/m$. More formally,
\[ a'_\ell = \begin{cases}
			 \ceils{a_\ell}_{\!\frac1m}
                  &\text{if }a_\ell < \frac km,\\
			 \Floors{a_\ell - \frac{k-1}{m}}_{\!\frac1m}
                  &\text{if }a_\ell \geq \frac km,
		  \end{cases}
\]
where $\lceil \cdot \rceil_{\epsilon}$ and $\lfloor \cdot \rfloor_{\epsilon}$ indicate rounding up and down to the nearest multiple of $\epsilon$, respectively. Note that $a'_\ell\in \{1/m, \dots, (m-1)/m, 1\}$ by definition.
Since $a_\ell\ge 0$ for all $\ell$, we can assume without loss of generality that bidder $i$ receives either the bundle $\vec 0$ or $\hat\vq_i$. Only
the bidders that receive $\hat\vq_i$ contribute towards the revenue under $\vec a$. For each such bidder $i$, there are two cases:
\begin{enumerate}
\item All items $\ell$ in bundle $\hat\vq_i$ are such that $a_\ell< k/m$. Since there are at most $k$ items included in $\hat\vq_i$, we have that $\hat\vq_i \cdot \vec a < k^2/m$, i.e., the original payment for $\hat\vq_i$ is at most $k^2/m$.
Since the payment of the bidder $i$ under $\vec a'$ is non-negative, regardless of the bundle it receives, the drop in the payment amount by the bidder $i$ is at most $k^2/m$.
\item There is an item $j$ in bundle $\hat\vq_i$ such that $a_j \geq k/m$. This means that there can be at most $k-1$ items whose original price was less than $k/m$, and each of these prices has increased by at most $1/m$.
These items increase the price of the bundle $\hat\vq_i$ by at most $(k-1)/m$. Since $a'_j\le a_j - (k-1)/m$, item $j$ decreases the price of the bundle by at least $(k-1)/m$. Altogether, we therefore have that $\hat\vq_i \cdot \vec a' \leq\hat\vq_i \cdot \vec a$, so the bundle $\hat\vq_i$ is assigned to the bidder $i$ under $\vec a'$. We also have that $a'_\ell\ge a_\ell-k/m$ for all $\ell$, and since there are only $k$ items, the price of the bundle $\hat\vq_i$
drops by at most $k^2/m$.
\end{enumerate}
Thus, in both cases, the payment of bidder $i$ for the bundle it receives under $\vec a'$ is lower by at most $k^2/m$ than the payment under $\vec a$. Therefore, for any valuation profile of single-minded bidders $\vec v$,
\[ \rev (\vec a, \vec v)  - \rev(\vec a', \vec v) \leq \frac {nk^2}{m}.
\]

\paragraph{Unit-demand setting.}

In the unit-demand setting with infinite supply, each bidder $i$ has  $v_{i}(\vec e_\ell)$ for item $\ell$, and wishes to purchase \emph{at most one item}, i.e., item $ \argmax_{\ell} \left( v_i(\vec e_\ell) - a_\ell \right)$.
We show that for any $\vec a\in \P$ there is $\vec a'' \in \P_m$ such that for any valuation profile $\vec v$, $\rev(\vec a, \vec v) - \rev(\vec a'', \vec v) \leq  O(nk/m)$.
At a high level, we first choose $\vec a'$, with $a'_\ell \in \{0, 1/m, \cdots, m/m\}$, as discounted prices such that items with higher prices are discounted by larger amounts. It is not hard to see that under this condition, no bidder purchases a less expensive item in auction $\vec a'$. So, the loss in the revenue of the auction is bounded by the discount on the items.
Using this intuition,  it is sufficient to find $\vec a'$, with $a'_\ell \in \{0, 1/m, \cdots, m/m\}$,  such that $a_\ell \geq a'_\ell \geq a_\ell - O(k/m)$ for all  $\ell\le k$, and $a_\ell - a'_\ell \ge a_{\ell'} - a'_{\ell'}$ when  $a_\ell \ge a_{\ell'}$. We then show how to derive $\vec a'' \in \P_m$ whose revenue is at least as good as $\vec a'$.

Without loss of generality, assume $a_1\le a_2 \le\dotsb\le a_k$. For ease of exposition,
let $\epsilon = 1/m$. To begin, let $a'_1$ be the largest multiple of $\epsilon$ less than or equal
to $a_1$.  For $\ell = 2, \ldots, k$, let $a'_\ell$ be the largest
  multiple of $\epsilon$ less than or equal to $a_\ell$ such that
  $a_\ell - a'_\ell \geq a_{\ell-1} - a'_{\ell-1}$. Note that
  $a'_\ell$ is well defined, since we can begin by considering $a'_\ell=a'_{\ell-1}$
  and then increase by $\epsilon$ until the condition is violated.
  This construction means that pricing of items with a larger $\ell$
  is more attractive in $\vec a'$ than it was in $\vec a$.
  Thus, no bidder that prefers an item $\ell$ under $\vec a$, would prefer any item $\ell'<\ell$ under $\vec a'$.
  Also, $a'_1\le a'_2\le\dotsb\le a'_k$.
  Therefore, the revenue obtained from the bidder $i$ who prefers $\ell$ under $\vec a$ is at least $a'_\ell$ under $\vec a'$,
  which implies the bound
  \[
    \rev (\vec a, \vec v) - \rev ( \vec a', \vec v)
    \le
    n\max_{\ell\le k}(a_\ell-a'_\ell).
  \]
  To complete the proof, we argue by induction that $a_\ell-a'_\ell\le\ell\epsilon$. This clearly holds for $\ell=1$.
  For $\ell\ge 2$, the definition of $a'_\ell$, in the absence of discretization to $\epsilon$, would yield $a'_\ell=a_\ell-(a_{\ell-1} - a'_{\ell-1})$. With
  the discretization, we have
  \[
    a'_\ell\ge a_\ell-(a_{\ell-1} - a'_{\ell-1})-\epsilon
           \ge a_\ell-(\ell-1)\epsilon-\epsilon
           = a_\ell-\ell\epsilon,
  \]
  where we used the inductive hypothesis at $\ell-1$.

Next, we construct $\vec a''$ by setting $a''_\ell = \epsilon$ when $a'_\ell = 0$, and $a''_\ell =a'_\ell$ otherwise.
We show that  any bidder $i$ that purchased some item $\ell$ in auction $\vec a'$ with  price $a'_{\ell}\geq \epsilon$, purchases the item $\ell$ in auction $\vec a''$ as well.

Assume to the contrary that bidder $i$ purchases another item $\ell'$. Since there is infinite supply, bidder $i$ could have purchased item $\ell'$ in auction $\vec a'$, but preferred to purchase item $\ell$. Therefore,
\[ v_i(\vec e_\ell) - a'_\ell \geq v_i(\vec e_{\ell'}) - a'_{\ell'}.
\]
Note that $a''_\ell = a'_\ell$ and $a''_{\ell'} \geq a'_{\ell'}$ for all $\ell'\neq \ell$. So,
 \[ v_i(\vec e_\ell) - a''_\ell
   \geq v_i(\vec e_{\ell'}) - a''_{\ell'}.
\]
Therefore, bidder $i$ purchases item $\ell$ in auction $\vec a''$ as well.

Since only the bidders who receive items at  price $0$ might not be allocated the same items, we have that $\rev(\vec a'', \vec v) \geq \rev(\vec a', \vec v)$. This completes the proof.
\qed

\subsection{Proof of Lemma~\ref{lem:admissible_Sm}}\label{app:admissible-Sm}

Since $\Gamma^{\textsc{SL}}_{\vec \theta, \vec v} = \rev( \vec \theta, \vec v)$,
$\mGamma^{\textsc{SL}}$ can be implemented by datasets
$S_{\vec v} = \{ (1, \vec v)\}$ for $\vec v\in V$.
So, $\mGamma$ is implementable with complexity  $1$.

\begin{figure}[h]
\begin{subfigure}[b]{0.49\textwidth}
\setlength{\unitlength}{0.65cm}
\thicklines
\begin{picture}(12,4)(1,0)
\thicklines
\put(2,4){\line(1,0){8}}
\put(10.2,3.9){\scriptsize{Bidder $i$}}
\put(2,3){\line(1,0){8}}
\put(10.2,2.9){\scriptsize{Bidder $n$}}
\put(2,1.5){\line(1,0){8}}
\put(10.2,1.4){\scriptsize{Bidder $i$}}
\put(2,.5){\line(1,0){8}}
\put(10.2,0.4){\scriptsize{Bidder $n$}}

\put(1.2,3.4){$\mGamma_{\vec\theta}$}

\put(2.5,3.8){\line(0,1){.4}}
\put(3.5,3.8){\line(0,1){.4}}
\put(6,3.8){{\color{mygray} \linethickness{3pt}\line(0,1){.6}}}
\put(5.85,3.4){\footnotesize{$\vec\theta_b^i$}}
\put(9,3.8){\line(0,1){.4}}
\put(10,4){{\color{mygray}\circle*{0.2}}}
\put(9.85,3.55){{\color{mygray}\footnotesize{$v_i$}}}

\put(3,2.8){\line(0,1){.4}}
\put(4,2.8){\line(0,1){.4}}
\put(7,2.8){\line(0,1){.4}}
\put(9.5,2.8){\line(0,1){.4}}
\put(7,3){{\color{mygray}\circle*{0.2}}}
\put(6.9,2.5){{\color{mygray}\footnotesize{$v_n$}}}

\put(1.2,0.9){$\mGamma_{\vec\theta'}$}

\put(2.5,1.3){\line(0,1){.4}}
\put(3.5,1.3){\line(0,1){.4}}
\put(6.6,1.3){{\color{mygray} \linethickness{3pt}\line(0,1){.6}}}
\put(6.45,0.9){\footnotesize{$\vec\theta'^i_b$}}
\put(9,1.3){\line(0,1){.4}}
\put(10,1.5){{\color{mygray}\circle*{0.2}}}
\put(9.85,1.05){{\color{mygray}\footnotesize{$v_i$}}}

\put(3,0.3){\line(0,1){.4}}
\put(4,0.3){\line(0,1){.4}}
\put(7,.3){\line(0,1){.4}}
\put(9.5,.3){\line(0,1){.4}}
\put(7,.5){{\color{mygray}\circle*{0.2}}}
\put(6.9,0){{\color{mygray}\footnotesize{$v_n$}}}
\end{picture}
\caption{Case 1}
\end{subfigure}
~
\begin{subfigure}[b]{0.49\textwidth}
\setlength{\unitlength}{0.65cm}
\begin{picture}(12,4)(1,0)\thicklines
\put(2,4){\line(1,0){8}}
\put(10.2,3.9){\scriptsize{Bidder $1$}}
\put(2,3){\line(1,0){8}}
\put(10.2,2.9){\scriptsize{Bidder $n$}}
\put(2,1.5){\line(1,0){8}}
\put(10.2,1.4){\scriptsize{Bidder $1$}}
\put(2,.5){\line(1,0){8}}
\put(10.2,0.4){\scriptsize{Bidder $n$}}

\put(1.2,3.4){$\mGamma_{\vec\theta}$}

\put(2.5,3.8){\line(0,1){.4}}
\put(6,3.8){\line(0,1){.4}}
\put(3.5,3.8){{\color{mygray} \linethickness{3pt}\line(0,1){.6}}}
\put(3.35,3.4){\footnotesize{$\vec\theta_b^1$}}
\put(9,3.8){\line(0,1){.4}}
\put(10,4){{\color{mygray}\circle*{0.2}}}
\put(9.85,3.55){{\color{mygray}\footnotesize{$v_1$}}}

\put(3,2.8){\line(0,1){.4}}
\put(5,2.8){\line(0,1){.4}}
\put(7,2.8){\line(0,1){.4}}
\put(9.5,2.8){\line(0,1){.4}}
\put(5,3){{\color{mygray}\circle*{0.2}}}
\put(5.15,3.15){{\color{mygray}\footnotesize{$v_n$}}}
\put(4.8,2.4){\footnotesize{$\vec\theta_b^n$}}

\put(1.2,0.9){$\mGamma_{\vec\theta'}$}

\put(2.5,1.3){{\color{mygray} \linethickness{3pt}\line(0,1){.6}}}
\put(6,1.3){\line(0,1){.4}}
\put(3.5,1.3){\line(0,1){.4}}
\put(2.35,0.9){\footnotesize{$\vec\theta'^1_{b-1}$}}
\put(9,1.3){\line(0,1){.4}}
\put(10,1.5){{\color{mygray}\circle*{0.2}}}
\put(9.85,1.05){{\color{mygray}\footnotesize{$v_1$}}}

\put(3,0.3){\line(0,1){.4}}
\put(6,0.3){\line(0,1){.4}}
\put(7,.3){\line(0,1){.4}}
\put(9.5,.3){\line(0,1){.4}}
\put(5,.5){{\color{mygray}\circle*{0.2}}}
\put(5.15,.65){{\color{mygray}\footnotesize{$v_n$}}}
\put(5.8,-0.05){\footnotesize{$\vec\theta'^n_b$}}
\end{picture}
\caption{Case 2}
\end{subfigure}
\caption{Examples of cases 1 and 2 in the proof of Lemma~\ref{lem:admissible_Sm}. Bidder valuations are depicted as gray circles on the real line and the revenues of the two auctions $\vec \theta$ and $\vec \theta'$ are depicted as gray solid vertical lines.}
\label{fig:cases1-2}
\end{figure}

It remains to show admissibility. Take any two different auctions $\vec \theta$ and $\vec \theta'$. We will show that $\mGamma^{\textsc{SL}}_{\vec \theta} \neq \mGamma^{\textsc{SL}}_{\vec \theta'}$.
Let $b$ be the smallest level at which there is $i\in[n]$ such that $\theta_b^i \neq \theta'^i_b$ and among such $i$ choose the largest. There are three cases (see Figure~\ref{fig:cases1-2} for cases 1 and 2):

\begin{enumerate}
\item  $i<n$: Consider the bid profile $\vec v^{i,\ell}$ for $\ell= m \theta^n_b$, in which bidder $n$'s bid equals $\theta^n_b=\theta'^n_b$, meaning that his level in both $\vec\theta$ and $\vec\theta'$ is $b\ge 0$. Since bidder $i$'s bid equals $1$, his level is $s-1\ge 1$ under any auction, and bidders other than $n$ and $i$ have levels equal to $0$ or $-1$. Thus, bidder $i$ wins the item in both $\vec\theta$ and $\vec\theta'$,
    and pays the $b^{\text{th}}$ threshold. So, $\rev(\vec \theta, \vec v^{i,\ell}) = \theta^i_b  \neq  \theta'^i_b = \rev(\vec \theta', \vec v^{i,\ell})$.

\item  $i = n$ and $b \geq 1$:
Without loss of generality, assume that $\theta^n_b  <  \theta'^n_b$. Let $\ell = m \theta^n_b$ and consider $\vec v^{1,\ell}$. Bidder $n$'s bid equals $\theta^n_b$, so his level under $\vec\theta$ is $b$, whereas his level under $\vec\theta'$ is $b-1\ge 0$ (since all thresholds below $b$ are the same in both auctions). So, bidder $1$ wins the item in both auctions and pays the threshold that corresponds to bidder $n$'s level. Therefore, $\rev(\vec \theta, \vec v^{1, \ell}) = \theta^1_b  \neq  \theta^1_{b-1} = \theta'^1_{b-1} = \rev(\vec \theta', \vec v^{1, \ell})$.

\item $i = n$ and $b = 0$: Consider  bid profile $\vec e_n$. In this profile,  bidder $n$ wins and pays the reserve price, which is equal to his threshold at $b=0$. Therefore, $\rev(\vec \theta, \vec e_n) = \theta^n_0  \neq  \theta'^n_0 = \rev(\vec \theta', \vec e_n)$.
\end{enumerate}

Therefore, $\mGamma^{\textsc{SL}}_{\vec \theta} \neq \mGamma^{\textsc{SL}}_{\vec \theta'}$. Since any element of $\mGamma^{\textsc{SL}}$ is a multiple of $1/m$, $\mGamma^{\textsc{SL}}$ is $1/m$-admissible.
\qed 

\section{Omitted Proofs from Section~\ref{sec:overallopt}} \label{app:overallOPT}

\subsection{Proof of Lemma~\ref{lem:iid}}\label{app:iid}

Let $x^* = \arg\max_{x\in \X} \E_{y\sim F}\left[f(x,y)\right]$. By the
definition of regret we have that for any $y_1, \ldots, y_T$,
\[
\sum_{t=1}^T \E_{x_t}[f(x_t,y_t)] = \sup_{x\in \X} \sum_{t=1}^T f(x,y_t)-\regret \geq
\sum_{t=1}^T f(x^*,y_t) - \regret .
\]
Observe that the random variables $Z_t = f(x^*,y_t)$ are drawn i.i.d.\ with expected value $\E_{y\sim F}[f(x^*,y)]$ and are bounded in $[0,1]$. Hence, by the Hoeffding bound, we get that with probability at least $1-e^{-2 T \epsilon^2}$:
\begin{equation}
\frac{1}{T}\sum_{t=1}^T f(x^*,y_t)\geq \E_{y\sim F}[f(x^*,y)]- \epsilon .
\end{equation}
By setting $\epsilon=\sqrt{\frac{\log(1/\delta)}{2T}}$ and combining the two bounds we get the result.
\qed

\subsection{Proof of Lemma~\ref{lem:Markov}}\label{app:Markov}

Let $x^* = \arg\max_{x\in \X} \E_{y\sim F}\left[f(x,y)\right]$. As in the proof Lemma~\ref{lem:iid},
\[
\sum_{t=1}^T \E_{x_t}[f(x_t,y_t)] = \sup_{x\in \X} \sum_{t=1}^T f(x,y_t)-\regret \geq
\sum_{t=1}^T f(x^*,y_t) - \regret .
\]
Since $y_1,\ldots,y_T$ are a Markov chain, by applying Theorem \ref{thm:conc-markov} to this chain and to the function $f(x^*,\cdot)$, we obtain that with probability at least $1-2 \exp\left(-\frac{T \gamma \epsilon^2}{4+10\epsilon}\right)$,
\begin{equation}
\frac{1}{T}\sum_{t=1}^T f(x^*,y_t)\geq \E_{y\sim F}[f(x^*,y)]-\epsilon.
\end{equation}
If we set $\epsilon=\sqrt{\frac{14 \log(2/\delta)}{\gamma T}}$ then we have, either $\epsilon>1$, in which case the inequality is trivial, since $f(x,y)\in [0,1]$, or if $\epsilon \leq 1$, then $\epsilon=\sqrt{\frac{14 \log(2/\delta)}{\gamma T}} \geq \sqrt{\frac{(4+10\epsilon) \log(2/\delta)}{\gamma T}}$, which implies that the inequality holds with probability $1-\delta$.
\qed

\section{Omitted Proofs from Section~\ref{sec:contexts}}
\label{app:contexts}

\subsection{Proof of Lemma~\ref{lem:contextual_admissibility}}
\label{app:contexts-admissible}
First we argue that $\mGamma^\sep$ has distinct rows. We will show that for any two rows $\pi$, $\pi'$, there exists a column $(\sigma,j)$ on which they differ. Since $\sep$ is a separator, there exists a context $\sigma^*\in \sep$ on which $\pi(\sigma^*)\neq \pi'(\sigma^*)$. Now by the admissibility of the original matrix $\mGamma$, we know that for any two $x,x'\in \X$, there exists a column $j$ of the original matrix such that $\Gamma_{x,j}\neq \Gamma_{x',j}$. Applying the latter for $\pi(\sigma^*), \pi'(\sigma^*)\in \X$, we get that there exists a $j^*$, such that $\Gamma_{\pi(\sigma^*), j^*}\neq \Gamma_{\pi'(\sigma^*),j^*}$. By the definition of $\mGamma^\sep$, the latter implies that $\Gamma_{\pi, (\sigma^*,j^*)}^\sep \neq \Gamma_{\pi', (\sigma^*,j^*)}^\sep$. Thus the two rows $\pi,\pi'$ of matrix $\mGamma^\sep$ differ at column $(\sigma^*,j^*)$.

We now argue that if the quantity $\delta$ from Definition~\ref{defn:admissible} is valid for $\mGamma$, it is also valid for $\mGamma^\sep$. We remind the reader that $\delta$ is required to lower bound the minimum positive difference between any two elements of a column. Since the values appearing in the column $(\sigma,j)$ of matrix $\mGamma^\sep$ are all taken from the column $j$ of matrix $\mGamma$, any lower bound $\delta$ that is valid for $\mGamma$ is
also valid for $\mGamma^\sep$. Thus $\mGamma^\sep$ is also $\delta$-admissible.
\qed

\subsection{Proof of Lemma~\ref{lem:contextual_imp}}  \label{app:contexts-implementable}
The intuition of the proof is as follows: if we know that we can simulate every column in $\mGamma$ with a sequence of polynomially many weighted inputs $\{(w,y)\}$, then we can simulate each column of $\mGamma^\sep$ associated with context $\sigma \in \sep$ and column $j$ of $\mGamma$, by a sequence of weighted contextual inputs $\braces{\,\parens{w,(\sigma,y)}\,}$, which is essentially a contextually annotated copy of the sequence of inputs we used to simulate column $j$ of $\mGamma$. We now show this intuition more formally.

Since $\mGamma$ is implementable with complexity $M$, we have that for any  column
$j\in \{1,\ldots, N\}$ of matrix $\mGamma$ there exists a weighted dataset $S_j=\{(w,y)\}$, with $|S_j|\leq M$, such that:
\begin{align*}
\hspace{1in}
    &\text{for all $x,x'\in\X$:}
    &\Gamma_{xj}-\Gamma_{x'j} &= \sum_{(w,y)\in S_j} w\cdot \bigParens{f(x,y)-f(x',y)}.
&\hspace{1in}
\end{align*}

For the contextual problem, we need to argue that for any
column $(\sigma,j)\in \sep\times [N]$ of matrix $\mGamma^\sep$ there exists a
weighted contextual dataset $S_{\sigma,j}^c =\braces{\,\parens{w,(\sigma,y)}\,}$, with $|S_{\sigma,j}^c|\leq M$, such that:
\begin{align*}
\hspace{1in}
    &\text{for all $\pi,\pi'\in\Pi$:}
    &\Gamma_{\pi,(\sigma,j)}^\sep-\Gamma_{\pi',(\sigma,j)}^\sep &= \sum_{(w,(\sigma,y))\in S_{\sigma,j}^c} w\cdot \bigParens{f(\pi(\sigma),y)-f(\pi'(\sigma),y)}.
&\hspace{1in}
\end{align*}
The construction of such a contextual dataset is straightforward: for each $(w,y)\in S_j$, create a data point $\bigParens{w,(\sigma,y)}$ in $S_{\sigma,j}^c$. To finish the proof,
observe that
\begin{align*}
\Gamma_{\pi,(\sigma,j)}^\sep-\Gamma_{\pi',(\sigma,j)}^\sep =~& \Gamma_{\pi(\sigma),j}-\Gamma_{\pi'(\sigma),j} = \sum_{(w,y)\in S_j} w\cdot \bigParens{f(\pi(\sigma),y)-f(\pi'(\sigma),y)}
\\
\tag*{\qed}
=~&\sum_{(w,(\sigma,y))\in S_{\sigma,j}^c} w\cdot \bigParens{f(\pi(\sigma),y)-f(\pi'(\sigma),y)}.
\end{align*}

\subsection{Proof of Lemma~\ref{lem:transductive-stability}} \label{app:context-stability}

We will show that for each $t \leq T$,
\begin{equation}
\E[f_c(\pi_{t+1},(\sigma_{t}, y_t))- f_c(\pi_t, (\sigma_t,y_t))]\leq 2N\rho(1+\delta^{-1}).
\end{equation}
Since $f_c(\pi,(\sigma_t,y_t))=f(\pi(\sigma_t),y_t)$ and since $f(x,y)\in [0, 1]$, it suffices to show that $\Pr[\pi_{t+1}(\sigma_t)\neq \pi_t(\sigma_t)]\leq 2N\rho(1+\delta^{-1})$.

Observe that if $\pi_{t+1}(\sigma_t)\neq\pi_{t}(\sigma_t)$ then it must be that $\Gamma_{\pi_{t+1}(\sigma_t),j}\neq \Gamma_{\pi_t(\sigma_t),j}$ for some $j\in [N]$, by the admissibility of matrix $\mGamma$ for the non-contextual problem. Thus we need to show that the probability that there exists a $j$ such that $\Gamma_{\pi_{t+1}(\sigma_t),j}\neq \Gamma_{\pi_t(\sigma_t),j}$ is at most $2N\rho(1+\delta^{-1})$. By the union bound it suffices to show that for any given $j$, the probability that $\Gamma_{\pi_{t+1}(\sigma_t),j}\neq \Gamma_{\pi_t(\sigma_t),j}$ is at most $2\rho(1+\delta^{-1})$.

The proof of this fact then follows by identical arguments as in the proof of the non-contextual stability in Lemma \ref{lem:stability-approx} and we omit it for succinctness.
\qed

\subsection{Proof of Theorem~\ref{thm:transductive-FTPL-G}}

Using Lemma~\ref{lem:transductive-stability}, and since $D$ is uniform over $[-\nu,\nu]$ and is  $\left(\frac{1+2\epsilon}{2\nu \delta}, \frac{1+2\epsilon}{\delta}\right)$-dispersed, we have:
\[
\E\left[ \sum_{t=1}^T  f(x_t, y_t) - f(x_{t+1}, y_t) \right] \leq \frac{T N (1+2\epsilon)(1+\delta)}{\nu\delta^2}
\enspace.
\]

It remains to bound the term $\E\left[ \sum_{\sigma\in \tsep}\sum_{j=1}^{N} \alpha_{\sigma,j}( \Gamma_{\pi^*,(\sigma,j)}^\tsep  -  \Gamma_{\pi_1,(\sigma,j)}^\tsep ) \right]$,
which is at most
\[
  2\E\left[ \max_\pi \sum_{\sigma\in \tsep}\sum_{j=1}^{N} \alpha_{\sigma,j}\Gamma_{\pi,(\sigma,j)}^\tsep\right].
\]
Let $\beta_\pi = \sum_{\sigma\in \tsep}\sum_{j=1}^N \alpha_{\sigma,j} \Gamma_{\pi,(\sigma,j)}^\tsep$. We therefore have for any $\lambda > 0$,
\begin{align*}
\E\left[ \max_\pi \beta_\pi \right]
&= \tfrac{1}{\lambda} \ln \left( \exp\left(\lambda\E\left[ \max_\pi \beta_\pi \right] \right)\right) \\
&\leq  \tfrac{1}{\lambda} \ln \left( \E\left[ \exp\left(\lambda \max_\pi \beta_\pi  \right) \right] \right)  \tag{Jensen's inequality}\\
&\leq  \tfrac{1}{\lambda} \ln \left( \sum_\pi \E\left[ \exp\left(\lambda\beta_\pi  \right) \right] \right) \\
&=  \tfrac{1}{\lambda} \ln \left( \sum_\pi \prod_{\sigma, j} \E\left[ \exp\left(\lambda\alpha_{\sigma,j} \Gamma^\tsep_{\pi,(\sigma,j)} \right) \right] \right)  \tag{by independence of $\alpha_{\sigma,j}$}\\
&\leq  \tfrac{1}{\lambda} \ln \left( \sum_\pi  \prod_{\sigma, j} \exp\left(\tfrac{\nu^2 \lambda^2}{2} \right) \right) \tag{Hoeffding's lemma for bounded r.v.}\\
&= \frac{\ln|\Pi|}{\lambda} + \frac{\nu^2 N |\tsep| \lambda}{2} \\
&= \nu\sqrt{2N|\tsep| \ln |\Pi|}
\enspace.
\tag{by picking the optimal $\lambda$}
\end{align*}
The theorem now follows by combining the stability and the error bound above and invoking Lemma \ref{lem:stab+noise_approx}.
\qed

\subsection{Proof of Corollary~\ref{cor:sep:neg}}

According to Lemma~\ref{lem:stab+noise_approx},
it suffices to bound the stability term $\E\bigBracks{ \sum_{t=1}^T f_c(\pi_{t+1},(\sigma_{t}, y_t))- f_c(\pi_t, (\sigma_t,y_t)) }$ and the perturbation error term
$\E\bigBracks{ \valpha\inprod( \mGamma_{\pi_1}^\sep  -  \mGamma_{\pi^*}^\sep ) }$.
Since the distribution $D$ is $\left(\frac{1+2\epsilon}{2\nu \delta}, \frac{1+2\epsilon}{\delta}\right)$-dispersed and also $\mGamma^\sep$ is $\delta$-admissible according to Lemma~\ref{lem:contextual_admissibility},
we invoke Lemma~\ref{lem:stability-approx} and plug in $\epsilon=1/\sqrt{T}$ to conclude that the stability term is bounded by $\frac{TNd (1+2T^{-1/2})(1+\delta^{-1})}{\nu \delta}$.
On the other hand, by the exact same proof as in Theorem~\ref{thm:transductive-FTPL-G}, the perturbation error term is bounded by $2\nu\sqrt{2Nd \ln |\Pi|}$.
Finally plugging in the value of~$\nu$ (which optimizes the regret bound) finishes the proof.
\qed

\section{Pseudo-Polynomial and Integer-Weighted Oracles}
\label{app:weak_oracle}

\subsection{Pseudo-Polynomial Oracles}
\label{app:pseudo}

If the offline optpmization oracle is only a pseudo-polynomial-time approximation scheme, such as some combinatorial optimization methods or optimization methods based on gradient descent,
then its running time will depend not only on the cardinality of the weighted data set $S$, but also on the actual magnitude of the weights.
Therefore, the magnitude of the weights that are needed to implement matrix $\mGamma$ will influence the final running time of the learning algorithm. To capture such types of oracles, we introduce the notion of
the \emph{pseudo-complexity}:

\begin{definition}
The \emph{pseudo-complexity} of a weighted data set $S = \{ (w_\ell,y_\ell) \}_{\ell\in\mathcal{L}}$ with $w_\ell\in\R^+$ is
\[
   \pseudo{S}\coloneqq\max\BigBraces{\card{S},\;\textstyle\sum_{\ell\in\mathcal{L}}w_\ell}.
\]
\end{definition}

\begin{definition}
\label{def:imp-pseudo}
  A matrix $\mGamma$ is implementable with \emph{pseudo-complexity} $W$ if for each $j\in[N]$ there exists a weighted dataset $S_j$,
  with $\pseudo{S_j}\leq W$,
  such that
\begin{align*}
\hspace{1in}
    &\text{for all $x,x'\in\X$:}
    &\Gamma_{xj}-\Gamma_{x'j} &= \sum_{(w,y)\in S_j} w\bigParens{f(x,y)-f(x',y)}.
&\hspace{1in}
\end{align*}
In this case, we say that weighted datasets $S_j$, $j\in[N]$,
implement $\mGamma$ with pseudo-complexity $W$.
\end{definition}

As an immediate corollary, we obtain that the existence of a pseudo-polynomial-time offline oracle implies the existence of a polynomial-time online learner with regret $\order(\sqrt{T})$, whenever we have
access to an implementable and admissible matrix. The following is an immediate consequence of Theorem~\ref{thm:FTPL-U} used with pseudo-polynomial oracles.

\begin{corollary}
\label{cor:oracle-ftpl:pseudo:range}
Consider the problem $(\X, \Y, f)$ where $f: \X\times\Y\rightarrow [0,1]$.
Assume that $\mGamma\in[0,1]^{\card{\X}\times N}$ is implementable with pseudo-complexity $W$ and $\delta$-admissible,
and there exists an approximate offline oracle
$\opt\bigParens{S, \frac{1}{\sqrt{T}}}$ which runs in time $\poly(\pseudo{S},T)$.
Then Algorithm~\ref{alg:oracleftpl} with distribution $D$ as defined in Theorem~\ref{thm:FTPL-U}
runs in time $\poly(N,W,T,1/\delta)$ and achieves
regret $\order(N\sqrt{T}/\delta)$.
\end{corollary}

Note that for some problem classes $(\X, \Y, f)$ discussed in the main body of the paper,  such as multi-item or multi-unit auctions, we have $f: \X\times \Y\rightarrow [0,R]$ for $R > 1$.
As we briefly discussed in Section~\ref{sec:auctions}, to apply our results we first implicitly re-scale function $f$ to $f'$ such that $f'(x, y) = f(x, y)/R$ for all $x \in \X$ and $y\in \Y$. We then apply Theorem~\ref{thm:FTPL-U-approx} to the problem $(\X, \Y, f')$ and obtain a $O(N \sqrt{T}/ \delta)$ regret for the normalized problem.
Lastly, we scale up $f'$ back to $f$ and get a regret bound that is $R$ times the regret for the normalized problem, i.e., $O(R N \sqrt{T} / \delta)$ for the problem $(\X, \Y, f)$. Let $\mGamma$ be the $\delta$-admissible matrix for the original problem $(\X, \Y, f)$ that is implemented by  $S = \{ (w_\ell,y_\ell) \}_{\ell\in\mathcal{L}}$. Then,   $S' = \{ (w_\ell R,y_\ell) \}_{\ell\in\mathcal{L}}$ implements $\mGamma$ for the re-scaled problem $(\X, \Y, f')$. That is, the pseudo-complexity of implementing $\mGamma$ through this re-scaling process is increased by at most a factor of $R$. Since $R$ is polynomial in the parameters of interest in all of our applications, this does not affect the running time of our algorithms substantially. Formally, we rely on the following result.

\begin{corollary}
\label{cor:oracle-ftpl:pseudo}
Consider the problem $(\X, \Y, f)$ where $f: \X\times\Y\rightarrow [0,R]$, $R\ge 1$.
Assume that $\mGamma\in[0,1]^{\card{\X}\times N}$ is implementable with pseudo-complexity $W$ for $(\X, \Y, f)$ and $\delta$-admissible,
and there exists an approximate offline oracle
$\opt\bigParens{S, \frac{1}{\sqrt{T}}}$ which runs in time $\poly(\pseudo{S},T)$.
Then there is an algorithm that runs in time $\poly(N,R,W,T,1/\delta)$ and achieves
regret $\order(RN\sqrt{T}/\delta)$.
\end{corollary}

In Appendix~\ref{app:pseudo-examples}, we derive the pseudo-complexity of implementing matrices $\mGamma$ for the (un-normalized) applications discussed in this paper. In all cases, the pseudo-complexity ($W$) of the unnormalized problem multiplied by its range ($R$) either agrees with complexity ($M$) of the problem or is a small function of the complexity and the number of columns.
Thus, in all these applications we obtain online algorithms with runtime $\poly(N,M,T,1/\delta)$ with only requiring pseudo-polynomial offline algorithms.
Furthermore, in all auction applications discussed in this paper $1/\delta = \poly(N,M)$ leading to
online algorithms with runtime $\poly(N,M,T)$ that only require pseudo-polynomial offline algorithms.

\subsection{Integer-Weighted Oracles}
\label{app:integral}

In the main body of the paper, we consider oracles $\opt$ that take as input a \emph{real-weighted} dataset and a precision parameter $\epsilon$. In this section, we focus on a seemingly more restricted oracle, $\weakopt$, that only takes integer-weighted datasets. We show that all of our results in Section~\ref{sec:ftpl} extend to this class of oracles.

\begin{definition}[Integer-Weighted Offline Oracle]
 An \emph{integer-weighted offline oracle} $\weakopt$ receives
 as input a set of adversary actions with non-negative integer weights
 $S = \{ (w_\ell,y_\ell) \}_{\ell\in\mathcal{L}}$ with $w_\ell\in\mathbb{N}$, $y_\ell\in \Y$ and a desired precision $\epsilon$,
 and returns an action $\hat{x}=\weakopt(S,\epsilon)$
 such that
 \[
    \sum_{(w,y) \in S} w f(\hat{x}, y)
    \ge
    \max_{x\in \X} \sum_{(w,y) \in S} w f(x, y) - \epsilon.
 \]
\end{definition}

We show that \weakopt can be used to implement a real-weighted oracle $\opt$. This implies that online learning can be efficiently reduced to offline optimization with \weakopt oracles under the same conditions as studied in Section~\ref{sec:ftpl}.
Next lemma outlines the construction of \opt from \weakopt. The resulting variant of the the Oracle-based Generalized FTPL algorithm (Algorithm~\ref{alg:oracleftpl}) is provided in Algorithm~\ref{alg:ftpl-integral}. Note
that the constructed \opt inherits the polynomial or the pseudo-polynomial efficiency of \weakopt. Pseudo-polynomial running times may arise naturally for those integer-weighted oracles, in which the integer weights are implemented
simply by replicated examples.

\begin{algorithm}[t]
  \begin{algorithmic}
  \STATE Input:
                datasets $S_j$, $j\in[N]$, that implement a matrix $\mGamma\in[0,1]^{\card{\X}\times N}$,
  \STATE \hspace{\algorithmicindent}
                distribution $D$ over $\R^+$,
  \STATE \hspace{\algorithmicindent}
                an integer-weighted offline oracle \weakopt.
    \STATE{Draw $\alpha_j \sim D$ independently for $j = 1, \ldots, N$.}
    \FOR{$t=1, \ldots, T$}
    \STATE{For all $j$, let $\alpha_j S_j$ denote the scaled version of $S_j$, i.e., $\alpha_j S_j \coloneqq\{(\alpha_j w, y): (w,y) \in S_j\}$}.
    \STATE{Set $S=\bigSet{(1,y_1),\dotsc,(1,y_{t-1})}\cup\bigcup_{j\le N} \alpha_j S_j$.}
    \STATE{Set $S'=\Set{
                   (w',y):\: {w'=\floors{w\cdot2\card{S}\sqrt{T}}}\text{ and }(w,y)\in S
                   }$.}
    \STATE{Play
      $x_t =  \weakopt(S',\card{S'})$.
    }
    \STATE{Observe $y_t$ and receive payoff $f(x_t, y_t)$.}
    \ENDFOR
\end{algorithmic}
\caption{Oracle-Based Generalized FTPL with Integer-Weighted Oracle}
  \label{alg:ftpl-integral}
\end{algorithm}

\begin{lemma}[Integer-Weighted Offline Oracle to Offline Oracle]
\label{lemma:weak:to:approx}
Given any real-weighted dataset $S$,  any precision parameter $\epsilon > 0$, and an integer-weighted offline oracle $\weakopt$, let
\[\textstyle
  S'=
  \Set{
                    (w',y):\:w'=\floors{w\cdot 2\card{S}/\epsilon}\text{ and }(w,y)\in S
                   }
,
\quad
  \epsilon'=\card{S'}
.
\]
For all $x\in\X$, we have
\[
\sum_{(w,y) \in S} w f(\weakopt(S',\epsilon'), y)
\geq \sum_{(w,y) \in S} w f(x, y) - \epsilon.
\]
Furthermore, $\card{S'}=\card{S}$ and $\pseudo{S'}\le 2\bigParens{\pseudo{S}}^2/\epsilon$. Therefore, if $\weakopt\bigParens{S', \card{S'}}$ runs in time $\poly(\card{S'})$ or $\poly(\pseudo{S'})$,
then the above construction with $\epsilon=1/\sqrt{T}$ implements $\opt\bigParens{S, \frac{1}{\sqrt{T}}}$ that runs in time $\poly(\card{S},T)$ or $\poly(\pseudo{S},T)$, respectively.
\end{lemma}
\begin{proof}
Let $\delta = \epsilon/(2|S|)$. Thus, elements $(w,y)\in S$ are transformed
into $(\lfloor w/\delta\rfloor,y)\in S'$.
To show the approximate optimality,
we use the fact that $0 \leq a - \lfloor a \rfloor \leq 1$ for any $a \geq 0$, so we have $0 \leq w - \delta\lfloor w/\delta\rfloor \leq \delta$ and thus for any $x \in \X$,

\begin{align*}
\sum_{(w,y) \in S} w f(\weakopt(S',\epsilon'), y)
&\geq \delta \sum_{(w,y) \in S} \floors{w/\delta} f(\weakopt(S',\epsilon'), y)  \\
&= \delta \sum_{(w',y) \in S'} w' f(\weakopt(S',\epsilon'), y)  \tag{by construction} \\
&\geq \delta \sum_{(w',y) \in S'} w' f(x, y)-\delta\epsilon'  \tag{from the definition of \weakopt} \\
&\geq \sum_{(w,y) \in S} (w - \delta) f(x, y)-\delta\epsilon' \\
&\geq \sum_{(w,y) \in S} w f(x, y)  - \card{S}\delta-\card{S}\delta\\
&= \sum_{(w,y) \in S} w f(x, y)  - \epsilon.
\tag*{\qedhere}
\end{align*}
\end{proof}

As an immediate consequence of Lemma~\ref{lemma:weak:to:approx} and Corollaries~\ref{cor:oracle-ftpl} and~\ref{cor:oracle-ftpl:pseudo} we obtain the following:

\begin{theorem}
\label{thm:integral}
Assume that $\mGamma\in[0,1]^{\card{\X}\times N}$ is $\delta$-admissible.
Then Algorithm~\ref{alg:ftpl-integral} with distribution $D$ as defined in Theorem~\ref{thm:FTPL-U}
achieves
regret $\order(N\sqrt{T}/\delta)$.

Furthermore, if $\,\mGamma$ is implementable with real-weighted datasets with complexity $M$ and the integer-weighted
oracle $\weakopt(S',\card{S'})$ runs in time $\poly(\card{S'})$ then Algorithm~\ref{alg:ftpl-integral} runs in time
$\poly(N,M,T)$. If $\mGamma$ is implementable with real-weighted datasets with pseudo-complexity $W$ and the integer-weighted
oracle $\weakopt(S',\card{S'})$ runs in time $\poly(\pseudo{S'})$ then Algorithm~\ref{alg:ftpl-integral} runs in time
$\poly(N,W,T,1/\delta)$.
\end{theorem}

\subsection{A Note on Numerical Computations}

For the ease of exposition, we have assumed throughout the paper that all numerical computations take $\order(1)$ time.
However, to invoke a real-weighted oracle on a given weighted dataset requires to communicate the weights, which may, in principle, be of an unbounded description length.
This problem can be circumvented by discretizing real weights to an appropriate accuracy as described in Appendix~\ref{app:integral} and instantiated in Algorithm~\ref{alg:ftpl-integral}.
Specifically, during the run of Algorithm~\ref{alg:ftpl-integral} with $D$ as defined in Theorem~\ref{thm:FTPL-U} we generate integer-weighted datasets $S$ with the pseudo-complexity $\poly(N,W,T,1/\delta)$ for
a $\delta$-admissible matrix $\mGamma$ with $N$ columns implementable with pseudo-complexity $W$.
Therefore, each individual weight can be described by $\order(\log(NWT/\delta))$ bits. Similarly, when the function is in range $[0, R]$, as discussed in Section~\ref{app:pseudo},
we generate integer-weighted datasets $S$ with the pseudo-complexity $\poly(N,R,W,T,1/\delta)$ for
a $\delta$-admissible matrix $\mGamma$ with $N$ columns implementable with pseudo-complexity $W$.
In all of our applications, the pseudo-complexity of implementing $\mGamma$, i.e., $W$,  multiplied by the range of the utility function, i.e., $R$,  is polynomial in the complexity $M$ of implementing $\mGamma$ and the number of columns $N$. Furthermore, the admissibility constant $\delta$ satisfies $1/\delta=\poly(N,M)$. As a result, each weight requires only $\order(\log(NMT))$ bits, and
the oracle-based running time of Algorithm~\ref{alg:ftpl-integral} remains $\poly(N,M,T)$ regardless of whether we assume that the numerical computation and storage of reals are $\order(1)$ or polynomial in the size of the bit representation.

\subsection{Pseudo-Complexity of Applications} \label{app:pseudo-examples}

In this section, we provide the pseudo-complexity of implementing matrices that are used by the applications in Sections~\ref{sec:VCG}, \ref{sec:envy-free}, \ref{sec:level}, \ref{sec:knapsack} and~\ref{sec:sispas}. The results are summarized in Table~\ref{tab:pseudo}. In our derivations below, recall that $m$ denotes the discretization level used in these auction classes. To show that $\mGamma$ is implementable with pseudo-complexity $W$, we need to show that the datasets $S_j$ implementing its columns satisfy $\pseudo{S_j}\le W$, i.e., $|S_j|\leq W$ and $\sum_{(w,y)\in S_j} w \leq W$.

\renewcommand{\arraystretch}{1}
\begin{table*}[t]
\caption{\label{tab:pseudo} The pseudo-complexity  and complexity of the datasets that implement translation matrices for our (un-normalized) applications. In this table, $n$ refers to the number of bidders, $m$ refers to the bid's discretization level, and $k$ refers to the number of items.
The payoff range $[0,R]$ for $R> 1$ affects the true pseudo-complexity of querying the oracle by $R$. The payoff range of $[0,n]$ for VCG auction with bidder-specific reserves is stated for the general case where every bidder's valuation is in range $[0,1]$. For $s$-unit VCG auction with bidder-specific reserves the payoff range is $[0,s]$ for $s\leq n$.}
\centering
\resizebox{\columnwidth}{!}{%
\begin{tabular}{|c|c|c|c|c|c|c|c|}
\hline
Problem class & Matrix  & \makecell{Pseudo-\\complexity} & Complexity & \makecell{Number of\\columns} & \makecell{Admissi-\\bility $\delta$} &\makecell{Payoff\\range} & Section \\
\hline
VCG with bidder-specific reserves & $\mGamma^{\textsc{VCG}}$ &
$O(m^2)$ & $m$ & $n\lceil \log m \rceil$ & $1$ & $[0, n]$&
\ref{sec:VCG} \\
\hline
envy-free item pricing & $\mGamma^{\textsc{IP}}$ &
$O(m^2)$ & $m$ & $k\lceil \log m \rceil$ & $1$ & $[0, n]$ &
\ref{sec:envy-free} \\
\hline
level auctions (without repetitions) & $\mGamma^{\textsc{SL}}$ &
$1$ & $1$ & $nm$ & $1/m$ & $[0,1]$ &
\ref{sec:level} \\
\hline
level auctions (with repetitions) & $\mGamma^\barRL$ &
$1$& $1$ & $O(n^3 m^3)$ & $1/m$ & $[0,1]$ &
\ref{sec:level} \\
\hline
multi-unit welfare maximization & $\mGamma^{\textsc{MU}}$ &
$1$ & $1$ & $n$ & $1/(4n^2)$ & $[0,n]$ &
\ref{sec:knapsack} \\
\hline
SiSPAs&  $\mGamma^{\textsc{OB}}$ &
$m$ & $m$ & $k$ & $1/m$ &  $[0,k]$ &
\ref{sec:sispas} \\
\hline
\end{tabular}
}
\end{table*}

\paragraph{VCG with Bidder-Specific Reserves.}
We show that the pseudo-complexity of implementing $\mGamma^{\textsc{VCG}}$  is $O(m^2)$.

Recall that we used datasets $S_j$ for each column $j\leq n\lceil \log m \rceil$  that included the $m$ bid profiles in which only the bidder corresponding to column $j$ (call it bidder $i$) has  non-zero valuation,
  denoted as $\vec v_\ellOther\coloneqq(\ellOther/m)\vec e_i$ for $\ellOther\le m$. Specifically, we used $S_j = \{(w_\ellOther, \vec v_\ellOther)\}_{\ellOther\in [m]}$ with the weights
  defined recursively as follows:
  \[
 w_m = \max\BigBraces{0,\;\max_z\bigBracks{ m\bigParens{z_\bit - (z-1)_\bit}}
 },
\]
and for all $z = m, m-1,\dotsc, 2$,
\[
 w_{z-1} = \frac{1}{z-1} \left( \sum_{\ellOther = z}^m w_\ellOther  - m\bigParens{z_\bit - (z-1)_\bit} \right).
\]

We show that $\sum_{z=1}^m w_z \leq 2m(m-1) + m$. Note that by definition $w_m \leq m$. Moreover, by the definition of $w_{z-1}$, we have
\[   w_{z-1} = \frac{1}{z-1} \left( \sum_{\ellOther = z}^m w_\ellOther  - m\bigParens{z_\bit - (z-1)_\bit} \right) \leq  \frac{1}{z-1} \left( \sum_{\ellOther = z}^m w_\ellOther + m \right).
\]
Reformulating the above, we have
\[   (z-1) w_{z-1}
\leq \sum_{\ellOther = z}^m w_\ellOther + m \leq \sum_{\ellOther = z}^{m-1} w_\ellOther + 2m.
\]
Next, we sum over the above inequality for $z = 2, \dots, m$, obtaining
\begin{align*}
\sum_{z=2}^{m} (z-1) w_{z-1} &\leq  \sum_{z=2}^{m} \left( \sum_{\ellOther = z}^{m-1} w_\ellOther + 2m \right)\\
&= \sum_{z=2}^{m} \sum_{\ellOther = z}^{m-1} w_\ellOther + 2m (m-1)\\
&=  \sum_{h' = 2}^{m-1} (h' -1) w_{h'} + 2m (m-1)\\
&=  \sum_{h' = 3}^{m} (h' -2) w_{h'-1} + 2m (m-1),
\end{align*}
where the penultimate transition is by rearranging the summation and counting the number of times each $w_{h'}$ appears in the sum. Subtracting the two sides and changing parameter $z\gets z-1$, we get
\[
\sum_{z=1}^{m-1} w_{z} \leq   2m (m-1).
\]
Thus, using the fact that $w_m \leq m$, we obtain $\sum_{z=1}^m w_z \leq 2m(m-1) + m$. Since $\card{S_j}=m$, this means that $\pseudo{S_j}=\order(m^2)$.

\paragraph{Envy-free Item Pricing.}
 As we show in Appendix~\ref{app:impl_envy}, as far as the implementability of $\mGamma^{\textsc{IP}}$ is concerned, this setting is isomorphic to the case of VCG with bidder-specific prices. So, the pseudo-complexity of implementing $\mGamma^{\textsc{IP}}$ is also $O(m^2)$.

\paragraph{Level Auctions.}
Recall that the datasets used for implementing $\mGamma^{\textsc{SL}}$ are of the form of $S_{\vec v} = \{ (1, \vec v)\}$, for a suitable~$\vec v$, so
$\mGamma^{\textsc{SL}}$ is implementable with pseudo-complexity of $1$.

Similarly, recall that $\mGamma^\barRL$ is a matrix whose columns are indexed by $\vec v \in \{\vec v \mid \vec v\in \{0,  1/m, \dots,  m/m\}^n \\\text{ and } \|\vec v\|_0 \leq 3 \}$,
such that $\Gamma^\barRL_{\vec \theta, \vec v} = \rev( \vec \theta, \vec v)$.
That is, $\mGamma^\barRL$  is implemented by datasets $S_{\vec v} = \{(1, \vec v)\}$. Therefore $\mGamma^\barRL$ is implementable with pseudo-complexity of $1$.

\paragraph{Multi-unit Welfare Maximization.}
Recall that $\mGamma^{\textsc{MU}}$ is implemented by sets $S_j = \{(1, \vec v)\}$, for a suitable~$\vec v$, so $\mGamma^{\textsc{MU}}$ is implementable with pseudo-complexity of $1$.

\paragraph{Online Bidding in SiSPAs.}
Recall that $\mGamma^{\textsc{OB}}$ is implemented by datasets $S_j = \{ (w_\ell, \vec p_\ell) \}_{\ell\in\set{0,\dotsc,m-1}}$, where
\[
 w_{\ell} = \begin{cases}
\frac1m\cdot\frac{1}{v(\vec e_j) - \ell/m}   &\text{if } \ell/m < v(\vec e_j)  \\
0 & \text{otherwise.}
\end{cases}
\]
Since $v(\vec e_j) \in \{0, 1/m, \dots, m/m\}$,  we have that $w_{\ell} \leq 1$. Moreover, $|S_j|=m$, so $\mGamma$ is implementable with pseudo-complexity $m$.

\end{document}